\def\x{\mathbf{x}}
\def\y{\mathbf{y}}
\def\w{\mathbf{w}}
\def\X{\mathbf{X}}
\newcommand{\capt}{{\textrm{cap}}}
\newcommand{\fix}{{\textrm{\rm fix}}}
\newcommand{\splita}{{\textrm{\rm split}}}
\newcommand{\lf}{{\textrm{left}}}
\newcommand{\ri}{{\textrm{right}}}
\newcommand{\modify}{{\textrm{mod}}}
\newtheorem{theorem}{Theorem}[section]
\newtheorem{definition}{Definition}[section]
\newcommand{\ourmethod}{{\textrm{OSRT}}}
\DeclareMathOperator*{\argmin}{\arg\!\min}
\title{Optimal Sparse Regression Trees
}
\author{
    Rui Zhang\textsuperscript{\rm 1}\equalcontrib,
    Rui Xin\textsuperscript{\rm 1}\equalcontrib,
    Margo Seltzer\textsuperscript{\rm 2},
    Cynthia Rudin\textsuperscript{\rm 1}
}
\begin{document}
\maketitle

\begin{abstract}
Regression trees are one of the oldest forms of AI models, and their predictions can be made without a calculator, which makes them broadly useful, particularly for high-stakes applications. Within the large literature on regression trees, there has been little effort towards full provable optimization, mainly due to the computational hardness of the problem.  This work proposes a dynamic-programming-with-bounds approach to the construction of provably-optimal sparse regression trees. We leverage a novel lower bound based on an optimal solution to the k-Means clustering algorithm on one dimensional data. We are often able to find optimal sparse trees in seconds, even for challenging datasets that involve large numbers of samples and highly-correlated features. 

\end{abstract}

\section{Introduction}

\textit{Regression trees} are one of the oldest and most popular forms of machine learning model, dating back to the 1963 AID algorithm of \citet{MorganSo1963}. Since then, there has been a vast amount of work on regression trees, the overwhelming majority of which involves greedy tree induction and greedy pruning \cite{Breiman84,Quinlan93,payne1977algorithm,loh2002regression}. In these approaches, trees are grown from the top down, with greedy splitting at each branch node, and greedy pruning afterwards. These techniques are easy and fast,
but their trees have no notion of global optimality. Greedily-grown trees can be much larger than necessary, sacrificing interpretability, and their performance suffers when compared to other machine learning approaches. Thus, questions remain -- is it possible to create optimal  regression trees? Would they be competitive with other machine learning algorithms if they were fully optimized? Certainly there would be many uses for sparse interpretable regression trees if we could create them with accuracy comparable to that of other machine learning approaches.

While the quest for fully-optimal decision trees began in the mid-90's with the work of  \citet{Bennett96optimaldecision}, fully optimal decision tree learning was rarely attempted over the last three decades, owing to the computational hardness of the problem. Works that did attempt it \cite{dobkininduction,FarhangfarGZ08,narodytska2018learning,janota2020sat,shati2021sat, hu2020learning,avellaneda2020efficient} had strong constraints, such as shallow depth or perfect classification accuracy.
For classification (rather than regression), scientists have had recent success in producing fully optimal trees \cite{MenickellyGKS18,blanquero2020sparsity,HuRuSe2019,verwer2019learning,AngelinoLaAlSeRu17-kdd,lin2020generalized,McTavishZhongEtAl2022,FarhangfarGZ08,nijssen2007mining, NijssenFromont2010,aghaei2020learning,verhaeghe2019,nijssen2020,NijssenFromont2010, demirovic2022murtree} using mathematical programming or dynamic programming. However, building sparse optimal classification trees is a much easier problem, since the 0-1 loss has natural discrete lower bounds, and binary integer programming can be used; this is not true of regression, which uses (real-valued) mean squared error as its loss function.

Let us discuss the few works that do address challenges resembling optimal regression trees. The works of  \citet{blanquero2022sparse} and  \citet{BertsimasEtAl21201} do not construct traditional sparse trees with constant predictions in the leaves; their leaf nodes contain linear or polynomial classifiers, thus the formula for producing predictions is quite complex. The former \citep{blanquero2022sparse} uses $\ell_\infty + \ell_1$ regularization for the linear models within the 
nodes and the latter \cite{BertsimasEtAl21201} uses $\ell_2$ regularization for polynomial models in the leaves. Neither of these regularize the number of leaves. The evtree algorithm \citep{grubinger2014evtree} claims to construct globally optimal trees, but since it is purely an evolutionary method (no bounds are used to reduce the search space), there is no guarantee of reaching optimality, and one never knows whether optimality has already been reached.
\citet{Dunn2018} and \citet{verwer2017learning} provide mathematical programming formulas for optimal regression trees, but no open source code is available; regardless, mathematical programming solvers are generally slow. \citet{InterpretableAI} provides proprietary software that requires a license, but it is not possible to ascertain whether it uses local search or mathematical programming; we suspect it uses local search heuristics, despite the claim of optimal solutions.
In other words, as far as we know, there is no other prior peer-reviewed work that directly produces \textit{sparse, provably-optimal regression trees} with publicly available code.

Our goal is to design optimal sparse regression trees in the classical sense, with a small number of leaves, a single condition at each split, and a constant prediction in each leaf. This makes the predictions easy to understand and compute, even for people who cannot understand equations. Given a trained tree, one can print it on an index card and compute a prediction without adding or multiplying any numbers, which makes these models easy to troubleshoot and use -- even in high-stakes settings. An example tree for the \textbf{seoul bike} dataset \cite{ve2020rule, sathishkumar2020using, Dua:2019} constructed by our method is shown in Figure \ref{fig:example-tree}.

\newcommand{\eq}{=}

\begin{figure}
    \centering
    \begin{forest} [ {$is Functional = Yes$}  [ {$Seasons=Winter$} ,edge label={node[midway,left,font=\scriptsize]{True}}  [ $\mathbf{225.52}$  ]  [ {$-17.9 < Temperature(^\circ C) \leq -3.5$}   [ $\mathbf{521.49}$  ]  [ {$11:30< Hour \leq 17:15$}   [ $\mathbf{1575.10}$  ]   [ $\mathbf{836.90}$  ]  ] ] ]  [ $\mathbf{0.00}$ ,edge label={node[midway,right,font=\scriptsize]{False}} ]  ] \end{forest}

    \caption{Optimal regression tree for \textit{seoul bike} dataset with $\lambda=0.05, \textit{max depth}=5$. This dataset predicts the number of bikes rented in an hour. It is binarized by splitting each feature into four categories. }
    \label{fig:example-tree}
\end{figure}
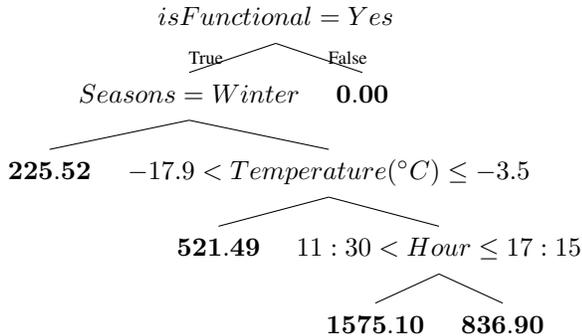


Our formulation is a dynamic-programming-with-bounds approach, where the search space is either reduced or searched methodically. Such approaches have been highly successful for  classification trees \citep{AngelinoLaAlSeRu17-kdd,lin2020generalized,nijssen2020} but have not been previously used for regression trees. An important novel element of our formulation is a lower bound that we call the ``k-Means equivalent points lower bound.'' To reduce the search space, we need as tight a bound as possible on the objective. Our bound makes use of the observation that any high-quality decision tree of $C$ leaves will perform as bad or worse than the performance of fully-optimal C-Means clustering on the labels alone (without any features). We discuss this in Section \ref{sec:bounds}.

Our main results are: (1) The first algorithm with publicly available code for optimal sparse regression trees in the classical sense, with a \textit{proof of optimality}. We call this algorithm Optimal Sparse Regression Trees (\ourmethod{}). (2) A substantial speedup over evtree, owing to our analytical bounds that reduce the search space. Evtree globally optimizes models, but does not provide a proof of optimality as \ourmethod{} does.

\section{Notation and Objective}
We denote the training dataset $(\X, \y)$ as $\{(\x_i, y_i)\}^N_{i=1} $, where $\x_i \in \{0,1\}^M$ is a binary feature vector and $y_i \in \mathbb{R} $ is a target variable. (Real-valued features in the raw dataset can be transformed into binary features in many different ways, e.g., splitting the domain of the feature into equal-sized buckets, splitting between every two realized values of the variable in the training set, using splits from a reference model as in \citet{McTavishZhongEtAl2022}; we use the first technique.) 

We denote $\mathcal{L}(t,\X,\y) := \frac{1}{N}\sum_{i=1}^N \left(y_i -\hat{y}_i\right)^2$ as the loss of tree $t$ on the training dataset, where $\hat{y}_i$ is the prediction of $\x_i$ by tree $t$, i.e., we use mean squared error (MSE) as the loss function. We define the objective function of tree $t$, $R(t,\X,\y)$ as a combination of \textbf{tree loss} and \textbf{penalty on complexity}:
\[\mathcal{L}(t,\X,\y) + \lambda \cdot \text{complexity}(t)\] where the complexity penalty is  $H_t$, the number of leaves in tree $t$:
\begin{equation}\label{eq:obj1}
     R(t,\X,\y) := \mathcal{L}(t,\X,\y) + \lambda H_t.
\end{equation}
Computationally, it is easier when a depth constraint is added:
\begin{equation}\label{eq:obj2}
     \mathcal{L}(t,\X,\y) + \lambda \cdot \text{complexity}(t), \text{ s.t. } \textrm{depth}(t) \leq d.
\end{equation}
Adding a depth constraint dramatically reduces the search space, but it can lead to suboptimal values of the objective if the depth constraint is smaller than the depth of the optimal solution. 
Unlike all previous approaches, our algorithm can find provably-optimal trees that globally minimize Equation \eqref{eq:obj1} without a depth constraint.

\section{Bounds}\label{sec:bounds}

Following \citet{HuRuSe2019, lin2020generalized}, we represent a tree as a set of leaves. Trees with identical leaves, regardless of different internal branching nodes, are considered equivalent. This representation allows us to save memory and avoid duplicate computation during tree construction. 

Our algorithm, like that of \citet{lin2020generalized} for classification, is a dynamic-programming-with-bounds algorithm. This algorithm searches the whole space of trees systematically from smaller to larger trees. If the algorithm determines (through the use of bounds) that the current partial tree it is constructing can never be extended to form an optimal full tree, it stops exploring that part of the search space. Thus, the tighter the bounds, the more the algorithm reduces the search space and the more quickly it converges to the optimal solution.  Thus, we present a series of tight bounds that reduce computation by reducing the search space. 

We start with notation. A tree $t$ is represented as a set of $H_t$ distinct leaves:
$t = \{l_1, l_2, \dots, l_{H_t}\}.$
It can also be written as:
\[t = (t_{\textrm{fix}}, \delta_{\textrm{fix}}, t_{\textrm{split}}, \delta_{\textrm{split}}, K, H_t)\]
where $t_{\textrm{fix}}=\{l_1, l_2, \dots, l_K\}$ are a set of $K$ \textbf{fixed leaves} that are not allowed to be further split in this part of the search space, $\delta_{\fix} = \{\hat{y}_{l_1},\hat{y}_{l_2}, \dots, \hat{y}_{l_K}\} \in \mathbb{R}^K $ are predicted targets for the fixed leaves, $t_{\splita} = \{l_{K+1}, l_{K+2}, \dots, l_{H_t}\}$ are $H_t - K$ \textbf{splitting leaves} that can be further split in this part of the search space, and their predicted targets are $\delta_{\textrm{split}} = \{\hat{y}_{l_{K+1}},\hat{y}_{l_{K+2}}, \dots, \hat{y}_{H_t}\} \in \mathbb{R}^{H_{t}-K}$. 

We generate new trees by splitting different subsets of splitting leaves in tree $t$. We define $t' = (t'_{\fix}, \delta'_{\fix}, t'_{\splita}, \delta'_{\splita}, K', H_{t'})$ as a \textbf{child tree} of $t$ \textit{if and only if }$t'_{\fix}$ is a superset of $t_{\fix}$, and $t'_{\splita}$ is generated through splitting a subset of $t_{\splita}$. We denote $\sigma(t)$ as the set of all child trees of $t$.

The following bounds start out analogous to those of \citet{lin2020generalized} for classification and diverge entirely when we get to the new k-Means Lower Bound.

\subsection{Lower Bounds}
The loss of a tree has contributions from its two parts: fixed leaves and splitting leaves. Since the fixed leaves cannot be further split in this part of the search space, their contribution provides a lower bound for tree $t$ and all of its child trees. Define the objective lower bound of tree $t$ as 
\[R(t,\X,\y) \geq \mathcal{L}(t_{\fix},\X,\y) + \lambda H_t, \]
where $\mathcal{L}(t_{\fix},\X,\y)$ is the sum of losses for fixed leaves:
\begin{equation}\label{eq:lb}
    \mathcal{L}(t_{\fix},\X,\y) = \frac{1}{N} {\sum_{i=1}^N }\left(y_i -\hat{y}_i\right)^2 \cdot \mathbf{1}_{\capt(t_{\fix}, \x_i)}
\end{equation}
$\mathbf{1}_{\capt(t_{\fix}, \x_i)}$ is 1 when one of the leaves in $t_{\fix}$ captures $\x_i$, 0 otherwise. ($t_{\fix}$ captures $\x_i$ when $\x_i$ falls into one of the fixed leaves of $t$.) If splitting leaves have 0 loss, then the tree's loss is equal to the lower bound.

We denote the current best objective we have seen so far as $R^c$. If the objective lower bound of $t$ is worse than $R^c$, i.e., $\mathcal{L}(t_{\fix},\X,\y) + \lambda H_t > R^c$, then $t$ cannot be an optimal tree, nor can any of its children, and the search space can be pruned. To show this, we need the following bound, stating that the child trees of $t$ all obey the same lower bound from the fixed leaves. \textit{Note that all proofs are in the appendix.}
\begin{theorem}\label{thm:hierarchical_lb}
(Hierarchical Objective Lower Bound). Any tree $t' = (t'_{\fix}, \delta'_{\fix}, t'_{\splita}, \delta'_{\splita}, K', H_{t'}) \in \sigma(t)$ in the child tree set of tree $t = (t_{\fix}, \delta_{\fix}, t_{\splita}, \delta_{\splita}, K, H_t)$ obeys: \[R(t',\X,\y) \geq \mathcal{L}(t_{\fix},\X,\y) + \lambda H_t. \]
\end{theorem}
That is, the objective lower bound of the parent tree holds for all its child trees. This bound ensures that we do not further explore child trees if the parent tree can be pruned via the lower bound.

The next bound removes all of a tree's child trees from the search space, even if the tree itself could not be eliminated by the previous bound. 

\begin{theorem}\label{thm:look_ahead}
(Objective Lower Bound with One-step Lookahead). Let $t = (t_{\fix}, \delta_{\fix}, t_{\splita}, \delta_{\splita}, K, H_t)$ be a tree with $H_t$ leaves. If $\mathcal{L}(t_{\fix},\X,\y) + \lambda H_t + \lambda > R^c$, even if its objective lower bound obeys  $\mathcal{L}(t_{\fix},\X,\y) + \lambda H_t \leq R^c$, then for any child tree $t' \in \sigma(t)$, $R(t', \X, \y) > R^c$.
\end{theorem}
That is, even if a parent tree cannot be pruned via its objective lower bound, if $\mathcal{L}(t_{\fix},\X,\y) + \lambda H_t + \lambda > R^c$, all of its child trees are sub-optimal and can be pruned (and never explored).

\subsection{Equivalent Points}
Before making the lower bound of the objective tighter, let us introduce equivalent points. We define \textbf{equivalent points} as samples with \textit{identical features} but possibly different target values. It is impossible to partition these samples into different leaves in any tree; a leaf that captures a set of equivalent points that have different targets can \textbf{never} achieve zero loss. Our bound exploits this fact.

Let $u$ be a set of equivalent points where samples have exactly the same feature vector $\x$, such that $\forall j_1,j_2,...j_{|u|}\in u$:
\[\x_{j_1} = \x_{j_2} = \dots = \x_{j_{|u|}}.\]
We define the \textbf{equivalence loss} $\mathcal{E}_u$ as the sum of squares error for set $u$ when the estimate of the leaf is the best possible, namely the mean of targets for points in $u$. Define $\bar{y}_u=\frac{1}{|u|}\sum_{(\x_i, y_i) \in u} y_i$:
\begin{equation}
    \mathcal{E}_u = \frac{1}{N} \sum_{(\x_i, y_i) \in u}\left( y_i - \bar{y}_u \right)^2.
\end{equation}
\begin{theorem}\label{thm:equiv_lb}
(Equivalent Points Lower Bound). Let $t = (t_{\fix}, \delta_{\fix}, t_{\splita}, \delta_{\splita}, K, H_t)$ be a tree with $K$ fixed leaves and $H_t - K$ splitting leaves. For any child tree $t'= (t'_{\fix}, \delta'_{\fix}, t'_{\splita}, \delta'_{\splita}, K', H_{t'}) \in \sigma(t)$:
\begin{equation}\label{eq:equiv_lb}
    R(t', \X, \y) \geq  \mathcal{L}(t_{\fix},\X,\y) + \lambda H_{t} + \sum_{u\in U}  \mathcal{E}_u \cdot \mathbf{1}_{\capt(t_{\splita}, u)},
\end{equation}
where $U$ is the set of equivalent points sets in training dataset $(\X, \y)$ and $\mathbf{1}_{\capt(t_{\splita}, u)}$ is 1 when $t_{\splita}$ captures set $u$, 0 otherwise. \end{theorem}
\noindent\textit{Combining with the idea of Theorem \ref{thm:look_ahead}, we have}:
\begin{equation}\label{eq:equiv_lb+lookahead}
    R(t', \X, \y) \geq  \mathcal{L}(t_{\fix},\X,\y) + \lambda H_{t} + \lambda + \sum_{u\in U}  \mathcal{E}_u \cdot \mathbf{1}_{\capt(t_{\splita}, u)}.
\end{equation}

The bound we introduce next, one of the main novel elements of the paper, is much tighter than the Equivalent Points Lower Bound.

\subsection{k-Means Lower Bound}
Let us consider the points within each leaf of a regression tree. The smallest possible losses within a leaf are achieved when the label values within the leaf are all similar to each other. If we know we will construct a tree with $C$ leaves and we could rearrange the points into any of the leaves, how would we arrange them to minimize loss?
The best loss we could possibly achieve would come from grouping points with the most similar targets together in the same leaf. This procedure is equivalent to computing an  optimal clustering of the targets (in 1 dimension) that minimizes
the sum of squared errors between each point and the position of its cluster center (the mean of the cluster). The solution to this clustering problem gives the lowest loss we can possibly achieve for any regression tree with $C$ leaves. We can use this as a  lower bound on the loss for $t_\splita$ by setting $C$ equal to the $H_t - K$ number of unsplittable leaves.
There exists a deterministic algorithm that takes linear time for computing the optimal k-Means loss on one dimensional data, which takes advantage the fact that the number line is totally ordered \cite{song2020efficient}.  

\begin{definition}\label{def:kmeans_1d}
(k-Means Problem for 1D targets)
Given a set of $N'$ 1D points $\y'$ and a number of clusters $C$, the goal is to assign points into $C$ clusters so that the sum of squared Euclidean distances between each point and its cluster mean is minimized. 
Define $\text{k-Means}(C, \y')$ to be the optimal objective of the k-Means algorithm for clustering 1D points $\y'$ of size $N'$ into $C$ clusters ($C\geq 1$):
\begin{equation}
    \text{k-Means}(C,\y') := \min_{z,A} \sum_{i=1}^{N'} (y'_i - z_{A(y'_i)})^2.
\end{equation}
$A(y'_i)$ is a function that specifies the cluster assignment of $y'_i$ among ${c_1, c_2, \dots, c_C}$, and $z_c$ is the centroid of cluster $c$, which is the mean of the points assigned to that cluster.
\begin{equation}\label{eq:cluster_mean}
    z_{c} := \frac{\sum_{A(y'_i) = c} y'_i}{\sum_{A(y'_i) = c} \mathbf{1}}.
\end{equation}
\end{definition}
We note here that for an assignment, $A$, of points to a tree's $C$ leaves, choosing the mean $z_c$ as the predicted label in each leaf $c$ yields the following for the k-Means objective, which is optimized over $z$ for a fixed $A$:
\begin{equation}\label{eq:kmeanspartial}
    \textit{k-Means-obj}(C,\y',A) := \min_{z} \sum_{i=1}^{N'} (y'_i - z_{A(y'_i)})^2.
\end{equation}
That is, minimizing the regression loss (sum of squares to the mean target in each leaf) also yields the k-Means' choice of cluster center as the mean of the targets for points belonging to a leaf. Clearly $\text{k-Means-obj}(C,\y',A)\geq \text{k-Means}(C,\y')$ since the latter is minimized over the assignment of points to clusters without regard to the tree structure at all. This logic is used in our lower bound.

\begin{theorem}\label{thm:kmeans_lb}
(k-Means Lower Bound).
Consider tree $t = (t_{\fix}, \delta_{\fix}, t_{\splita}, \delta_{\splita}, K, H_t)$.
and any child tree $t' =(t'_{\fix}, \delta'_{\fix}, t'_{\splita}, \delta'_{\splita}, K', H_{t'}) \in 
\sigma(t)$. Let $(\X_{t_{\splita}}, \y_{t_{\splita}})$ be samples captured by the splitting leaves $t_{\splita}$. Then,
\begin{align*}\label{eq:kmeans_lb}
    R(t', \X, \y) \geq & \mathcal{L}(t_{\fix},\X,\y) + \lambda K\\
    & + \min_C \left(\frac{1}{N}\text{k-Means}(C,\y_{t_{\splita}}) + \lambda C\right).
\end{align*}

\end{theorem}

\subsection{k-Means Equivalent Points Lower Bound}
We can make the bound from the last section even tighter. In fact, in the k-Means lower bound above, we ignored information inherent to the regression tree problem, because we ignored all of the features $\X$. We can achieve a tighter bound if we leverage our knowledge of $\X$ to again consider equivalent points. Specifically, all points with the same features must be assigned to the same leaf. We first present the definition of a modified k-Means problem and then state our theorem.

\begin{definition}\label{def:cons_kmeans}
(Constrained k-Means Problem for 1D targets)
Given a set of $N'$ 1D target points $\y'$ with feature vector $\X'$ and number of clusters $C$, the goal is to assign points into $C$ clusters so that the sum of squared Euclidean distances between each point and its cluster mean is minimized, under the constraint that all points with the same feature vector $\x'$ must be assigned to one cluster.
\begin{eqnarray}\nonumber
\lefteqn{\text{Constrained\_k-Means}(C,\X',\y')} \\&=& \label{eq:cons_kmeans}
\min_{z,A} \sum_{i=1}^{N'} (y'_i - z_{A(y'_i)})^2\\\nonumber 
&\text{ s.t. }& \textrm{if  } \x_i = \x_{i'} \textrm{, then  } A(y'_i) = A(y'_{i'}). 
\end{eqnarray}


\end{definition}

Adding this constraint makes the k-Means Lower Bound tighter. 


\begin{theorem}\label{thm:kmeans_equiv_lb}
(k-Means Equivalent Points Lower Bound).
Consider tree $t = (t_{\fix}, \delta_{\fix}, t_{\splita}, \delta_{\splita}, K, H_t)$.
and any child tree $t' =(t'_{\fix}, \delta'_{\fix}, t'_{\splita}, \delta'_{\splita}, K', H_{t'}) \in 
\sigma(t)$. Let $(\X_{t_{\splita}}, \y_{t_{\splita}})$ be samples captured by the splitting leaves $t_{\splita}$. Then, 
\begin{eqnarray}\label{eq:kmeans_equiv_lb}
\nonumber
\lefteqn{R(t', \X, y) \geq  \mathcal{L}(t_{\fix},\X,\y) + \lambda K +}\\&\hspace*{-15pt}
\min_C \left(\frac{1}{N}\text{Constrained\_k-Means}(C,\X_{\splita},\y_{t_{\splita}}) + \lambda C\right)&
\end{eqnarray}
where Constrained\_k-Means is defined in Equation \ref{eq:cons_kmeans}.
\end{theorem}

\subsection{Computing k-Means Equivalent Points Bound}

We now define a weighted version of the k-Means problem, where each sample point is associated with a weight. We derive these weights later as sizes of the equivalent sets.

    
\begin{definition}\label{def:weighted_kmeans}
(Weighted k-Means Problem)
Given a set of $N'$ 1D points $\y'$ with weights $\w \in \mathbb{R}^{N'} $ and number of clusters $C$, the goal is to assign points into $C$ clusters so that the weighted sum of squared Euclidean distances between each point and its cluster centroid is minimized. 
Define $\text{Weighted\_k-Means}(C, \y', \w)$ as the optimal objective of the k-Means algorithm clustering 1D points $\y'$ of size $N'$ into $C$ clusters ($C\geq 1$):
\begin{eqnarray}
    \nonumber\text{Weighted\_k-Means}(C,\y',\w )\\ = \min_{z,A} \sum_{i=1}^{N'} w_i\cdot(y'_i - z_{A(y'_i)})^2.
\end{eqnarray}
$A(y'_i)$ is a function that specifies the cluster assignment of $y'_i$ among ${c_1, c_2, \dots, c_C}$, and $z_c$ is the centroid of cluster $c$, which is the weighted mean of the points assigned to that cluster.
The weighted mean for cluster $c_j$ is:\begin{equation}\label{eq:weighted_cluster_mean}
    z_{c_j} = \frac{\underset{A(y'_i) = c_j}{\sum} w_i \cdot y'_i}{\underset{A(y'_i) = c_j}{\sum} w_i}
\end{equation}
which is similar to the one defined by \citet{song2020efficient}.
\end{definition}

\citet{song2020efficient} present an efficient $O(kN)$ solution to this weighted k-Means problem, where $k$ is the number of clusters and $N$ the number of data samples. We leverage this algorithm for the k-Means Equivalent Points Lower Bound (Theorem \ref{thm:kmeans_equiv_lb}). In the following theorem, we show that solving this weighted k-Means problem is equivalent to solving a constrained k-Means problem on a modified dataset.

\begin{theorem}\label{thm:cons_weighted_kmeans}
(Constrained k-Means with Equivalent Points is equivalent to weighted k-Means)
Recall in Definition \ref{def:cons_kmeans},
we have $N'$ 1D target points $\y'$ with features $\X'$and number of clusters $C$. We also have a constraint that all points in any equivalent set $u$  must be assigned to the same leaf.  
Define a modified dataset $(\X_{\modify}, \y_{\modify}, \w_{\modify})$, where all points of equivalent set $u$ in the original dataset $(\X',\y')$ are represented by a single point $(\x_u, y_u, w_u)$, where $\x_u$ is the same as the feature vector of equivalent set $u$,
\begin{equation}
    y_u = \frac{1}{|u|}\sum_{(\x'_i, y'_i) \in u} y'_i,
\end{equation} and the weight is the size of the equivalent set $u$\begin{equation}
    w_u = |u|.
\end{equation}
An optimal clustering of the modified dataset will directly provide an optimal clustering of the original dataset with the equivalent points constraint from Equation \ref{eq:cons_kmeans}. (All points from the original dataset contributing to a weighted point in the modified dataset will be assigned to its cluster.)
\end{theorem}
That is, solving the Weighted k-Means problem produces the same solution(s) as solving the Constrained k-Means problem.
Thus, solving the weighted k-Means problem on the modified dataset provides the same result as solving the constrained k-Means on the original dataset.

In Equation \ref{eq:kmeans_equiv_lb}, we observe that computing the k-Means Equivalent Points Lower bound requires that we find the minimum of Constrained\_k-Means across all possible $C$. One can easily see that it is sufficient to iterate $C$ from $1$ to $|\y_{t_{\splita}}|$, where every data point is in its own cluster. However, this would be costly when dealing with large datasets. 
The following theorem, as proved in \citet{aggarwal1994finding}, shows that the loss improved from adding more clusters decreases as the number of clusters increases. It means we do not need to generate k-Means solutions for all $C$ up to the size of the subproblem, \textit{we can stop as soon as the objective improvement from adding new clusters becomes less than the regularization $\lambda$}.
\begin{theorem}\label{thm:kmeans_k_convex}
\citep[Convexity of Weighted\_k-Means in number of clusters, from][]{aggarwal1994finding}
Recall $\text{Weighted\_k-Means}(C,\y',\w )$ from Definition \ref{def:weighted_kmeans} for number of clusters $C$, 1D points $\y'$, and weights $\w$. We have 
\begin{eqnarray}
\nonumber
&&\text{Weighted\_k-Means}(C-1,\y',\w) \\
\nonumber
&+&\text{Weighted\_k-Means}(C+1,\y',\w) \\
 &\geq&2\times \text{Weighted\_k-Means}(C,\y',\w). 
 \label{eq:kmeans_k_convex}
\end{eqnarray}
\end{theorem}

Other bounds that help reduce the search space (e.g. Leaf Bounds, Splitting Bounds, Permutation Bound, Subset Bound) can be found in Appendix B. 
\section{Algorithm} \label{sec:algorithm}

We implemented \ourmethod{} based on the GOSDT \cite{lin2020generalized} framework, which uses a \textit{dynamic-programming-with-bounds} formulation. Each \textit{subproblem} in this formulation is identified by a support set $s = \{s_1, s_2, \ldots, s_{N}\}$, where $s_i$ is a boolean value indicating whether point $i$ is in the support set $s$. Each leaf and branching node corresponds to a subproblem, recording which 
samples traverse through that node (or leaf). GOSDT records and updates lower and upper bounds of the objective for each subproblem and stores them in a \textit{dependency graph}. The dependency graph summarizes the relationship among subproblems. In dynamic programming formulations, finding  tight bounds is crucial in reducing the runtime of the algorithm, because 
that is the key to eliminating large portions of search space. Our k-Means-based bounds for regression are tight and substantially reduce time-to-optimality, as we show in Section \ref{sec:valueof_kmeans} and Appendix J.1.
Like GOSDT, our method finds the optimal trees when the lower and upper bounds of the objective converge. Algorithm \ref{alg:lowerbound} below is a subroutine of OSRT.


\textbf{Compute\_Lower\_Bound (Algorithm \ref{alg:lowerbound})}:
This algorithm implements the k-Means Equivalent Points Lower Bound as defined in Theorem \ref{thm:kmeans_equiv_lb}. We leveraged a k-Means solver from \citet{song2020efficient}, which is a dynamic programming formulation that fills in a $C$ by $N$ matrix, where $C$ represents the number of clusters and $N$ corresponds to the number of samples.
We do not assume a maximum value for $C$ and instead grow the table one row at a time,
using the \textit{fill\_kmeans\_dp} function from their implementation. Each point $(a, b)$ in the table represents the optimal k-Means loss using $a$ clusters and the first $b$ datapoints.

\textbf{Line 1-3}: Compute equivalent target set by grouping equivalent points together, and gather all of their labels.
\textbf{Lines 4-5}: Compute weight $\w$ and value $\textbf{v}$ that defines the k-Means problem.
\textbf{Lines 6-8}: Initialize current loss, \textit{loss}, previous loss, \textit{loss}$'$, number of clusters used, \textit{nClusters}, and dynamic programming table, \textit{dp\_table}. 
\textbf{Lines 9-17}: Solve weighted k-Means problem by adding clusters one at a time.  
\textbf{Line 11}: Retrieve loss using
\textit{nClusters} clusters from the last entry of the last filled row of dynamic programming table.
\textbf{Lines 12-14}: Terminate algorithm if we can no longer benefit from adding more clusters as the reduction of loss by adding one cluster is monotonically decreasing. See Theorem \ref{thm:kmeans_k_convex}.
\textbf{Line 18}: Compute constant correction term, \textit{correction}, that restores weighted k-Means to constrained k-Means problem (see Theorem \ref{thm:cons_weighted_kmeans}).

\definecolor{comment}{rgb}{0.1,0.1,0.1}
\def\comment#1{\textcolor{comment}{\textit{// #1 }}}

\setlength{\textfloatsep}{0pt}
\begin{algorithm}
\caption{compute\_lower\_bound$(\textit{dataset}, \textit{sub}, \lambda) \\ ~\rightarrow~ \textit{lower\_bound}$\\
\comment{For a subproblem \textit{sub} and regularization $\lambda$, compute its Equivalent k-Means Lower Bound }}\label{alg:lowerbound}
\label{alg:lowerbound2}
\begin{tabbing}
xxx \= xx \= xx \= xx \= xx \= xx \kill
1: Let $U$ = the set of unique samples $\x_i ~ \in \textit{sub}$ \\
\comment{For each unique sample in $U$, create a set of all targets $y$} \\
\comment{corresponding to copies of that sample in \textit{sub}} \\
\comment{(equivalent point sets)}\\
2: $E = \emptyset$ \\
3: $\forall \x_i \in U, E_{\x_i}.\textit{append}(\{y_j\}~|~\forall \x_j \in \textit{sub},  y_j~\textit{if}~\x_i = \x_j)$ \\
\comment{For each unique sample in $U$, compute the number of } \\
\comment{identical samples to it (producing the vector $\w$) and the  } \\
\comment{average of all targets (producing the vector $\textbf{v}$)} \\
4: $\w \leftarrow \{|{E_\x}| ~|~ E_\x \in E\}$ where $E_\x \subset \y$ is a set of \\targets for one unique $\x \in U$. \\
5: $\textbf{v} \leftarrow \{\overline{E_\x} ~|~ E_\x \in E\}$ where  $\overline{E_\x}$ denotes \textit{average} of $E_\x$ \\
6: $\textit{loss}, \textit{loss}' \leftarrow \textrm{inf}$ \\
7: $\textit{nClusters} \leftarrow 1$  \\
\comment{We initialize the dynamic programming table with}\\
\comment{no rows, but one column for each element of U}\\
8: $\textit{dp\_table} \leftarrow [ ][|E|]$ \\
9: \textbf{while} \textit{true} \textbf{do} \\
\>\>\comment{Fill in the $(\textit{nClusters}-1)^{th}$ row of dp\_table}\\
10:\>\> $\textit{dp\_table} \leftarrow \textit{fill\_kmeans\_dp}(\textit{nClusters}-1,w, v,$\\
\>\>\>$\textit{dp\_table})$ \\
11:\>\> $\textit{loss} \leftarrow \textit{dp\_table}[(\textit{nClusters}-1, |E| - 1)]$ \\
12:\>\> \textbf{if} $\textit{loss}' - \textit{loss} \leq \lambda$ \textbf{then} \\
\>\> \comment{Adding this cluster does not reduce loss enough } \\
\>\> \comment{to justify addition of another cluster}\\
13:\>\>\> \textbf{break} \\
14:\>\> \textbf{end} \\
15: \>\>  $\textit{nClusters} \leftarrow \textit{nClusters} + 1$ \\ 
16: \>\> $\textit{loss}'\leftarrow \textit{loss}$ \\
17: \textbf{end} \\
\comment{Correct from weighted k-Means to constrained k-Means}\\
18: $\textit{correction} \leftarrow \sum_{\x_j \in \textit{sub}} y_j^2 - \sum_{\x_i \in U}w_iv_i^2$ \\
19: \textbf{return} $\textit{loss}' + \lambda\times \textit{nClusters} + \textit{correction}$
\end{tabbing}
\end{algorithm}
\setlength{\textfloatsep}{0pt}

\section{Comparison of Regression Tree Optimization Methods}
\textit{Unlike other methods, OSRT can optimize regression trees without a hard depth constraint and support mean absolute error (L1 loss).} Table \ref{tb:summaryofmethods} summarizes the comparison of different regression tree optimization methods. Blue cells are comparative advantages, and red cells are comparative disadvantages.
\renewcommand\tabularxcolumn[1]{m{#1}}
\begin{table*}[ht]
\small
\centering
\begin{tabularx}{\linewidth}{|X|c|c|c|c|c|c|c|}
\hline
 & OSRT & IAI & Evtree & GUIDE & CART & ORT & DTIP \\ \hline \hline 
Guarantee optimality &
   Yes &
  No &
   No &
  No &
   No &
  Yes &
  Yes \\ \hline
Optimization strategy & DPB  & Local Search & Evolutionary & Greedy search & Greedy Search & MIO & MIO  \\ \hline
Can optimize without depth constraint &
  Yes&
  No &
  No& Yes & Yes & No & No \\ \hline
Support (weighted) least absolute deviation &
Yes & No & No & No & Yes &
  Unknown &
  Unknown \\ \hline
Implementation available &
  Yes & \makecell{Yes (Executable\\ Only)} & Yes &  \makecell{Yes (Executable \\ Only)} & Yes & No & No \\\hline
\end{tabularx}
\caption{Comparison of OSRT, IAI \cite{InterpretableAI}, Evtree \cite{grubinger2014evtree}, GUIDE \cite{loh2002regression}, CART \cite{Breiman84}, ORT \cite{Dunn2018} and DTIP \cite{verwer2017learning}.  Executables for IAI and GUIDE are available, but their source code is not. DPB is dynamic programming with bounds, MIO is mixed integer optimization.}
\label{tb:summaryofmethods}
\end{table*}

\section{Experiments}\label{sec:exp}
We ran experiments on 12 datasets; the details are described in Appendix C.1. Our evaluation answers the following:
\begin{enumerate}
    \item Are trees generated by existing regression tree optimization methods truly optimal? How well do optimal sparse regression trees generalize? How far from optimal are greedy-approach models?  (\S \ref{sec:optimality})
    \item Does each method yield consistently high-quality results? (\S \ref{sec:controllability})
    \item How fast does \ourmethod{} converge, given that it guarantees optimality? (\S \ref{sec:speed})
    \item How much do our novel bounds contribute to the performance of \ourmethod? (\S \ref{sec:valueof_kmeans})
    \item What do optimal regression trees look like? (\S \ref{sec:opt_tree})
\end{enumerate}
 \subsection{Optimality and Generalization}\label{sec:optimality}
We compare trees produced by CART \citep{Breiman84}, GUIDE \citep{loh2002regression}, IAI \citep{InterpretableAI}, Evtree \citep{grubinger2014evtree} and \ourmethod, trained on various datasets. For each method, we swept a range of hyperparameters to illustrate the relationship between loss and sparsity (IAI, Evtree, and OSRT all penalize the number of leaves). Optimization experiments in Appendix D and cross-validation experiments in Appendix H, along with a demonstration of these results in Figure \ref{fig:main:lvs:train_and_test} show: (1) trees produced by other methods are usually \textit{sub-optimal} even if they claim optimality (they do not \textit{prove} optimality), and \textit{only our method can consistently find the optimal trees}, which are the most efficient frontiers that optimize the trade-off between loss and sparsity, (2) OSRT has the best generalization performance among methods, and (3) \textit{we can now quantify how far from optimal other methods are}. 
\begin{figure*}[htbp]
    \centering
    \includegraphics[width=0.45\textwidth]{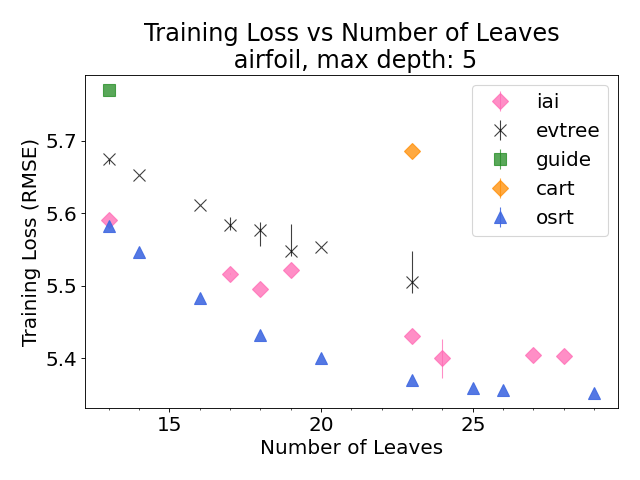}
    \includegraphics[width=0.45\textwidth]{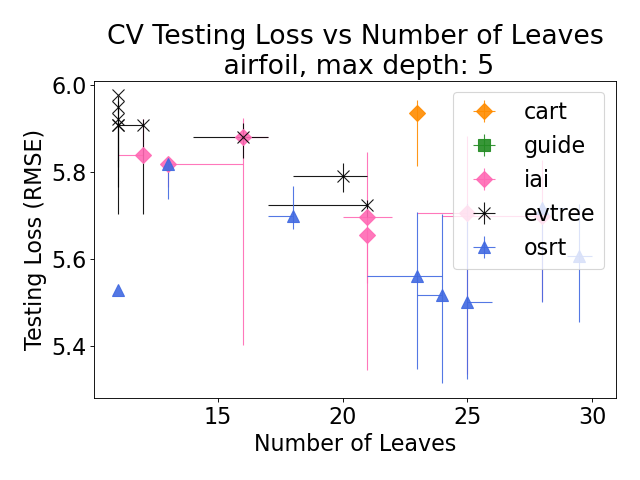}
    \caption{Training and testing loss achieved by IAI, Evtree, GUIDE, CART, OSRT on dataset \textit{airfoil}, $d=5$.}
    \label{fig:main:lvs:train_and_test}
\end{figure*}

\subsection{Controllability}\label{sec:controllability}
Unlike IAI and Evtree, our method does not rely on random seeds. \textit{The results returned by OSRT are consistently high quality, while those of IAI and Evtree are not.} Figure \ref{fig:main:variance} shows the stochasticity of various methods.
Trees produced by IAI and Evtree have large variance in complexity and accuracy if we do not fix the random seed. High variance of loss and sparsity can result in inaccuracy and overfitting. Details and results of this experiment can be found in Appendix F.
\begin{figure}[htbp]
    \centering
    \includegraphics[width=0.45\textwidth]{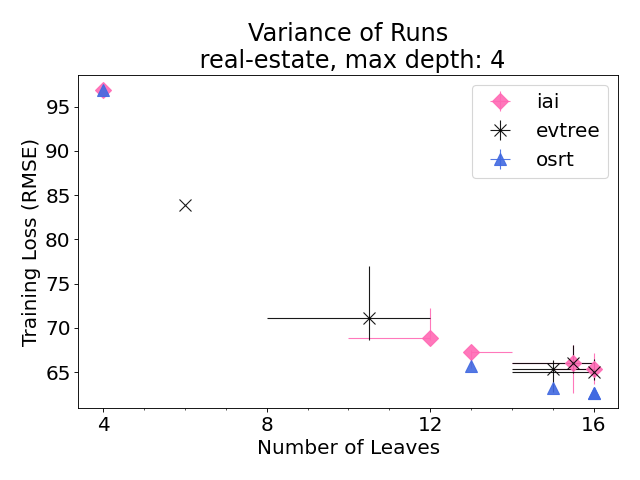}
    \caption{Variance (horizontal and vertical lines) of trees generated by IAI, Evtree, OSRT using 10 different random seeds on dataset \textit{real-estate}.}
    \label{fig:main:variance}
\end{figure}
\subsection{Speed and Scalability}\label{sec:speed}
Our method is one of the \textit{fastest} regression tree optimization methods and the \textit{only one} that also guarantees optimality. Figure \ref{fig:main:tvs} shows that OSRT performs well in run time, and Figure \ref{fig:main:scalability} shows its outstanding scalability when tackling a large dataset with over 2 million samples. As the number of sample increases, Evtree slows down more than other methods and cannot converge within a 30-minute time limit when the sample size exceeds 50,000. More results are shown in Appendices G and I.    
\begin{figure}[htbp]
    \centering
    \includegraphics[width=0.45\textwidth]{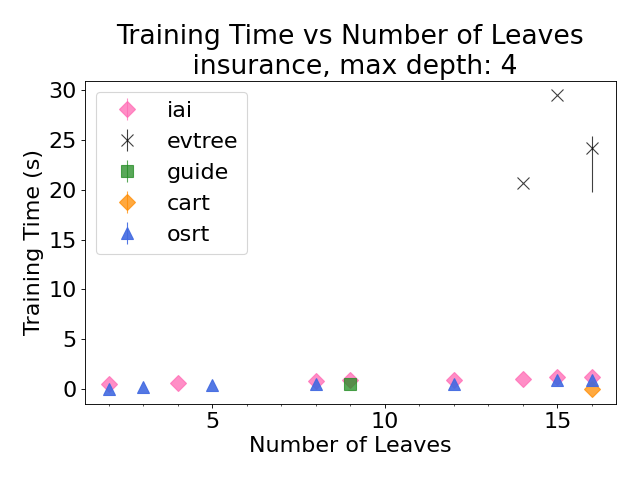}
    \includegraphics[width=0.45\textwidth]{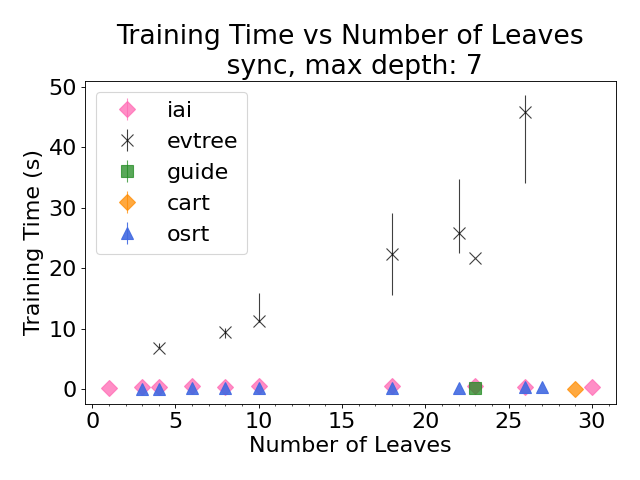}
    \caption{Training time of trees generated by CART, GUIDE, IAI, Evtree, OSRT.}
    \label{fig:main:tvs}
\end{figure}

\begin{figure}[htbp]
    \centering
    \includegraphics[width=0.45\textwidth]{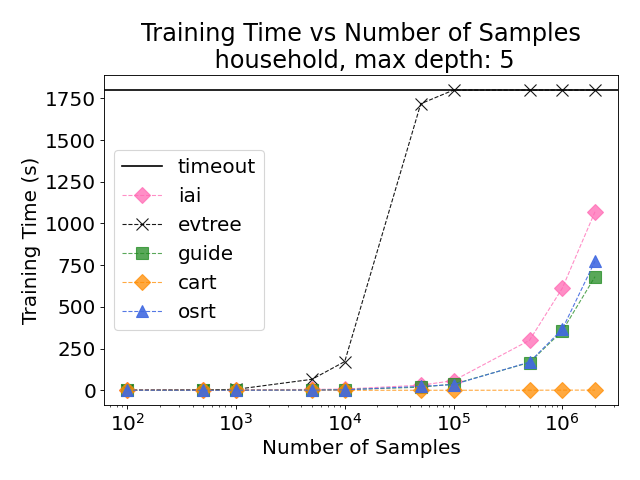}
    \caption{Training time of CART, GUIDE, IAI, Evtree and OSRT as a function of sample size on dataset \textit{household}, $d=5,\lambda=0.035$. (30-minutes time limit; Evtree timed out when sample size is beyond 50,000)}
    \label{fig:main:scalability}
\end{figure}

\subsection{Value of k-Means Lower Bound}\label{sec:valueof_kmeans}
The squared error used in regression tasks tends to make the equivalent points lower bound loose, preventing us from pruning more of the search space. The novel k-Means lower bound allows us to \textit{aggressively prune the search space}, and Figure \ref{fig:main:valueof_kmeans} shows that for the airfoil data set, the k-Means lower bound converged in less than one-fourth the time it took the equivalent points bound to converge. More results can be found in Appendix J.1.  

\begin{figure}[ht]
    \centering
    \includegraphics[width=0.45\textwidth]{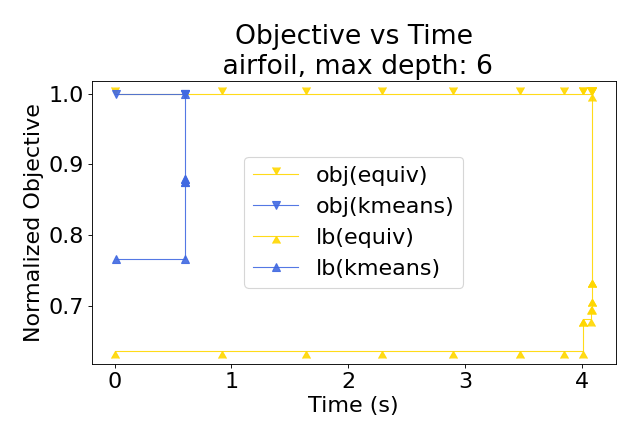}
    \caption{The time saved by k-Means lower bound (blue) over equivalent points bound (yellow), using $\lambda = 0.005$. The optimal solution is found when the lower bound equals objective. The k-Means bound converges in under a second.}
    \label{fig:main:valueof_kmeans}
\end{figure}

\subsection{Optimal Trees}\label{sec:opt_tree}
Figure \ref{fig:opt_trees} presents two optimal trees generated by OSRT on dataset \textit{servo}, with and without a depth constraint respectively, using the same regularization parameter. It shows that imposing a depth constraint sacrifices the global optimality of Equation \ref{eq:obj1}. More results regarding the ablation study of depth limit can be found in Appendix J.2, and Appendix L compares optimal trees generated by OSRT and sub-optimal trees generated by other methods.
\begin{figure}[ht]
    \centering
    \begin{subfigure}[b]{0.45\textwidth}
    \centering
    \begin{forest} [ {$pgain = 4$}   [ $\mathbf{0.67}$ ,edge label={node[midway,left,font=\scriptsize]{True}} ]  [ {$pgain = 5$} ,edge label={node[midway,right,font=\scriptsize]{False}}  [ $\mathbf{0.53}$  ]  [ {$pgain = 6$}   [ $\mathbf{0.52}$  ]  [ {$motor = D$}   [ $\mathbf{1.40}$  ]   [ $\mathbf{3.68}$  ]  ] ] ] ] \end{forest}
    \caption{(Max depth 4) Optimal tree with 5 leaves, $R^2 = 69.63\%$.}
    \end{subfigure}
    \begin{subfigure}[b]{0.45\textwidth}
    \centering
    \begin{forest} [ {$pgain = 4$}   [ $\mathbf{0.67}$ ,edge label={node[midway,left,font=\scriptsize]{True}} ]  [ {$pgain = 5$} ,edge label={node[midway,right,font=\scriptsize]{False}}  [ $\mathbf{0.53}$  ]  [ {$pgain = 6$}   [ $\mathbf{0.52}$  ]  [ {$motor = D$}   [ $\mathbf{1.40}$  ]  [ {$motor = E$}   [ $\mathbf{2.40}$  ]   [ $\mathbf{4.10}$  ]  ] ] ] ] ] \end{forest}
    \caption{(No depth limit) Optimal tree with 6 leaves, $R^2 = 75\%$.}
    \end{subfigure}
    \caption{Optimal trees generated by OSRT on dataset \textit{servo} with (a) depth limit 4 and (b) no depth limit. Tree (b) has only one more leaf but explains 5\% more training data variance than Tree (a).}
    \label{fig:opt_trees}
\end{figure}
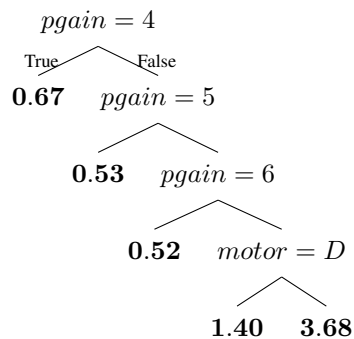
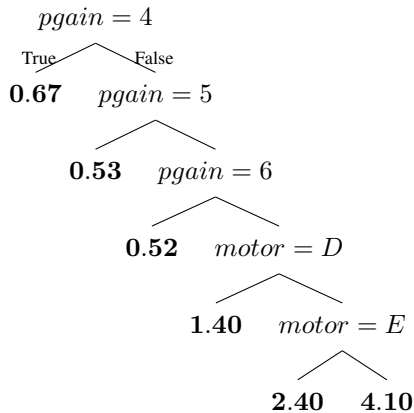

\section{Conclusion}
We provide the first method to find provably-optimal regression trees within a reasonable time. Our method quickly and consistently finds an optimal sparse model that tends to generalize well. Our method also scales well even for large datasets. OSRT provides a naturally human-interpretable option for solving regression problems in contrast to other, uninterpretable methods such as ridge regression, support vector regression, ensemble methods and neural networks.
\section*{Code Availability}
The implementation of OSRT is available at \url{https://github.com/ruizhang1996/optimal-sparse-regression-tree-public}.

\noindent Our experiment code is available at \url{https://github.com/ruizhang1996/regression-tree-benchmark}.
\section*{Acknowledgments}
We acknowledge the support
of the Natural Sciences and Engineering Research Council of
Canada (NSERC), the National Institute on Drug Abuse (NIDA) under grant DA054994, and the National Science Foundation (NSF) under grant IIS-2130250.

\bibliography{references}

\begin{thebibliography}{39}
\providecommand{\natexlab}[1]{#1}

\bibitem[{Aggarwal, Schieber, and Tokuyama(1994)}]{aggarwal1994finding}
Aggarwal, A.; Schieber, B.; and Tokuyama, T. 1994.
\newblock Finding a minimum-weightk-link path in graphs with the concave monge
  property and applications.
\newblock \emph{Discrete \& Computational Geometry}, 12(3): 263--280.

\bibitem[{Aghaei, Gomez, and Vayanos(2020)}]{aghaei2020learning}
Aghaei, S.; Gomez, A.; and Vayanos, P. 2020.
\newblock Learning Optimal Classification Trees: Strong Max-Flow Formulations.
\newblock \emph{arXiv e-print arXiv:2002.09142}.

\bibitem[{Angelino et~al.(2017)Angelino, Larus-Stone, Alabi, Seltzer, and
  Rudin}]{AngelinoLaAlSeRu17-kdd}
Angelino, E.; Larus-Stone, N.; Alabi, D.; Seltzer, M.; and Rudin, C. 2017.
\newblock Learning certifiably optimal rule lists for categorical data.
\newblock In \emph{Proceedings of the {ACM} {SIGKDD} International Conference
  on Knowledge Discovery and Data Mining {(KDD)}}.

\bibitem[{Avellaneda(2020)}]{avellaneda2020efficient}
Avellaneda, F. 2020.
\newblock Efficient inference of optimal decision trees.
\newblock In \emph{Proceedings of the AAAI Conference on Artificial
  Intelligence}, volume~34, 3195--3202.

\bibitem[{Bennett and Blue(1996)}]{Bennett96optimaldecision}
Bennett, K.~P.; and Blue, J.~A. 1996.
\newblock Optimal decision trees.
\newblock \emph{Rensselaer Polytechnic Institute Math Report}, 214: 24.

\bibitem[{Bertsimas, Dunn, and Wang(2021)}]{BertsimasEtAl21201}
Bertsimas, D.; Dunn, J.; and Wang, Y. 2021.
\newblock Near-optimal Nonlinear Regression Trees.
\newblock \emph{Operations Research Letters}, 49(2): 201--206.

\bibitem[{Blanquero et~al.(2020)Blanquero, Carrizosa, Molero-R{\'\i}o, and
  Morales}]{blanquero2020sparsity}
Blanquero, R.; Carrizosa, E.; Molero-R{\'\i}o, C.; and Morales, D.~R. 2020.
\newblock Sparsity in optimal randomized classification trees.
\newblock \emph{European Journal of Operational Research}, 284(1): 255--272.

\bibitem[{Blanquero et~al.(2022)Blanquero, Carrizosa, Molero-R{\'\i}o, and
  Morales}]{blanquero2022sparse}
Blanquero, R.; Carrizosa, E.; Molero-R{\'\i}o, C.; and Morales, D.~R. 2022.
\newblock On sparse optimal regression trees.
\newblock \emph{European Journal of Operational Research}, 299(3): 1045--1054.

\bibitem[{Breiman et~al.(1984)Breiman, Friedman, Olshen, and Stone}]{Breiman84}
Breiman, L.; Friedman, J.~H.; Olshen, R.~A.; and Stone, C.~J. 1984.
\newblock \emph{Classification and Regression Trees}.
\newblock Wadsworth.

\bibitem[{Chambers et~al.(1983)Chambers, Cleveland, Kleiner, and
  Tukey}]{chambers1983graphical}
Chambers, J.; Cleveland, W.; Kleiner, B.; and Tukey, P. 1983.
\newblock Graphical methods for data analysis. Wadsworth Int'l.
\newblock \emph{Group, Belmont, CA}.

\bibitem[{Choi(2018)}]{kaggle:insurance}
Choi, M. 2018.
\newblock Kaggle insurance data.
\newblock https://www.kaggle.com/datasets/mirichoi0218/insurance.

\bibitem[{Demirovi{\'c} et~al.(2022)Demirovi{\'c}, Lukina, Hebrard, Chan,
  Bailey, Leckie, Ramamohanarao, and Stuckey}]{demirovic2022murtree}
Demirovi{\'c}, E.; Lukina, A.; Hebrard, E.; Chan, J.; Bailey, J.; Leckie, C.;
  Ramamohanarao, K.; and Stuckey, P.~J. 2022.
\newblock MurTree: Optimal Decision Trees via Dynamic Programming and Search.
\newblock \emph{Journal of Machine Learning Research}, 23(26): 1--47.

\bibitem[{Dobkin et~al.(1997)Dobkin, Fulton, Gunopulos, Kasif, and
  Salzberg}]{dobkininduction}
Dobkin, D.; Fulton, T.; Gunopulos, D.; Kasif, S.; and Salzberg, S. 1997.
\newblock Induction of shallow decision trees.
\newblock \emph{IEEE Transactions on Pattern Analysis and Machine
  Intelligence}.

\bibitem[{Dua and Graff(2017)}]{Dua:2019}
Dua, D.; and Graff, C. 2017.
\newblock {UCI} Machine Learning Repository.
\newblock http://archive.ics.uci.edu/ml.
\newblock Accessed: 2022-04-01.

\bibitem[{Dunn(2018)}]{Dunn2018}
Dunn, J. 2018.
\newblock \emph{Optimal Trees for Prediction and Prescription}.
\newblock Ph.D. thesis, Massachusetts Institute of Technology.

\bibitem[{Farhangfar, Greiner, and Zinkevich(2008)}]{FarhangfarGZ08}
Farhangfar, A.; Greiner, R.; and Zinkevich, M. 2008.
\newblock A Fast Way to Produce Optimal Fixed-Depth Decision Trees.
\newblock In \emph{International Symposium on Artificial Intelligence and
  Mathematics {(ISAIM)}}.

\bibitem[{Grubinger, Zeileis, and Pfeiffer(2014)}]{grubinger2014evtree}
Grubinger, T.; Zeileis, A.; and Pfeiffer, K.-P. 2014.
\newblock evtree: Evolutionary learning of globally optimal classification and
  regression trees in R.
\newblock \emph{Journal of Statistical Software}, 61: 1--29.

\bibitem[{G{\"u}nl{\"u}k et~al.(2021)G{\"u}nl{\"u}k, Kalagnanam, Li,
  Menickelly, and Scheinberg}]{MenickellyGKS18}
G{\"u}nl{\"u}k, O.; Kalagnanam, J.; Li, M.; Menickelly, M.; and Scheinberg, K.
  2021.
\newblock Optimal decision trees for categorical data via integer programming.
\newblock \emph{Journal of Global Optimization}, 1--28.

\bibitem[{Hu et~al.(2020)Hu, Siala, Hebrard, and Huguet}]{hu2020learning}
Hu, H.; Siala, M.; Hebrard, E.; and Huguet, M.-J. 2020.
\newblock Learning optimal decision trees with maxsat and its integration in
  adaboost.
\newblock In \emph{IJCAI-PRICAI 2020, 29th International Joint Conference on
  Artificial Intelligence and the 17th Pacific Rim International Conference on
  Artificial Intelligence}.

\bibitem[{Hu, Rudin, and Seltzer(2019)}]{HuRuSe2019}
Hu, X.; Rudin, C.; and Seltzer, M. 2019.
\newblock Optimal Sparse Decision Trees.
\newblock In \emph{Proceedings of Conference on Neural Information Processing
  Systems {(NeurIPS)}}.

\bibitem[{Interpretable~AI(2022)}]{InterpretableAI}
Interpretable~AI, L. 2022.
\newblock Interpretable AI Documentation.
\newblock https://www.interpretable.ai.
\newblock Accessed: 2022-04-01.

\bibitem[{Janota and Morgado(2020)}]{janota2020sat}
Janota, M.; and Morgado, A. 2020.
\newblock Sat-based encodings for optimal decision trees with explicit paths.
\newblock In \emph{International Conference on Theory and Applications of
  Satisfiability Testing}, 501--518. Springer.

\bibitem[{Lin et~al.(2020)Lin, Zhong, Hu, Rudin, and
  Seltzer}]{lin2020generalized}
Lin, J.; Zhong, C.; Hu, D.; Rudin, C.; and Seltzer, M. 2020.
\newblock Generalized and scalable optimal sparse decision trees.
\newblock In \emph{Proceedings of International Conference on Machine Learning
  ({ICML})}, 6150--6160.

\bibitem[{Loh(2002)}]{loh2002regression}
Loh, W.-Y. 2002.
\newblock Regression tress with unbiased variable selection and interaction
  detection.
\newblock \emph{Statistica Sinica}, 361--386.

\bibitem[{McTavish et~al.(2022)McTavish, Zhong, Achermann, Karimalis, Chen,
  Rudin, and Seltzer}]{McTavishZhongEtAl2022}
McTavish, H.; Zhong, C.; Achermann, R.; Karimalis, I.; Chen, J.; Rudin, C.; and
  Seltzer, M. 2022.
\newblock Fast Sparse Decision Tree Optimization via Reference Ensembles.
\newblock In \emph{Proceedings of {AAAI} Conference on Artificial
  Intelligence}.

\bibitem[{Morgan and Sonquist(1963)}]{MorganSo1963}
Morgan, J.~N.; and Sonquist, J.~A. 1963.
\newblock Problems in the analysis of survey data, and a proposal.
\newblock \emph{J. Amer. Statist. Assoc.}, 58: 415–434.

\bibitem[{Narodytska et~al.(2018)Narodytska, Ignatiev, Pereira, and
  Marques-Silva}]{narodytska2018learning}
Narodytska, N.; Ignatiev, A.; Pereira, F.; and Marques-Silva, J. 2018.
\newblock Learning Optimal Decision Trees with {SAT}.
\newblock In \emph{Proceedings of the Twenty-Eighth International Joint
  Conference on Artificial Intelligence {(IJCAI)}}, 1362--1368.

\bibitem[{Nijssen and Fromont(2007)}]{nijssen2007mining}
Nijssen, S.; and Fromont, E. 2007.
\newblock Mining optimal decision trees from itemset lattices.
\newblock In \emph{Proceedings of the {ACM} {SIGKDD} International Conference
  on Knowledge Discovery and Data Mining {(KDD)}}, 530--539. ACM.

\bibitem[{Nijssen and Fromont(2010)}]{NijssenFromont2010}
Nijssen, S.; and Fromont, E. 2010.
\newblock Optimal constraint-based decision tree induction from itemset
  lattices.
\newblock \emph{Data Mining and Knowledge Discovery}, 21(1): 9--51.

\bibitem[{Nijssen, Schaus et~al.(2020)}]{nijssen2020}
Nijssen, S.; Schaus, P.; et~al. 2020.
\newblock Learning Optimal Decision Trees Using Caching Branch-and-Bound
  Search.
\newblock In \emph{Proceedings of AAAI Conference on Artificial Intelligence
  (AAAI)}.

\bibitem[{Payne and Meisel(1977)}]{payne1977algorithm}
Payne, H.~J.; and Meisel, W.~S. 1977.
\newblock An algorithm for constructing optimal binary decision trees.
\newblock \emph{{IEEE} Transactions on Computers}, C-26(9): 905--916.

\bibitem[{Quinlan(1993)}]{Quinlan93}
Quinlan, J.~R. 1993.
\newblock \emph{C4.5: Programs for Machine Learning}.
\newblock Morgan Kaufmann.

\bibitem[{Sathishkumar, Park, and Cho(2020)}]{sathishkumar2020using}
Sathishkumar, V.; Park, J.; and Cho, Y. 2020.
\newblock Using data mining techniques for bike sharing demand prediction in
  metropolitan city.
\newblock \emph{Computer Communications}, 153: 353--366.

\bibitem[{Shati, Cohen, and McIlraith(2021)}]{shati2021sat}
Shati, P.; Cohen, E.; and McIlraith, S. 2021.
\newblock SAT-based approach for learning optimal decision trees with
  non-binary features.
\newblock In \emph{27th International Conference on Principles and Practice of
  Constraint Programming (CP 2021)}. Schloss Dagstuhl-Leibniz-Zentrum f{\"u}r
  Informatik.

\bibitem[{Song and Zhong(2020)}]{song2020efficient}
Song, M.; and Zhong, H. 2020.
\newblock Efficient weighted univariate clustering maps outstanding
  dysregulated genomic zones in human cancers.
\newblock \emph{Bioinformatics}, 36(20): 5027--5036.

\bibitem[{VE and Cho(2020)}]{ve2020rule}
VE, S.; and Cho, Y. 2020.
\newblock A rule-based model for Seoul Bike sharing demand prediction using
  weather data.
\newblock \emph{European Journal of Remote Sensing}, 53(sup1): 166--183.

\bibitem[{Verhaeghe et~al.(2019)Verhaeghe, Nijssen, Pesant, Quimper, and
  Schaus}]{verhaeghe2019}
Verhaeghe, H.; Nijssen, S.; Pesant, G.; Quimper, C.-G.; and Schaus, P. 2019.
\newblock Learning optimal decision trees using constraint programming.
\newblock In \emph{The 25th International Conference on Principles and Practice
  of Constraint Programming (CP2019)}.

\bibitem[{Verwer and Zhang(2017)}]{verwer2017learning}
Verwer, S.; and Zhang, Y. 2017.
\newblock Learning decision trees with flexible constraints and objectives
  using integer optimization.
\newblock In \emph{International Conference on AI and OR Techniques in
  Constraint Programming for Combinatorial Optimization Problems}, 94--103.
  Springer.

\bibitem[{Verwer and Zhang(2019)}]{verwer2019learning}
Verwer, S.; and Zhang, Y. 2019.
\newblock Learning optimal classification trees using a binary linear program
  formulation.
\newblock In \emph{Proceedings of AAAI Conference on Artificial Intelligence
  (AAAI)}.

\end{thebibliography}
\bibstyle{aaai23}
\appendix
\onecolumn


\section{Theorems and Proofs}
\subsection{Proof of Theorem \ref{thm:hierarchical_lb}}
\textbf{Theorem \ref{thm:hierarchical_lb}}
(Hierarchical Objective Lower Bound). \textit{Any tree $t' = (t'_{\fix}, \delta'_{\fix}, t'_{\splita}, \delta'_{\splita}, K', H_{t'}) \in \sigma(t)$ in the child tree set of $t = (t_{\fix}, \delta_{\fix}, t_{\splita}, \delta_{\splita}, K, H_t)$ obeys: \[R(t',\X,\y) \geq \mathcal{L}(t_{\fix},\X,\y) + \lambda H_t. \]}
That is, the objective lower bound of the parent tree holds for all its child trees. This bound ensures that we do not further explore child trees if the parent tree can be pruned via the lower bound.
\begin{proof}
As we know, $K' \geq K, H_{t'} > H_t$, since $t'$ is a child tree of $t$. The objective lower bound (which holds for all trees) of $t'$ is:
\begin{equation}\label{eq:child_lb}
 R(t',\X,\y) \geq \mathcal{L}(t'_{\fix},\X,\y) + \lambda H_{t'}.   
\end{equation}
Since  $\mathcal{L}(t'_{\fix},\X,\y) = \mathcal{L}(t_{\fix},\X,\y) + \mathcal{L}(t'_{\fix} \setminus t_{\fix},\X,\y)$ and the loss of $K' - K$ fixed leaves in $t$ is nonnegative, i.e., $\mathcal{L}(t'_{\fix} \setminus t_{\fix},\X,\y) \geq 0$, we have:
\begin{equation}\label{eq:fix_loss}
 \mathcal{L}(t'_{\fix},\X,\y) \geq \mathcal{L}(t_{\fix},\X,\y)   
\end{equation}
therefore:
\begin{equation}\label{eq:hierachical_lb}
    R(t',\X,\y) \geq \mathcal{L}(t_{\fix},\X,\y) + \lambda H_{t}. 
\end{equation}
\end{proof}
\subsection{Proof of Theorem \ref{thm:look_ahead}}
\textbf{Theorem \ref{thm:look_ahead}}
\textit{(Objective Lower Bound with One-step Lookahead). Let $t = (t_{\fix}, \delta_{\fix}, t_{\splita}, \delta_{\splita}, K, H_t)$ be a tree with $H_t$ leaves. If $\mathcal{L}(t_{\fix},\X,\y) + \lambda H_t + \lambda > R^c$, even if its objective lower bound $\mathcal{L}(t_{\fix},\X,\y) + \lambda H_t \leq R^c$, then for any child tree $t' \in \sigma(t)$, $R(t', \X, \y) > R^c$.
That is, even if a parent tree cannot be pruned via its objective lower bound, if $\mathcal{L}(t_{\fix},\X,\y) + \lambda H_t + \lambda > R^c$, all of its child trees are sub-optimal and can be pruned (and never explored). }

\begin{proof}
From the objective lower bound of $t'$ defined in Equation \ref{eq:child_lb}, and Equation \ref{eq:fix_loss}, we have:
\begin{equation}
    R(t',\X,\y) \geq \mathcal{L}(t_{\fix},\X,\y) + \lambda H_{t'}. 
\end{equation}
Because the child tree has at least one more leaf than the parent tree, $H_{t'} \geq H_t + 1$, we have:
\begin{equation}\label{eq:look_ahead} 
    R(t',\X,\y) \geq \mathcal{L}(t_{\fix},\X,\y) + \lambda H_{t} + \lambda.
\end{equation}
Thus, if $\mathcal{L}(t_{\fix},\X,\y) + \lambda H_{t} + \lambda>R^c$, then $R(t',\X,\y)>R^c$ for child trees $t'$, which means all the child trees can be pruned.
\end{proof}
\subsection{Proof of Theorem \ref{thm:equiv_lb}}
\textbf{Theorem \ref{thm:equiv_lb}}
\textit{(Equivalent Points Lower Bound). Let $t = (t_{\fix}, \delta_{\fix}, t_{\splita}, \delta_{\splita}, K, H_t)$ be a tree with $K$ fixed leaves and $H_t - K$ splitting leaves. For any child tree $t'= (t'_{\fix}, \delta'_{\fix}, t'_{\splita}, \delta'_{\splita}, K', H_{t'}) \in \sigma(t)$:
\begin{equation}
    R(t', \X, \y) \geq  \mathcal{L}(t_{\fix},\X,\y) + \lambda H_{t} + \sum_{u=1}^U  \mathcal{E}_u \cdot \mathbf{1}_{\capt(t_{\splita}, u)},
\end{equation}
where $\mathbf{1}_{\capt(t_{\splita}, u)}$ is 1 when $t_{\splita}$ captures set $u$, 0 otherwise.
\noindent\textit{Combining with the idea of Theorem \ref{thm:look_ahead}, we have}:
\begin{equation}
    R(t', \X, \y) \geq  \mathcal{L}(t_{\fix},\X,\y) + \lambda H_{t} + \lambda + \sum_{u=1}^U  \mathcal{E}_u \cdot \mathbf{1}_{\capt(t_{\splita}, u)}.
\end{equation}}

\begin{proof}
\begin{eqnarray}\nonumber
   \lefteqn{R(t', \X, \y) = \mathcal{L}(t',\X,\y) + \lambda H_{t'}}\\\nonumber
   &=& \mathcal{L}(t'_{\fix},\X,\y) +  \mathcal{L}(t'_{\splita},\X,\y) + \lambda H_{t'}\\
   \label{eq:child_tree_obj}
       &=& \mathcal{L}(t_{\fix},\X,\y) + \mathcal{L}(t'_{\fix} \setminus t_{\fix},\X,\y) + \mathcal{L}(t'_{\splita},\X,\y) + \lambda H_{t'}.
   \end{eqnarray}
Since samples captured by $t_{\splita}$ are captured either by $t'_{\fix} \setminus t_{\fix}$ or $t'_{\splita}$, and equivalence loss cannot be eliminated in any tree, we have:
\begin{equation}\label{eq:split_leaf_equiv_lb}
\mathcal{L}(t'_{\fix} \setminus t_{\fix},\X,\y)+\mathcal{L}(t'_{\splita},\X,\y) \geq \sum_{u=1}^U  \mathcal{E}_u \cdot \mathbf{1}_{\capt(t_{\splita}, u)}.
\end{equation}
Equality is achieved when each leaf in $(t'_{\fix} \setminus t_{\fix}) \cup t'_{\splita}$ captures exactly one set of equivalent points and no other points.
Substituting Equation \ref{eq:split_leaf_equiv_lb} into Equation \ref{eq:child_tree_obj}, we have:
\[R(t', \X, \y) \geq \mathcal{L}(t_{\fix},\X,\y) + \sum_{u=1}^U  \mathcal{E}_u \cdot \mathbf{1}_{\capt(t_{\splita}, u)} + \lambda H_{t'}.\]
Because $H_{t'} \geq H_t + 1$:
\[R(t', \X, \y) \geq  \mathcal{L}(t_{\fix},\X,\y) + \lambda H_{t} + \lambda + \sum_{u=1}^U  \mathcal{E}_u \cdot \mathbf{1}_{\capt(t_{\splita}, u)}.\]
\end{proof}
\subsection{Proof of Theorem \ref{thm:kmeans_lb}}
\textbf{Theorem \ref{thm:kmeans_lb}}\textit{
(k-Means Lower Bound).
Consider tree $t = (t_{\fix}, \delta_{\fix}, t_{\splita}, \delta_{\splita}, K, H_t)$
and any child tree $t' =(t'_{\fix}, \delta'_{\fix}, t'_{\splita}, \delta'_{\splita}, K', H_{t'}) \in 
\sigma(t)$. Let $(\X_{t_{\splita}}, \y_{t_{\splita}})$ be samples captured by the splitting leaves $t_{\splita}$. Then,
\begin{align*}\label{eq:kmeans_lb}
    R(t', \X, \y) \geq & \mathcal{L}(t_{\fix},\X,\y) + \lambda K\\
    & + \min_C \left(\frac{1}{N}\text{k-Means}(C,\y_{t_{\splita}}) + \lambda C\right)
\end{align*}
where $\text{k-Means}(C, \y')$ is the optimal objective of the k-Means algorithm clustering 1D points $\y'$ of size $N'$ into $C$ clusters ($C\geq 1$).}

\begin{proof}
From Equation \ref{eq:child_tree_obj} we know that for any child tree $t'$:
\[R(t', \X, \y) = \mathcal{L}(t_{\fix},\X,\y) + \mathcal{L}(t'_{\fix} \setminus t_{\fix},\X,\y) + \mathcal{L}(t'_{\splita},\X,\y) + \lambda H_{t'}.\]
Rearranging:
\begin{eqnarray}
R(t', \X, \y) &\geq&  \mathcal{L}(t_{\fix},\X,\y) + \lambda K  
+  \mathcal{L}(t'_{\fix} \setminus t_{\fix},\X,\y)  + \mathcal{L}(t'_{\splita},\X,\y)
+ \lambda (H_{t'} - K). \label{eq:kmeans_proof_initial}
\end{eqnarray}
Here,
$\mathcal{L}(t'_{\fix} \setminus t_{\fix},\X,\y) + \mathcal{L}(t'_{\splita},\X,\y)$ can be viewed as the squared loss of one way to assign samples $(\X_{t_{\splita}}, \y_{t_{\splita}})$ to $(H_{t'} - K)$ clusters, and the objective of this assignment is the sum of squared Euclidean distances between every point in $(\X_{t_{\splita}}, \y_{t_{\splita}})$ and its cluster mean, because in regression trees, we predict using the mean of the targets in each leaf. 

We now follow the logic above the statement of the theorem, where the optimal k-Means assignment (which considers labels only) can achieve a better objective than any other assignment of points to leaves (like that of our current tree).
By definition, we have:
\begin{equation}
\mathcal{L}(t'_{\fix} \setminus t_{\fix},\X,\y) + \mathcal{L}(t'_{\splita},\X,\y) 
\label{eq:tree_to_kmeans}
\geq \frac{1}{N}\text{k-Means}(H_{t'} - K,\y_{t_{\splita}}).
\end{equation}
Adding $\lambda (H_{t'} - K)$ to each side, we have
\[\mathcal{L}(t'_{\fix} \setminus t_{\fix},\X,\y) + \mathcal{L}(t'_{\splita},\X,\y) + \lambda (H_{t'} - K)\]
\[\geq \frac{1}{N}\text{k-Means}(H_{t'} - K,\y_{t_{\splita}}) + \lambda (H_{t'} - K).\]
Because
\[\frac{1}{N}\text{k-Means}(H_{t'} - K,\y_{t_{\splita}}) + \lambda (H_{t'} - K)\]
\begin{equation*}
    \geq \min_C \left(\frac{1}{N}\text{k-Means}(C,\y_{t_{\splita}}) + \lambda C\right),
\end{equation*}
we have:
\[\mathcal{L}(t'_{\fix} \setminus t_{\fix},\X,\y) + \mathcal{L}(t'_{\splita},\X,\y) + \lambda (H_{t'} - K)\]
\begin{equation}\label{eq:clustering}
    \geq \min_C \left(\frac{1}{N}\text{k-Means}(C,\y_{t_{\splita}}) + \lambda C\right).
\end{equation}
Substituting Equation \ref{eq:clustering} into Equation \ref{eq:kmeans_proof_initial}, we have:
\begin{equation*}
R(t', \X, \y) \geq  \mathcal{L}(t_{\fix},\X,\y) + \lambda K  + \min_C \left(\frac{1}{N}\text{k-Means}(C,\y_{t_{\splita}}) + \lambda C\right).
\end{equation*}
\end{proof}

\subsection{Proof of Theorem \ref{thm:kmeans_equiv_lb}}
\textbf{Theorem \ref{thm:kmeans_equiv_lb}}\textit{
(k-Means Equivalent Points Lower Bound).
Consider tree $t = (t_{\fix}, \delta_{\fix}, t_{\splita}, \delta_{\splita}, K, H_t)$.
and any child tree $t' =(t'_{\fix}, \delta'_{\fix}, t'_{\splita}, \delta'_{\splita}, K', H_{t'}) \in 
\sigma(t)$. Let $(\X_{t_{\splita}}, \y_{t_{\splita}})$ be samples captured by the splitting leaves $t_{\splita}$. Then, 
\begin{eqnarray}
\nonumber
R(t', \X, y) \geq \mathcal{L}(t_{\fix},\X,\y) + \lambda K +
\min_C \left(\frac{1}{N}\text{Constrained\_k-Means}(C,\X_{\splita},\y_{t_{\splita}}) + \lambda C\right)
\end{eqnarray}
where Constrained\_k-Means is defined in Equation \ref{eq:cons_kmeans}.}

\begin{proof}
The proof is very similar to the k-Means lower bound (Theorem \ref{thm:kmeans_lb}).
From Equation \ref{eq:kmeans_proof_initial}, we have:
\begin{eqnarray}\nonumber
R(t', \X, \y) &\geq&  \mathcal{L}(t_{\fix},\X,\y) + \lambda K  
\\\nonumber
&&  +  \mathcal{L}(t'_{\fix} \setminus t_{\fix},\X,\y)  + \mathcal{L}(t'_{\splita},\X,\y)\\
&&+ \lambda (H_{t'} - K). \nonumber
\end{eqnarray}
View $\mathcal{L}(t'_{\fix} \setminus t_{\fix},\X,\y) + \mathcal{L}(t'_{\splita},\X,\y)$ as the objective of one way to assign samples $(\X_{t_{\splita}}, \y_{t_{\splita}})$ into $(H_{t'} - K)$ clusters, under the constraint that \textit{equivalent points must be assigned to the same cluster}. This gives:
\begin{eqnarray}\nonumber
\lefteqn{\mathcal{L}(t'_{\fix} \setminus t_{\fix},\X,\y) + \mathcal{L}(t'_{\splita},\X,\y)}\\
\label{eq:tree_to_cons_kmeans}
&\geq \frac{1}{N}\text{Constrained\_k-Means}(H_{t'} - K,\X_{t_{\splita}},\y_{t_{\splita}}).&
\end{eqnarray}
The rest would be the same as the proof for Theorem \ref{thm:kmeans_lb}, with $\text{k-Means}(H_{t'} - K,\y_{t_{\splita}})$ replaced by $\text{Constrained\_k-Means}(H_{t'} - K,\X_{t_{\splita}},\y_{t_{\splita}})$.
\end{proof}
\subsection{Proof of Theorem \ref{thm:cons_weighted_kmeans}}
\textbf{Theorem \ref{thm:cons_weighted_kmeans}} \textit{
(Constrained k-Means with Equivalent Points is equivalent to weighted k-Means)
Recall in Definition \ref{def:cons_kmeans},
we have $N'$ 1D target points $\y'$ with feature $\X'$and number of clusters $C$. We also have a constraint that for all points in any equivalent set $u$, they must be assigned to the same leaf.  
Define a modified dataset $(\X_{\modify}, \y_{\modify}, \w_{\modify})$, where all points of equivalent set $u$ in the original dataset $(\X',\y')$ are represented as a single point $(\x_u, y_u, w_u)$, where $\x_u$ is the same as $\x'_u$ the feature vector of equivalent set $u$,
\begin{equation}
    y_u = \frac{1}{|u|}\sum_{(\x'_i, y'_i) \in u} y'_i,
\end{equation} and the weight is the size of the equivalent set $u$\begin{equation}
    w_u = |u|,
\end{equation}
then an optimal clustering of the modified dataset will provide an optimal clustering of the original dataset with the equivalent points constraint from Equation \ref{eq:cons_kmeans}. (All points from the original dataset contributing to a weighted point in the modified dataset will be assigned to the same cluster.)
}
That is, solving the Weighted k-Means problem would result in the same solution(s) as solving the Constrained k-Means problem.
\begin{proof}
Let $U$ be the set of equivalent sets in the original dataset $(\X',\y')$, namely our modified dataset has $|U|$ points,  $(\x_u,y_u, w_u)$ where $ u=1,2, \dots, |U|$.
By the definition of the optimal clustering of the modified dataset, we have
\[
    \argmin_{A} \sum_{u \in U} w_{u} (y_{u} -  z_{A(y_{u})})^2\] 
    \textrm{rewritten as}:
\begin{eqnarray}\nonumber
    \lefteqn{
    \argmin_{A} \sum_{k=1}^C \sum_{ A(y_{u}) = c_k} w_{u}(y_{u} -  z_{c_k})^2
    }
    \\\nonumber
    \hspace{-5pt}&=&\hspace*{-10pt}\argmin_{A}\sum_{k=1}^C \sum_{ A(y_{u}) = c_k} w_{u}y^2_{u} - 2w_{u}y_{u}z_{c_k} +  w_{u}z^2_{c_k}\\\nonumber
   \hspace{-5pt} &=&\hspace*{-10pt}\argmin_{A}
    \sum_{u \in U} w_{u}y^2_{u} + 
    \sum_{k=1}^C \sum_{ A(y_{u}) = c_k}-2w_{u}y_{u}z_{c_k} +  w_{u}z^2_{c_k}.\\ \label{eqn:bigequivbound}
    \end{eqnarray}
     $\sum_{u \in U} w_{u}y^2_{u}$ can be removed because it is a constant, and according to Equation \ref{eq:weighted_cluster_mean}, we have \[ \sum_{A(y_{u}) = c_k} w_u y_u = z_{c_k} \sum_{A(y_{u}) = c_k} w_u. \]
     For cluster $c_k$, we also have \[\sum_{A(y_{u}) = c_k} w_{u}z^2_{c_k} =z^2_{c_k} \sum_{A(y_{u}) = c_k} w_u. \]
    Substituting them into the equation above, we continue:
    \[\textrm{Equation } \ref{eqn:bigequivbound}=\argmin_{A} \sum_{k=1}^C - z^2_{c_k} \sum_{A(y_{u}) = c_k} w_u.\]
    Substituting Equation \ref{eq:weighted_cluster_mean} into it:
    \begin{equation}=\argmin_{A} \sum_{k=1}^C - \left(\frac{\sum_{A(y_u) = c_k} w_u y_u}{\sum_{A(y_u) = c_k} w_u}\right)^2 \sum_{A(y_u) = c_k} w_u.\label{eqn:partofproof}\end{equation}
Recall that $w_u$ is the size of the equivalent set $u$ in the original dataset $(\X', \y')$. We can switch the index of summation back to point $i$ instead of equivalent points sets $u$,
   \[z_{c_k} = \frac{\sum_{A(y_u) = c_k} w_u y_u}{\sum_{A(y_u) = c_k} w_u}= \frac{\sum_{A(y'_i) = c_k} y'_i}{\sum_{A(y'_i) = c_k} \mathbf{1}}.\]
   Then, we continue:
   \begin{eqnarray*}
   \textrm{Equation }\ref{eqn:partofproof} &=&\argmin_{A} \sum_{k=1}^C -\left( \frac{\sum_{A(y'_i) = c_k}y'_i}{\sum_{A(y'_i) = c_k} \mathbf{1}}\right)^2 \sum_{A(y'_i) = c_k} \mathbf{1}\\
    &=&\argmin_{A}\sum_{k=1}^C - z^2_{c_k}\sum_{A(y'_i) = c_k} \mathbf{1} \\
    &=&\argmin_{A}\sum_{k=1}^C \left(-2 z^2_{c_k}\sum_{A(y'_i) = c_k} \mathbf{1} + z^2_{c_k}\sum_{A(y'_i) = c_k} \mathbf{1}\right)\\
    &=&\argmin_{A} \sum_{k=1}^C \sum_{A(y'_i) = c_k} - 2z_{c_k}y'_i + z_{c_k}^2.
    \end{eqnarray*}
    Adding a constant $\sum_{i=1}^{N'}{y'_i}^2$ that does not affect the argmin, we have: 
    \begin{eqnarray*}
    &=&\argmin_{A}\sum_{i=1}^{N'}{y'_i}^2 + \sum_{k=1}^C \sum_{A(y'_i) = c_k} - 2z_{c_k}y'_i + z_{c_k}^2\\
    &=&\argmin_{A} \sum_{k=1}^C \sum_{A(y'_i) = c_k} {y'_i}^2 - 2z_{c_k}y'_i + z_{c_k}^2\\
    &=&\argmin_{A}(y'_i - z_{A(c_k)})^2.
    \end{eqnarray*}
    which is equivalent to the k-Means problem on the original dataset under the constraint that the equivalent points must be in the same cluster. 
\end{proof}
Thus, solving the weighted k-Means problem on the modified dataset provides the same results as solving the constrained k-Means on the original dataset.
\subsection{Proof of Theorem \ref{thm:kmeans_k_convex}}
\textbf{Theorem \ref{thm:kmeans_k_convex}}\textit{
(Convexity of Weighted k-Means Objective in Number of Clusters)
Recall $\text{Weighted\_k-Means}(C,\y',\w )$ from Definition \ref{def:weighted_kmeans} for number of clusters $C$, 1D points $\y'$, and weights $\w$. Then, we have 
\begin{eqnarray}
\text{Weighted\_k-Means}(C-1,\y',\w) + \text{Weighted\_k-Means}(C+1,\y',\w)
 \geq2\times \text{Weighted\_k-Means}(C,\y',\w). \label{eq:kmeans_k_convex2}
\end{eqnarray}
}
\begin{proof}
\begin{eqnarray}\label{eq:kmeans_k_concave}
\nonumber
\text{Weighted\_k-Means}(C-1,\y',\w) 
\nonumber
-\text{Weighted\_k-Means}(C,\y',\w)  \\
\geq \text{Weighted\_k-Means}(C,\y',\w) 
-\text{Weighted\_k-Means}(C+1,\y',\w).
\end{eqnarray}
Equation \ref{eq:kmeans_k_concave} is proved in \citet{aggarwal1994finding}. They show in Application V that a $\text{Weighted\_k-Means}(C,\y',\w)$ problem can get reduced to a ``minimum $C$-link path in a concave Monge DAG with $|\y'| + 1$ nodes''. Equation \ref{eq:kmeans_k_concave} is adapted directly from Corollary 7, rearranging it we have Equation \ref{eq:kmeans_k_convex2}.
\end{proof}

\section{Theorems directly adapted from GOSDT}\label{sec:old_bounds}
\begin{theorem}\label{thm:subtree_lb}
(Hierarchical Objective Lower Bound for Sub-trees). Let $R^c$ be the current best objective so far. Let $t$ be a tree such that the root node is split by a feature, where two sub-trees $t_{\lf}, t_{\ri}$ are generated with $H_{\lf}$ leaves for $t_{\lf}$ and $H_{\ri}$ leaves for $t_{\ri}$. The data captured by the left tree is $(\X_{\lf}, \y_{\lf})$ and the data captured by the right tree is $(\X_{\ri}, \y_{\ri})$. Then, the objective
lower bounds of the left sub-tree and right sub-tree are $b(t_{\lf}, \X_{\lf}, \y_{\lf}) $ and $b(t_{\ri}, \X_{\ri}, \y_{\ri})$, which obey $R(t_{\lf}, \X_{\lf}, \y_{\lf})\geq b(t_{\lf}, \X_{\lf}, \y_{\lf})$, and $R(t_{\ri}, \X_{\ri}, \y_{\ri}) \geq b(t_{\ri}, \X_{\ri}, \y_{\ri})$. If $b(t_{\lf}, \X_{\lf}, \y_{\lf}) > R^c$ or $b(t_{\ri}, \X_{\ri}, \y_{\ri}) > R^c$ or $b(t_{\lf}, \X_{\lf}, \y_{\lf}) + b(t_{\ri}, \X_{\ri}, \y_{\ri}) > R^c$, then $t$ is not an optimal tree, and none of  its child trees are optimal.
\end{theorem}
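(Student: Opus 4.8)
The plan is to exploit the fact that the root split partitions the data into two disjoint blocks, so the objective of $t$ decomposes additively across its two sub-trees. First I would write $R(t,\X,\y) = \mathcal{L}(t,\X,\y) + \lambda H_t$ and observe that, because every sample is routed either into $t_{\lf}$ (the $(\X_{\lf},\y_{\lf})$ block) or into $t_{\ri}$ (the $(\X_{\ri},\y_{\ri})$ block) and never into both, the loss is additive over this partition,
\[
\mathcal{L}(t,\X,\y) = \mathcal{L}(t_{\lf},\X_{\lf},\y_{\lf}) + \mathcal{L}(t_{\ri},\X_{\ri},\y_{\ri}).
\]
Similarly, the leaves of $t$ are exactly the disjoint union of the leaves of the two sub-trees, so $H_t = H_{\lf} + H_{\ri}$ and hence $\lambda H_t = \lambda H_{\lf} + \lambda H_{\ri}$. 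Combining the two facts yields the key identity
\[
R(t,\X,\y) = R(t_{\lf},\X_{\lf},\y_{\lf}) + R(t_{\ri},\X_{\ri},\y_{\ri}).
\]

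With this decomposition in hand, I would substitute the two given sub-tree lower bounds and use the nonnegativity of each sub-tree objective (the loss is a sum of squared errors and $\lambda H \geq 0$). For the first trigger condition, if $b(t_{\lf},\X_{\lf},\y_{\lf}) > R^c$, then dropping the nonnegative right-sub-tree objective gives
\[
R(t,\X,\y) \geq b(t_{\lf},\X_{\lf},\y_{\lf}) + R(t_{\ri},\X_{\ri},\y_{\ri}) \geq b(t_{\lf},\X_{\lf},\y_{\lf}) > R^c,
\]
and the second condition is identical with the roles of $t_{\lf}$ and $t_{\ri}$ swapped. For the third condition I would instead retain both bounds rather than discarding one: if $b(t_{\lf},\X_{\lf},\y_{\lf}) + b(t_{\ri},\X_{\ri},\y_{\ri}) > R^c$, then
\[
R(t,\X,\y) \geq b(t_{\lf},\X_{\lf},\y_{\lf}) + b(t_{\ri},\X_{\ri},\y_{\ri}) > R^c.
\]

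In every case $R(t,\X,\y) > R^c$. Since $R^c$ is the objective of an already-evaluated feasible tree, the optimal objective is at most $R^c$; a tree whose objective strictly exceeds $R^c$ therefore cannot be optimal, which is exactly the claim. The only genuine subtlety, and the step I would verify most carefully, is the additivity of both the loss and the leaf count across the root split: it silently relies on the disjointness of the data routed to the two sub-trees and on a consistent global normalization of $\mathcal{L}$ (the same $\tfrac{1}{N}$ factor that appears in the k-Means bounds). Once that additive identity is justified, the three cases follow immediately from nonnegativity, so the real work lies in the decomposition rather than in the case analysis.
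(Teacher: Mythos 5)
Your proof is correct, but note that the paper itself contains no argument for this theorem: its ``proof'' is a pointer to GOSDT \citep{lin2020generalized}, so what you have written is a self-contained reconstruction of the cited argument rather than a parallel to anything proved in this paper. Your reconstruction is exactly the right one, transported to the regression setting: the key identity $R(t,\X,\y) = R(t_{\lf},\X_{\lf},\y_{\lf}) + R(t_{\ri},\X_{\ri},\y_{\ri})$ follows from the disjointness of the data routed left and right and from $H_t = H_{\lf} + H_{\ri}$, and the three pruning conditions then follow from nonnegativity of each sub-tree objective (squared loss is nonnegative and $\lambda H \geq 0$) together with the fact that $R^c$, being the objective of an already-evaluated tree, upper-bounds the optimum. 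The subtlety you flag is the genuinely load-bearing one, and you resolve it correctly: additivity of $\mathcal{L}$ across the split requires that sub-tree losses be normalized by the global $N$ rather than by the number of samples each sub-tree captures, and the paper's conventions do satisfy this --- the proof of Theorem \ref{thm:hierarchical_lb} already uses additivity of $\mathcal{L}$ over disjoint collections of leaves, and the explicit $\frac{1}{N}$ factor in the k-Means bounds confirms the global normalization. Two minor observations: since any sensible lower bound $b$ is nonnegative here, the first two trigger conditions are mathematically subsumed by the third, but they are stated separately for algorithmic reasons --- as the paper's remark after the theorem indicates, a violation by one sub-tree lets you prune $t$ without ever constructing or bounding the other sub-tree; and in your first case the chain $R(t,\X,\y) \geq b(t_{\lf},\X_{\lf},\y_{\lf}) + R(t_{\ri},\X_{\ri},\y_{\ri}) \geq b(t_{\lf},\X_{\lf},\y_{\lf}) > R^c$ needs only $R(t_{\ri},\X_{\ri},\y_{\ri}) \geq 0$, which you correctly invoke.
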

\begin{proof}
This bound adapts directly from GOSDT \citep{lin2020generalized}, where the proof can be found.
\end{proof}
This bound can be applied to any tree, even if the tree is partially constructed. In a partially constructed tree $t$, if one of its subtrees has objective worse than current best objective $R^c$, we can prune tree $t$ and all of its child trees without constructing the other subtree.

\subsection*{Leaf Bounds}\label{sec:leaf_bound}
The following upper bounds on the number of leaves permit us to prune trees whose leaves exceed these upper bounds.
\begin{theorem}
(Upper Bound on the Number of Leaves). Let $H_t$ be the number of leaves of tree $t$ and let $R^c$ be the current best objective. For any optimal tree $t^{\ast}$ with $H_{t^{\ast}}$ leaves, it is true that:
\begin{equation}\label{eq:leaf_ub}
    H_{t^{\ast}} \leq \min \{ \lfloor R_c/\lambda \rfloor , 2^M\},
\end{equation}
 where $M$ is the number of features.
\end{theorem}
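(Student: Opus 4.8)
The plan is to treat the two arguments of the minimum separately, since they come from entirely different considerations: the first from nonnegativity of the loss together with optimality, and the second from a combinatorial count over the feature space.

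First I would establish $H_{t^{\ast}} \leq \lfloor R^c/\lambda\rfloor$. Writing the objective as $R(t^{\ast},\X,\y) = \mathcal{L}(t^{\ast},\X,\y) + \lambda H_{t^{\ast}}$ and using that the squared loss is nonnegative, $\mathcal{L}(t^{\ast},\X,\y)\geq 0$, immediately gives $\lambda H_{t^{\ast}} \leq R(t^{\ast},\X,\y)$. Because $t^{\ast}$ is optimal and $R^c$ is the objective of some feasible tree found so far, we have $R(t^{\ast},\X,\y)\leq R^c$, hence $\lambda H_{t^{\ast}}\leq R^c$, i.e.\ $H_{t^{\ast}}\leq R^c/\lambda$. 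Since $H_{t^{\ast}}$ is a nonnegative integer, I can take the floor to conclude $H_{t^{\ast}}\leq\lfloor R^c/\lambda\rfloor$.

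Second, I would show $H_{t^{\ast}}\leq 2^M$, using that the $M$ features are binary, so the feature space contains at most $2^M$ distinct feature vectors. The leaves of $t^{\ast}$ partition the feature space into disjoint regions, and each distinct feature vector appearing in the data lies in exactly one leaf region. The key observation is that an optimal tree has no empty leaf: if some leaf captured no sample, then its sibling subtree captures all points routed to their parent split, and replacing that split node by its non-empty child removes the empty leaf, strictly decreasing the penalty by $\lambda$ while leaving the loss unchanged, contradicting the optimality of $t^{\ast}$. Hence every leaf captures at least one sample and therefore contains at least one of the at most $2^M$ distinct feature vectors; since these regions are disjoint, there can be at most $2^M$ leaves. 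Combining the two parts yields $H_{t^{\ast}}\leq\min\{\lfloor R^c/\lambda\rfloor, 2^M\}$.

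The first bound is immediate; essentially all of the work is in the second. The crux is justifying that an optimal tree contains no empty leaf and that the operation of collapsing the split above an empty leaf is well defined and strictly improves the objective, which is where the binary-feature assumption and the precise splitting semantics of the tree model enter. An equivalent and perhaps cleaner route to the same bound is purely structural: splitting on a binary feature that already appears on a root-to-leaf path necessarily produces an empty branch, so in an optimal tree no feature repeats along any path, the depth is at most $M$, and a binary tree of depth $M$ has at most $2^M$ leaves. I would present whichever of these two formulations matches the paper's tree formalism most directly.
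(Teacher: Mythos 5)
Your proof is correct, and it is essentially the standard argument behind this bound: the paper itself gives no inline proof (it defers to OSDT \citep{HuRuSe2019}), and the cited proof proceeds exactly as you do, using $\lambda H_{t^{\ast}} \leq R(t^{\ast},\X,\y) \leq R^c$ with nonnegativity of the loss for the floor bound, and the fact that $M$ binary features admit at most $2^M$ distinct feature vectors (so an optimal tree, having no empty leaves, has at most $2^M$ leaves) for the second bound. Both of your justifications for the no-empty-leaf step --- collapsing the split above an empty leaf saves $\lambda$ without changing the loss, or equivalently observing that no binary feature can repeat along a root-to-leaf path so the depth is at most $M$ --- are sound, so your write-up supplies correctly the details the paper outsources.
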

\begin{proof}
This bound adapts directly from OSDT \citep{HuRuSe2019}, where the proof can be found.
\end{proof}

\begin{theorem}\label{thm:parent_num_leaves_ub}
(Parent-specific upper bound on the number of leaves). Let $t = (t_{\textrm{fix}}, \delta_{\textrm{fix}}, t_{\textrm{split}}, \delta_{\textrm{split}}, K, H_t)$ be a tree with child tree $t' = (t'_{\textrm{fix}}, \delta'_{\textrm{fix}}, t'_{\textrm{split}}, \delta'_{\textrm{split}}, K', H_{t'}) \in \sigma(t)$ with $H_{t'}$ leaves is a possibly optimal tree. Then:
\begin{equation}
     H_{t'} \leq \min \left\{ H_t + \left\lfloor \frac{R_c- \mathcal{L}(t_{\fix},\X, \y) - \lambda H_t}{\lambda} \right \rfloor, 2^M\right\}.
\end{equation}
\end{theorem}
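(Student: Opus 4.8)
The plan is to reuse the child-tree objective lower bound already in hand and combine it with the requirement that $t'$ be worth exploring, then sharpen the resulting real-valued inequality into an integer bound. Throughout I treat $t'$ as a candidate optimal tree, so that it need only be retained if $R(t',\X,\y) \le R^c$; any child tree violating this is pruned, so the bound is only asserted for the trees we actually keep (paralleling the ``for any optimal tree'' framing of the global leaf upper bound above).

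First I would invoke the child-tree objective lower bound obtained by combining Equation \ref{eq:child_lb} with Equation \ref{eq:fix_loss}, namely
\[
R(t',\X,\y) \;\ge\; \mathcal{L}(t_{\fix},\X,\y) + \lambda H_{t'}.
\]
Chaining this with the optimality/retention condition $R(t',\X,\y) \le R^c$ yields $\mathcal{L}(t_{\fix},\X,\y) + \lambda H_{t'} \le R^c$, which I rearrange (dividing by $\lambda>0$) into $H_{t'} \le \bigl(R^c - \mathcal{L}(t_{\fix},\X,\y)\bigr)/\lambda$. This is already the bound, but it is not yet in the centered-at-$H_t$ integer form claimed.

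The second step re-centers the inequality at $H_t$ and exploits integrality. Subtracting $H_t$ from both sides gives $H_{t'} - H_t \le \bigl(R^c - \mathcal{L}(t_{\fix},\X,\y) - \lambda H_t\bigr)/\lambda$. Since $t'$ is a descendant of $t$, the difference $H_{t'}-H_t$ is a nonnegative integer, and an integer that is at most a real number $x$ is at most $\lfloor x\rfloor$; hence $H_{t'}-H_t \le \bigl\lfloor (R^c - \mathcal{L}(t_{\fix},\X,\y) - \lambda H_t)/\lambda \bigr\rfloor$. Moving $H_t$ back to the right-hand side recovers the first term inside the minimum.

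Finally I would intersect this with the absolute structural cap: with $M$ binary features there are at most $2^M$ distinguishable leaves, so $H_{t'}\le 2^M$ by the preceding global upper bound on the number of leaves. Taking the minimum of the two upper bounds gives the stated result. The only delicate point is the floor step: one must verify that $H_{t'}-H_t$ is genuinely integer-valued and nonnegative (true because passing from a parent to a child only adds leaves), so that rounding down is legitimate and the bound is the tightest possible integer bound rather than merely its real-valued relaxation.
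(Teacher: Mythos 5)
Your proof is correct. Note that the paper itself gives no argument for this theorem---its ``proof'' is a citation to OSDT \citep{HuRuSe2019}---and what you have written is essentially that standard argument transplanted into this paper's notation: the hierarchical lower bound $R(t',\X,\y) \geq \mathcal{L}(t_{\fix},\X,\y) + \lambda H_{t'}$ (the combination of Equations \ref{eq:child_lb} and \ref{eq:fix_loss}), chained with the retention condition $R(t',\X,\y) \leq R^c$, re-centered at $H_t$, sharpened by integrality, and intersected with the structural cap $2^M$. One thing you did that deserves emphasis: you correctly supplied a hypothesis that the theorem statement leaves implicit. As literally written (``for any child tree $t' \in \sigma(t)$, then $H_{t'} \leq \dots$'') the claim is false, since nothing stops one from constructing a child tree with arbitrarily many leaves irrespective of $R^c$; the bound holds only for child trees that remain candidates for optimality, i.e., those satisfying $R(t',\X,\y) \leq R^c$, and its contrapositive is what licenses pruning any child exceeding the bound. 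Your handling of the floor is also sound: $H_{t'} - H_t$ is a positive integer, and an integer bounded above by a real $x$ is bounded by $\lfloor x \rfloor$, so the integer form is a legitimate tightening of the real-valued inequality.
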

\begin{proof}
This bound adapts directly from OSDT \citep{HuRuSe2019}, where the proof can be found.
\end{proof}

\subsection*{Splitting Bounds}\label{sec:split_bound}
When constructing a new child tree $t' =(t'_{\textrm{fix}}, \delta'_{\textrm{fix}}, t'_{\textrm{split}}, \delta'_{\textrm{split}}, K', H_{t'})$, $t'_{\textrm{split}}$ needs to be determined. Splitting bounds help determine which leaves in $t'$ \textit{cannot be further split} and which leaves \textit{must be further split}.

\begin{theorem}\label{thm:incre_pro_split}
(Incremental Progress Bound to Determine Splitting). For any optimal tree $t^{\ast}$, any parent node of its leaves must have loss at least $\geq \lambda$ when considered as a leaf.
\end{theorem}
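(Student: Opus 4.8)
The plan is to argue by a local merge (collapse) operation combined with the optimality of $t^\ast$. Fix an optimal tree $t^\ast$ and consider any internal node $p$ both of whose children are leaves; call these two sibling leaves $\ell_1$ and $\ell_2$, and let $\ell_p$ denote the single leaf obtained by undoing the split at $p$ (so $\ell_p$ captures exactly the union of the samples captured by $\ell_1$ and $\ell_2$). I read ``the loss contributed by the parent node'' as the reduction in loss achieved by its split, namely $\mathcal{L}(\ell_p,\X,\y) - \mathcal{L}(\ell_1,\X,\y) - \mathcal{L}(\ell_2,\X,\y)$, and the goal is to show this quantity is at least $\lambda$.

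First I would construct the merged tree $t'$ that is identical to $t^\ast$ everywhere except that the subtree rooted at $p$ is replaced by the single leaf $\ell_p$. Because merging only removes one split, $t'$ is a feasible tree in the same search space and has exactly one fewer leaf, $H_{t'} = H_{t^\ast} - 1$. Next, using additivity of the loss over leaves (the same decomposition $\mathcal{L}(t'_{\fix},\X,\y) = \mathcal{L}(t_{\fix},\X,\y) + \mathcal{L}(t'_{\fix}\setminus t_{\fix},\X,\y)$ used in the earlier proofs), the two trees agree on every leaf except at $p$, so $\mathcal{L}(t',\X,\y) - \mathcal{L}(t^\ast,\X,\y) = \mathcal{L}(\ell_p,\X,\y) - \mathcal{L}(\ell_1,\X,\y) - \mathcal{L}(\ell_2,\X,\y)$, which is exactly the parent's contribution defined above.

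Then I would invoke optimality: since $t^\ast$ minimizes the objective and $t'$ is a competitor, $R(t^\ast,\X,\y) \le R(t',\X,\y)$. Writing both objectives as $\mathcal{L} + \lambda H$ and substituting $H_{t'} = H_{t^\ast} - 1$ gives $\mathcal{L}(t^\ast,\X,\y) + \lambda \le \mathcal{L}(t',\X,\y)$, i.e.\ $\lambda \le \mathcal{L}(t',\X,\y) - \mathcal{L}(t^\ast,\X,\y)$. Combining with the loss decomposition from the previous step yields $\mathcal{L}(\ell_p,\X,\y) - \mathcal{L}(\ell_1,\X,\y) - \mathcal{L}(\ell_2,\X,\y) \ge \lambda$, which is the claim.

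I expect the main obstacle to be conceptual rather than computational: pinning down the precise meaning of ``a parent node contributes loss at least $\lambda$'' and confirming that the merged tree $t'$ is genuinely admissible, so that optimality applies to it. A secondary point worth stating carefully is that the argument only needs the merged leaf to predict the leaf-optimal constant (the target mean, for squared loss), so that $\mathcal{L}(\ell_p,\X,\y)$ is well defined; I would also remark that the inequality is tight exactly when a split barely pays for its extra $\lambda$ penalty, which is what makes this statement usable as a rule for deciding that a leaf \emph{cannot be further split}.
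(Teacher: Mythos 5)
Your argument is correct and uses precisely the paper's technique---delete a pair of sibling leaves $\ell_1,\ell_2$, add back their parent as a leaf $\ell_p$, and compare objectives using optimality of $t^\ast$ together with $H_{t'} = H_{t^\ast}-1$---but the inequality you actually establish, $\mathcal{L}(\ell_p,\X,\y) - \mathcal{L}(\ell_1,\X,\y) - \mathcal{L}(\ell_2,\X,\y) \geq \lambda$, is the paper's companion result, Theorem~\ref{thm:lb_incre_pro} (Lower Bound on Incremental Progress), rather than the statement at hand. The paper reads ``any parent node of its leaves must contribute loss at least $\lambda$'' as $\mathcal{L}(\ell_p,\X,\y) \geq \lambda$, i.e., the parent's \emph{own} loss when treated as a single leaf, which is what licenses the pruning rule stated after the theorems: a leaf whose loss is below $\lambda$ can never be profitably split, since the total loss it could possibly eliminate is less than the per-leaf penalty. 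Concretely, the paper bounds $R(t',\X,\y)-R(t^\ast,\X,\y) \leq \mathcal{L}(\ell_p,\X,\y) - \lambda$ by discarding the nonnegative terms $\mathcal{L}(\ell_1,\X,\y), \mathcal{L}(\ell_2,\X,\y) \geq 0$ and then invokes optimality, whereas you keep those terms and obtain the sharper incremental-progress bound. Since your inequality is stronger, there is no genuine gap: the theorem as the paper intends it follows in one line by dropping the two nonnegative leaf losses, so you should add that final weakening step and state the conclusion $\mathcal{L}(\ell_p,\X,\y) \geq \lambda$ explicitly. One shared caveat worth noting: both your construction and the paper's implicitly consider only parent nodes whose two children are both leaves (otherwise the merge does not reduce the leaf count by exactly one), which is the case the splitting rule needs; the statement for a parent with a non-leaf child would require collapsing the whole subtree instead.
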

\begin{proof}
Let $t^{\ast} = \{l_1, l_2, \dots, l_i, l_{i+1}, \dots, l_{H_{t^{\ast}}} \}$ be an optimal tree with $H_{t^{\ast}}$ leaves. $t' = \{l_1, l_2, \dots, l_{i-1}, l_{i+2}, \dots, l_{H_{t^{\ast}}, l_j}\}$ is a tree created by deleting a pair of leaves $l_i$ and  $l_{i+1}$ in $t^{\ast}$ and adding their parent node $l_j$. 
\begin{eqnarray*}
   R(t', \X, \y) -  R(t^{\ast}, \X, \y) &=&  \mathcal{L}(l_j,\X, \y) + \lambda (H_{t^{\ast}} - 1) -  \mathcal{L}(l_i,\X, \y)
    - \mathcal{L}(l_{i+1},\X, \y) - \lambda H_{t^{\ast}}\\
   &\leq&  \mathcal{L}(l_j,\X, \y) - \lambda
\end{eqnarray*}
where  $\mathcal{L}(l,\X, \y) \geq 0$ is loss of leaf $l$. And because $t^{\ast}$ is an optimal tree, $R(t', \X, \y) -  R(t^{\ast}, \X, \y) \geq 0$, and we have:
\[\mathcal{L}(l_j,\X, \y) - \lambda \geq 0.\]
\end{proof}

\begin{theorem}\label{thm:lb_incre_pro}
(Lower Bound on Incremental Progress). Consider any optimal tree $t^{\ast} = \{l_1, l_2, \dots, l_i, l_{i+1}, \dots, l_{H_{t^{\ast}}} \}$ with $H_{t^{\ast}}$ leaves. Let $t' = \{l_1, l_2, \dots, l_{i-1}, l_{i+2}, \dots, l_{H_{t^{\ast}}, l_j}\}$ be a tree created by deleting a pair of leaves $l_i$ and  $l_{i+1}$ in $t^{\ast}$ and adding their parent node $l_j$. The reduction in loss obeys:
\[\mathcal{L}(l_j,\X, \y) -  \mathcal{L}(l_i,\X, \y) - \mathcal{L}(l_{i+1},\X, \y) \geq \lambda.\]
\end{theorem}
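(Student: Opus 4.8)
The plan is to exploit the optimality of $t^{\ast}$ in exactly the same way as in the proof of Theorem \ref{thm:incre_pro_split}, but to retain the deleted leaves' losses rather than discarding them via nonnegativity. Since $t'$ is obtained from the optimal tree $t^{\ast}$ by collapsing the sibling pair $l_i, l_{i+1}$ into their parent $l_j$, optimality forces $R(t', \X, \y) - R(t^{\ast}, \X, \y) \geq 0$; this single inequality drives the entire argument.

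First I would write out the objective difference explicitly. Because the loss $\mathcal{L}$ is additive over leaves, every leaf other than $l_i, l_{i+1}$ in $t^{\ast}$ (and the replacement leaf $l_j$ in $t'$) appears identically in both trees and cancels. The regularization term contributes $\lambda(H_{t^{\ast}} - 1) - \lambda H_{t^{\ast}} = -\lambda$, since $t'$ has exactly one fewer leaf than $t^{\ast}$. This reduces the difference to the exact identity
\[
R(t', \X, \y) - R(t^{\ast}, \X, \y) = \mathcal{L}(l_j, \X, \y) - \mathcal{L}(l_i, \X, \y) - \mathcal{L}(l_{i+1}, \X, \y) - \lambda.
\]
Next I would combine this identity with $R(t', \X, \y) - R(t^{\ast}, \X, \y) \geq 0$ and rearrange, which yields $\mathcal{L}(l_j, \X, \y) - \mathcal{L}(l_i, \X, \y) - \mathcal{L}(l_{i+1}, \X, \y) \geq \lambda$, precisely the claim. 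The only substantive difference from Theorem \ref{thm:incre_pro_split} is that there one discards $-\mathcal{L}(l_i, \X, \y) - \mathcal{L}(l_{i+1}, \X, \y) \leq 0$ to obtain the coarser bound $\mathcal{L}(l_j, \X, \y) \geq \lambda$, whereas here I keep these two terms to obtain the tighter statement about the reduction in loss.

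I do not anticipate a genuine obstacle: once the objective is decomposed, the result is a one-line bookkeeping computation. The only point requiring care is the accounting---confirming that $H_{t'} = H_{t^{\ast}} - 1$ and that the unchanged leaves cancel exactly, so that no slack is introduced. Because the identity is kept exact rather than bounded, the resulting inequality is the strongest one obtainable from this merge argument.
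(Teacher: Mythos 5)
Your proposal is correct and matches the paper's proof essentially verbatim: you form the exact identity $R(t',\X,\y) - R(t^{\ast},\X,\y) = \mathcal{L}(l_j,\X,\y) - \mathcal{L}(l_i,\X,\y) - \mathcal{L}(l_{i+1},\X,\y) - \lambda$ using additivity of the leaf losses and the one-leaf reduction in the penalty, then invoke optimality of $t^{\ast}$ to conclude. Your observation that this theorem keeps the terms that Theorem \ref{thm:incre_pro_split} discards is also exactly how the paper's two proofs relate.
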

\begin{proof}
 \begin{eqnarray*}
 R(t', \X, \y) -  R(t^{\ast}, \X, \y)&=&\mathcal{L}(l_j,\X, \y) + \lambda (H_{t^{\ast}} - 1) -  \mathcal{L}(l_i,\X, \y) - \mathcal{L}(l_{i+1},\X, \y) - \lambda H_{t^{\ast}}\\
 &=& \mathcal{L}(l_j,\X, \y)-  \mathcal{L}(l_i,\X, \y) - \mathcal{L}(l_{i+1},\X, \y) - \lambda.
 \end{eqnarray*}
Since $t^{\ast}$ is an optimal tree, we have $R(t', \X, \y) -  R(t^{\ast}, \X, \y) \geq 0$, and thus:
\[\mathcal{L}(l_j,\X, \y)-  \mathcal{L}(l_i,\X, \y) - \mathcal{L}(l_{i+1},\X, \y) \geq \lambda.\]
\end{proof}

When constructing new trees, if a leaf has loss less than $\lambda$ (it fails to meet Theorem \ref{thm:incre_pro_split}), then it cannot be further split. If a pair of leaves in that tree reduce loss from their parent node by less than $\lambda$ (they fail to meet Theorem \ref{thm:lb_incre_pro}), then at least one of this pair of leaves must be further split to search for optimal trees.

\subsection*{Permutation Bound}\label{sec:permutation}
\begin{theorem}\label{thm:permutation}
(Leaf Permutation Bound). Let $\pi$ be any permutation of $\{1 \dots H_t\}$. Let $t = \{l_1, l_2, \dots,l_{H_{t}}\}$, $T =\{l_{\pi(1)}, l_{\pi(2)}, \dots,l_{\pi(H_t)}\}$, that is, the leaves in $T$ are a permutation of the leaves in $t$. The objective lower bounds of $t$ and $T$ are the same and their child trees correspond to permutations of each other.
\end{theorem}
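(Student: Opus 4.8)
The plan is to exploit the fact that every quantity appearing in the objective lower bound is a symmetric function of the leaf set, so that reindexing the leaves cannot change its value. First I would recall that the loss of a tree decomposes as a sum over its leaves, $\mathcal{L}(t,\X,\y) = \sum_{i=1}^{H_t} \mathcal{L}(l_i,\X,\y)$, and that the same additive decomposition applies to the fixed-leaf loss $\mathcal{L}(t_{\fix},\X,\y)$. Since $T$ contains exactly the same leaves as $t$, merely reindexed by $\pi$, the permuted sum $\sum_{i=1}^{H_t} \mathcal{L}(l_{\pi(i)},\X,\y)$ equals $\sum_{i=1}^{H_t} \mathcal{L}(l_i,\X,\y)$ by commutativity of addition; likewise the leaf count $H_t$ is manifestly invariant under permutation. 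Hence the objective lower bound $\mathcal{L}(t_{\fix},\X,\y) + \lambda H_t$ (and in fact the full objective $R$) takes identical values on $t$ and $T$, which settles the first claim.

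Next I would address the child-tree correspondence by constructing an explicit bijection between $\sigma(t)$ and $\sigma(T)$. A child tree is produced by selecting a subset of the splitting leaves $t_{\splita}$ and replacing each chosen leaf with the leaves obtained by splitting it on some admissible feature. Because $\pi$ is only a relabeling and the admissibility of a split depends solely on the data $(\X,\y)$ captured by a leaf rather than on its index, the identical collection of split operations is available in $T$. I would therefore map each child $t' \in \sigma(t)$ to the tree $T'$ obtained by applying the same splits to the correspondingly relabeled leaves of $T$, with inverse given by applying $\pi^{-1}$. By the first part, corresponding children share the same loss, leaf count, and lower bound, so $t'$ and $T'$ are themselves related by a permutation of their leaves, as claimed.

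The main obstacle will be formal rather than mathematical: I must pin down precisely how a permutation $\pi$ acts on the structured tuple $(t_{\fix}, \delta_{\fix}, t_{\splita}, \delta_{\splita}, K, H_t)$, in particular how it permutes the fixed and splitting leaves jointly with their associated split assignments $\delta_{\fix}$ and $\delta_{\splita}$, so that the induced map on child trees is well defined and genuinely bijective. Once the action of $\pi$ on the tuple representation is fixed, every assertion collapses to the two elementary observations that addition is commutative and that set membership is order-independent, leaving no substantive computation to perform.
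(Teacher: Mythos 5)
Your proof is correct and takes essentially the same route as the source: the paper gives no argument of its own for Theorem \ref{thm:permutation}, deferring to OSDT \citep{HuRuSe2019}, and the argument there is precisely your symmetry reasoning—the bound $\mathcal{L}(t_{\fix},\X,\y)+\lambda H_t$ depends only on the set of leaves (additive loss plus a leaf count), hence is permutation-invariant, and applying identical splits to the relabeled splitting leaves yields a bijection between $\sigma(t)$ and $\sigma(T)$ whose corresponding children are again leaf permutations of each other. Your closing caveat about specifying how $\pi$ acts on the tuple $(t_{\fix}, \delta_{\fix}, t_{\splita}, \delta_{\splita}, K, H_t)$ is the right formal point, and it resolves as you say once the fixed/splitting designation travels with each leaf rather than with its index.
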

\begin{proof}
This bound adapts directly from OSDT \citep{HuRuSe2019}, where the proof can be found.
\end{proof}
This bound avoids duplicate computation of trees with leaf permutation.
\subsection*{Subset Bound}\label{sec:subset}
\begin{theorem}\label{thm:subset}
Let $t$ and $T$ to be two trees
with the same root node, where $t$ uses feature $f_1$ to split the root node and $T$ uses feature $f_2$ to split the root node. Let $t_1, t_2$ be subtrees of $t$ under its root node, and $(\X_{t_1}, \y_{t_1}), (\X_{t_2}, \y_{t_2})$ be samples captured by $t_1$ and $t_2$. Similarly, let $T_1, T_2$ be subtrees of $T$ under its root node, and $(\X_{T_1}, \y_{T_1}), (\X_{T_2}, \y_{T_2})$ be samples captured by $T_1$ and $T_2$. Suppose $t_1, t_2 $ are optimal trees for $(\X_{t_1}, \y_{t_1}), (\X_{t_2}, \y_{t_2})$ respectively, and $T_1, T_2$ are optimal trees for $(\X_{T_1}, \y_{T_1}), (\X_{T_2}, \y_{T_2})$ respectively. If $R(t_1,\X_{t_1}, \y_{t_1}) \leq R(T_1,\X_{T_1}, \y_{T_1}) $ and $(\X_{t_2}, \y_{t_2}) \subset (\X_{T_2}, \y_{T_2})$, then $R(t, \X,\y ) \leq R(T, \X,\y )$.
\end{theorem}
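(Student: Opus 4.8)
The plan is to reduce the claim to two component inequalities by exploiting the additive decomposition of the objective across a root split. Since $t$ splits its root by $f_1$ into subtrees $t_1, t_2$ whose captured data $(\X_{t_1},\y_{t_1})$ and $(\X_{t_2},\y_{t_2})$ partition $(\X,\y)$, and since both the leaf loss and the leaf penalty $\lambda H$ decompose over this partition, I would first record
\[
R(t,\X,\y) = R(t_1,\X_{t_1},\y_{t_1}) + R(t_2,\X_{t_2},\y_{t_2}),
\]
and symmetrically $R(T,\X,\y) = R(T_1,\X_{T_1},\y_{T_1}) + R(T_2,\X_{T_2},\y_{T_2})$. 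Given this, it suffices to establish $R(t_1,\X_{t_1},\y_{t_1}) \leq R(T_1,\X_{T_1},\y_{T_1})$, which is a hypothesis, together with $R(t_2,\X_{t_2},\y_{t_2}) \leq R(T_2,\X_{T_2},\y_{T_2})$, and then add them.

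The second inequality is where the subset hypothesis $(\X_{t_2},\y_{t_2}) \subset (\X_{T_2},\y_{T_2})$ enters. The key auxiliary fact I would establish is a monotonicity lemma for the optimal objective: writing $R^{\ast}(D) := \min_{\tau} R(\tau, D)$, if $A \subseteq B$ then $R^{\ast}(A) \leq R^{\ast}(B)$. To prove it, take $\tau^{\ast}$ optimal for $B$ and evaluate it on $A$. The number of leaves is unchanged, so only the loss term moves, and for each leaf the within-leaf sum of squared residuals can only shrink when points are removed: discarding the points in $B \setminus A$ drops nonnegative squared-error terms, and re-fitting the leaf mean to the surviving points in $A$ can only further decrease that residual. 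Hence $\mathcal{L}(\tau^{\ast}, A) \leq \mathcal{L}(\tau^{\ast}, B)$, giving $R^{\ast}(A) \leq R(\tau^{\ast}, A) \leq R(\tau^{\ast}, B) = R^{\ast}(B)$.

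With the lemma in hand, the second inequality follows in one line, using only the optimality of $t_2$:
\[
R(t_2,\X_{t_2},\y_{t_2}) = R^{\ast}(\X_{t_2},\y_{t_2}) \leq R^{\ast}(\X_{T_2},\y_{T_2}) \leq R(T_2,\X_{T_2},\y_{T_2}),
\]
where the first equality holds because $t_2$ is optimal on its captured data, the middle step is the monotonicity lemma applied to the subset relation, and the last step is simply that $R^{\ast}$ is a minimum over all trees (so $T_2$ need not even be optimal here). Adding this to the hypothesized first inequality and invoking the decomposition then yields $R(t,\X,\y) \leq R(T,\X,\y)$.

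I expect the main obstacle to be the monotonicity lemma, specifically justifying $\mathcal{L}(\tau^{\ast}, A) \leq \mathcal{L}(\tau^{\ast}, B)$ cleanly under the global $\tfrac{1}{N}$ normalization of the loss: one must be careful that the prediction in each leaf is re-optimized as the mean of the restricted data $A$, so the comparison is genuinely ``optimal means on $A$'' versus ``optimal means on $B$,'' which is what makes the residual non-increasing. A secondary point worth stating explicitly is that the two root splits induce complementary partitions, so $(\X_{t_2},\y_{t_2}) \subset (\X_{T_2},\y_{T_2})$ forces $\X_{T_1} \subset \X_{t_1}$; this is not needed for the argument but confirms the hypotheses are mutually consistent and clarifies the geometry.
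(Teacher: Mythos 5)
Your proposal is correct, and it is essentially a self-contained reconstruction of the argument the paper only cites: the paper proves nothing inline here, deferring entirely to GOSDT \citep{lin2020generalized}, whose subset bound rests on exactly your two ingredients — additivity of the objective across the root split ($H_t = H_{t_1} + H_{t_2}$ and leaf losses summing over the partition, valid because $\mathcal{L}$ is normalized by the \emph{global} $N$ throughout this paper) and monotonicity of the optimal objective under taking subsets of the data. Your one genuine addition is the regression-specific care in the monotonicity lemma: in the classification setting of GOSDT the loss of a fixed tree is trivially monotone under removing points, whereas for squared error the leaf predictions are refit means, so the comparison needs the two-step chain $\sum_{i \in A}(y_i - \bar{y}_A)^2 \leq \sum_{i \in A}(y_i - \bar{y}_B)^2 \leq \sum_{i \in B}(y_i - \bar{y}_B)^2$, the first inequality because the mean minimizes within-leaf squared error and the second because dropped terms are nonnegative; you identify and discharge exactly this point. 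Two small remarks: your final chain does not even need the hypothesis that $T_2$ is optimal (only that $t_2$ is), which slightly strengthens the statement, and your closing observation that $\X_{T_1} \subset \X_{t_1}$ follows from complementarity is correct but, as you say, not needed. No gaps.
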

\begin{proof}
This bound adapts directly from GOSDT \citep{lin2020generalized}, where the proof can be found.
\end{proof}
Similar to Theorem  \ref{thm:subtree_lb}, this bound ensures that we can safely prune a partially constructed tree without harming optimality. It checks whether subtree $t_1$ has a better objective than $T_1$, despite handling more data.





\section{Experiment Details}
In the following subsections, we provide details on the data sets, pre-processing, and experimental setup used in \S \ref{sec:exp}.

\subsection{Datasets}\label{sec:dataset}
We use twelve regression datasets. Ten of them are from the UCI Machine Learning Repository \citep{Dua:2019}, including \textbf{Airfoil Self-Noise, Auction Verification, Optical Interconnection Network, Real Estate Valuation, Seoul Bike Sharing Demand, Servo, Synchronous Machine,  Yacht Hydrodynamics, Energy efficiency}, and \textbf{Individual Household Electric Power Consumption}. \textbf{Air quality} comes from \citet{chambers1983graphical} and \textbf{Medical Cost Personal} is from \citet{kaggle:insurance}. We predict the scaled sound pressure level for Airfoil Self-Noise, the runtime of verification procedure for the Auction Verification dataset, the channel utilization for the Optical Interconnection Network dataset, the house price of unit area for the Real Estate Valuation dataset, the rented bike count for the Seoul Bike Sharing Demand dataset, the rise time of a servomechanism for Servo, the excitation current of a synchronous machine for the Synchronous Machine dataset, the residuary resistance per unit weight of displacement for Yacht Hydrodynamics, and the mean ozone in parts per billion from 1300 to 1500 hours at Roosevelt Island for Air Quality, the individual medical costs billed by health insurance for Medical Cost Personal, the heating load and cooling load for Energy Efficiency, and the global active power for Individual Household Electric Power Consumption.

\subsection{Preprocessing}
First, we removed all observations with missing values. 
Second, since we performed hundreds of experiments, each of which required substantial computation time and each needed to be solved to provable optimality, in all cases below where we transformed a continuous feature into binary features, we discretized the feature into equal-width partitions and used one-hot encoding. We preprocessed datasets as follows:\\\\
\textbf{Airfoil Self-Noise (airfoil):} We discretized each of the features \textit{frequency, angle of attack, suction side displacement thickness} into 4 categories.\\  
\textbf{Air Quality (airquality):} We discretized each of features \textit{solar R, temp, wind, day} into 4 categories.\\
\textbf{Auction Verification (auction):} We discretized each continuous variable by using a binary feature to encode a threshold
between each pair of adjacent values; the threshold is set equal to the average of the two surrounding values. The classification label is treated as a categorical feature.\\
\textbf{Optical Interconnection Network (optical):} We discretized each of \textit{processor utilization, channel waiting time, input waiting time, network response time} into 4 categories.\\
\textbf{Real Estate Valuation (real-estate):} We discretized each continuous feature into 4 categories.\\
\textbf{Seoul Bike Sharing Demand (seoul-bike):} We discretized each continuous feature into 4 categories.\\
\textbf{Servo (servo):} We directly use this dataset that only contains categorical features.\\
\textbf{Synchronous Machine (sync):} We discretized each feature into 4 categories.\\
\textbf{Yacht Hydrodynamics (yacht):} We discretized each of \textit{Beam-draught ratio, Froude number} into 4 categories.\\
\textbf{Medical Cost Personal (insurance):} We discretized each of \textit{age, bmi} into 4 categories.\\
\textbf{Energy efficiency (enb-heat, enb-cool)}: We discretized each of \textit{X1 Relative Compactness, X2 Surface Area} into 4 categories. \textbf{enb-heat} predicts heating load and \textbf{enb-cool} predicts cooling load.\\
\textbf{Individual Household Electric Power Consumption (household):} We transformed the \textit{Date} feature into \textit{Month}, \textit{Time} into \textit{Hour}. Then we discretized each of \textit{Month, Hour, Global\_reactive\_power, Voltage, Global\_intensity} into 4 categories.\\
\textbf{Table \ref{tab:data} summarizes all datasets after preprocessing.}
\begin{table}[ht]
    \small
    \centering
    \begin{tabular}{|c|c|c|c|c|}\hline 
    Dataset & Samples & Orig. Features & Encoded Binary Features & Prediction Target\\\hline
    airfoil & 1503 & 5 & 17 & scaled sound pressure level\\\hline
    airquality & 111 & 6 & 17 & ozone \\ \hline
    auction & 2043 & 8 & 48 &verification time \\\hline
    optical & 640 & 9 & 29 & channel utilization \\\hline 
    real-estate & 414 & 6 & 18&house price of unit area \\\hline
    seoul-bike & 8760 & 12 & 32 & rented bike count\\\hline
    servo & 167 & 4 & 15 & class\\\hline
    sync & 557& 4 & 12 & ``If'' (current excitation of synchronous machine)\\\hline
    yacht &308 & 6 & 35 & residuary resistance per unit weight of displacement\\\hline
    insurance & 1338 & 6 & 16 & charges\\\hline
    enb-heat & 768 & 8 & 27 & Y1 heating load\\\hline 
    enb-cool & 768 & 8 & 27 & Y2 cooling load\\\hline
    household & 2,049,280 & 5 & 15 & global active power \\\hline
    \end{tabular}
    \caption{Datasets Summary}
    \label{tab:data}
\end{table}
\subsection{Experimental Platform}
We ran all experiments on a TensorEX TS2-673917-DPN Intel Xeon Gold 6226 Processor, 2.7Ghz (768GB RAM 48 cores). We set a 5-minute time limit for objective optimality (Section \ref{exp:loss_vs_sparsity}) and cross-validation experiments (Section \ref{exp:cv}) and 30-minute time limit for running time and scalability experiments(Section \ref{exp:time_vs_sparsity}, \ref{exp:scalability}, \ref{exp:ablation_lb} and \ref{exp:profile}). The memory limit is 200GB. All algorithms ran single-threaded. 
\subsection{Software Packages}
\textbf{InterpretableAI (IAI):}\\ 
We used OptimalTreeRegressor in version 3.0.1 of IAI (https://docs.interpretable.ai/stable/). We were given a free license for this software. Source code was not available.\\ 
\textbf{EvolutionaryTree (Evtree):} \\
We used the CRAN package of evtree, version 1.0-8 (https://cran.r-project.org/web/packages/evtree/index.html).\\
\textbf{Classification and Regression Tree (CART):}\\ The Python implementation from Sci-Kit Learn 1.1.1.\\
\textbf{Generalized,Unbiased,Interaction Detection and Estimation (GUIDE):} \\We used the executable file(version 40.2) from  (https://pages.stat.wisc.edu/$\sim$loh/guide.html). Source code was not available.\\
\section{Experiment: Loss vs$.$ Sparsity}\label{exp:loss_vs_sparsity}
\noindent\textbf{Collection and Setup:} We ran this experiment on  8 datasets: \textit{airfoil, airquality, real-estate, seoul-bike, servo, sync, yacht, insurance}. We trained models on the entire dataset to measure time to convergence/optimality. All runs that exceeded the time limit of 5 minutes were discarded (for depths of 7-9, time-outs occurred for OSRT and evtree, whereas for depth 6 or less, most runs are within the 5-minute time limits, please see Section \ref{sec:timeout} for run time statistics). For each dataset, we ran algorithms with different configurations:
\begin{itemize}
    \item CART: We ran this algorithm with 8 different configurations: depth limit, $d$, ranging from 2 to 9, and a corresponding maximum leaf limit $2^d$. All other parameters were set to the default.
    \item GUIDE: We ran this algorithm with 8 different configurations: depth limit, $d$, ranging from 2 to 9, and a corresponding maximum leaf limit $2^d$. The minimum leaf node size was set to 2. All other parameters were set to the default.
    \item IAI: We ran this algorithm with $8 \times 20$ different configurations: depth limits ranging from 2 to 9, and 20 different regularization coefficients (0.0001, 0.0002, 0.0005, 0.001, 0.002, 0.003, $\dots$ 0.009, 0.01, 0.035, 0.055, 0.08, 0.1, 0.105, 0.2, 0.5). The random seed was 1. All other parameters were set to the default.
    \item Evtree: We ran this algorithm with $8 \times 20$ different configurations: depth limits ranging from 2 to 9, and 20 different regularization coefficients ($0.1, 0.2, 0.3, \dots , 0.9, 1, 1.1, 1.2 \dots 2 $). The minimum leaf node size was set to 1, and the minimum internal node size was set to 2. The random seed was set to 666. All other parameters were set to the default.
    \item OSRT (our method): We ran this algorithm with $8 \times 20$ different configurations: depth limits ranging from 2 to 9, and 20 different regularization coefficients (0.0001, 0.0002, 0.0005, 0.001, 0.002, 0.003, $\dots$ 0.009, 0.01, 0.035, 0.055, 0.08, 0.1, 0.105, 0.2, 0.5).
\end{itemize}

\textit{Note: OSRT and IAI have the exact same objective function while Evtree has a slightly different regularization term: the regularization coefficient depends on the number of samples, $N$. Therefore we used a different scale of coefficients for Evtree.}\\

\noindent\textbf{Calculations:} We drew one plot (training loss vs$.$ number of leaves) for each combination of dataset and depth limit. Under the same depth limit, runs of one algorithm with different regularization coefficients may generate trees with the same number of leaves. In this case, we plotted the median loss of those trees and showed the best and worst loss among these trees as lower and upper error values respectively. These plots do not display trees where the number of leaves exceeds 30, as these tend to be uninterpretable and overfitted.   

\noindent\textbf{Results:} Figures \ref{fig:lvs:airfoil}-\ref{fig:lvs:seoul-bike}
show that \ourmethod{} consistently produces optimal trees that minimize the objective defined in Equation \ref{eq:obj2}, while IAI and Evtree lose optimality when the depth limit increases and regularization coefficient decreases. \ourmethod{}, which provides provably optimal trees, defines a frontier between training loss and the number of leaves. These plots also show how far away the CART and GUIDE objectives are from the optimal solution. \textit{It is not possible to determine whether the solutions of any method are optimal or close to optimal, without comparing to OSRT, because OSRT is the only method with an optimality guarantee.} IAI and Evtree are likely to find optimal trees under shallow depth constraints (2 and 3), but often fail once the depth constraint is greater than 3. Note that, often, all methods do achieve an optimal solution, which means there is no room for improvement on that problem.

We also observed high variance in the loss of evtrees (and sometimes IAI trees) under the same depth limit, even with a fixed random seed (see Figure \ref{fig:lvs:airfoil} at depths 5 and 6, Figure \ref{fig:lvs:airquality} at depths 5, 6, 7, 8 and 9, Figure \ref{fig:lvs:real-estate} at depths 5, 6 and 8, and Figure \ref{fig:lvs:servo} at depths 7 and 8). Given a dataset and depth limit, if two regularization coefficients result in trees with the same number of leaves, those should have the same loss, otherwise at least one of them must be sub-optimal. We observe that sometimes they can be very far from optimal. We discuss the uncontrollability of IAI and Evtree in Section \ref{exp:control}. 

\begin{figure*}[htbp]
    \centering
    \includegraphics[width=0.4\textwidth]{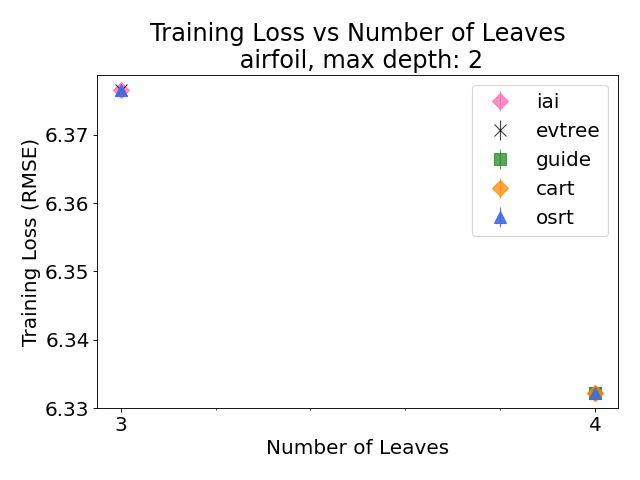}
    \includegraphics[width=0.4\textwidth]{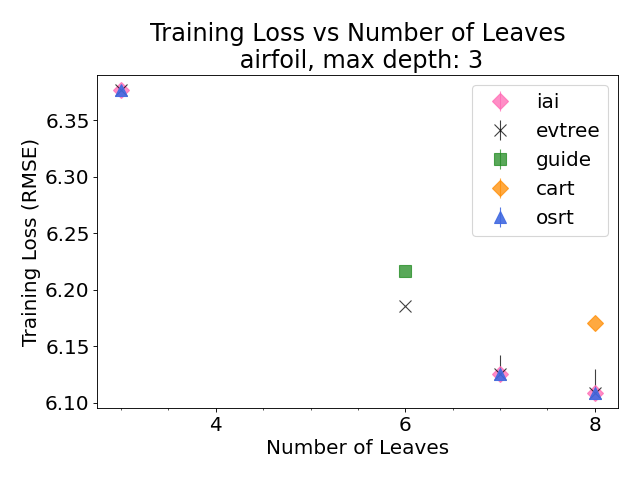}
    \includegraphics[width=0.4\textwidth]{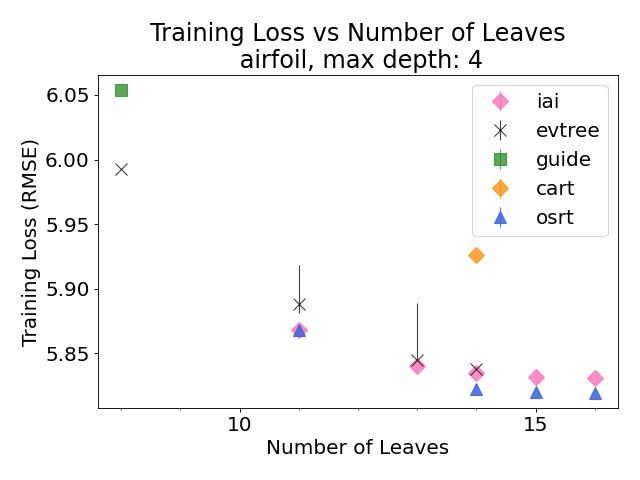}
    \includegraphics[width=0.4\textwidth]{figures/binary_settings/loss_vs_sparsity/airfoil_depth_5.png}
    \includegraphics[width=0.4\textwidth]{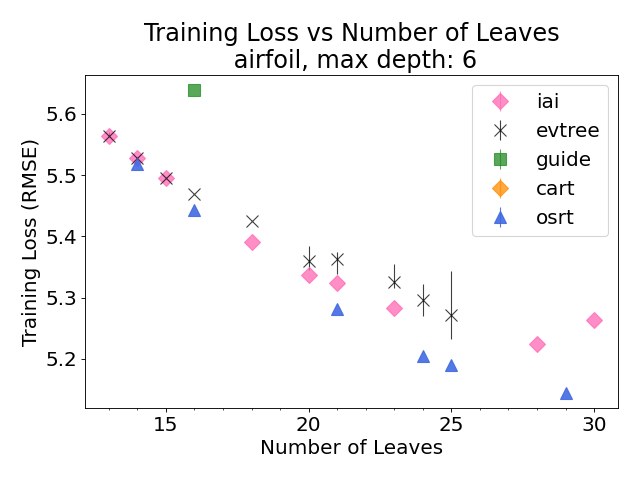}
    \includegraphics[width=0.4\textwidth]{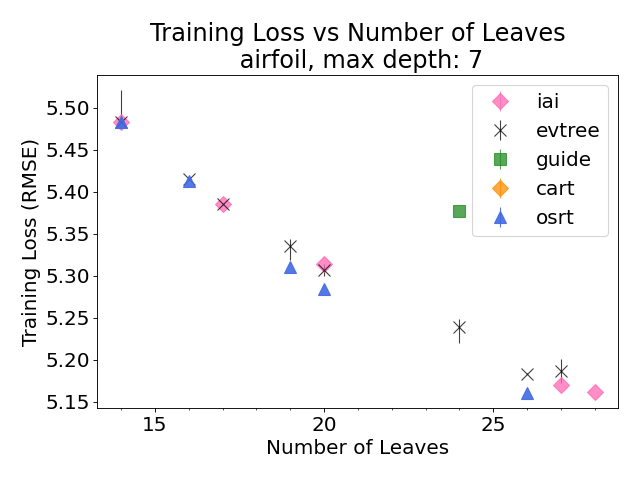}
    \includegraphics[width=0.4\textwidth]{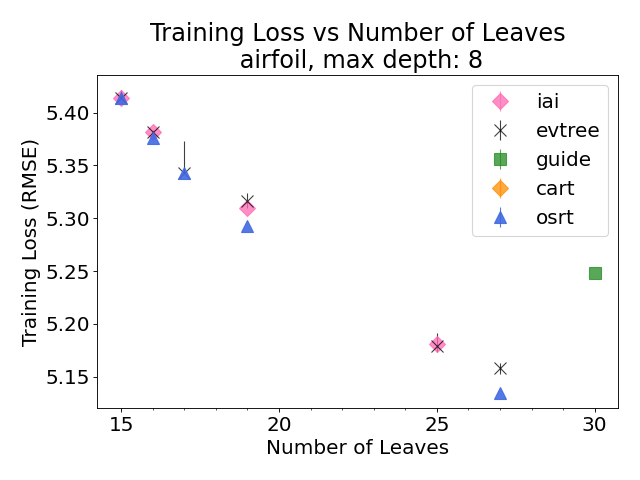}
    \includegraphics[width=0.4\textwidth]{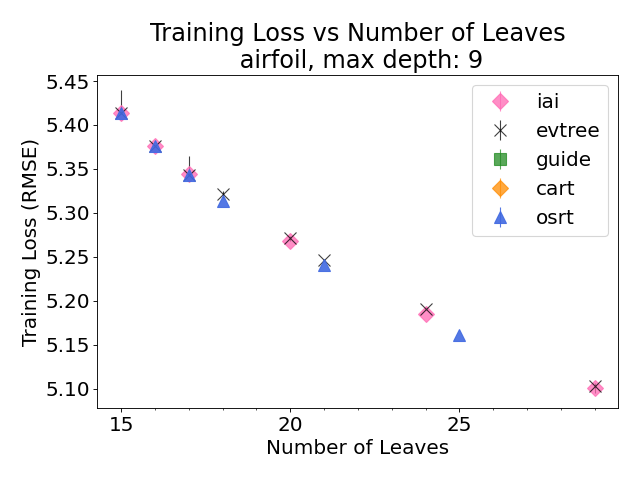}
    \caption{Training loss achieved by IAI, Evtree, GUIDE, CART and OSRT as a function of number of leaves on dataset: airfoil.}
    \label{fig:lvs:airfoil}
\end{figure*}

\begin{figure*}[htbp]
    \centering
    \includegraphics[width=0.4\textwidth]{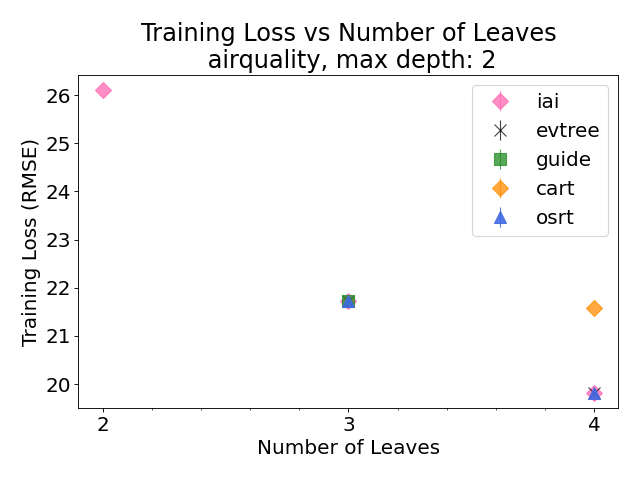}
    \includegraphics[width=0.4\textwidth]{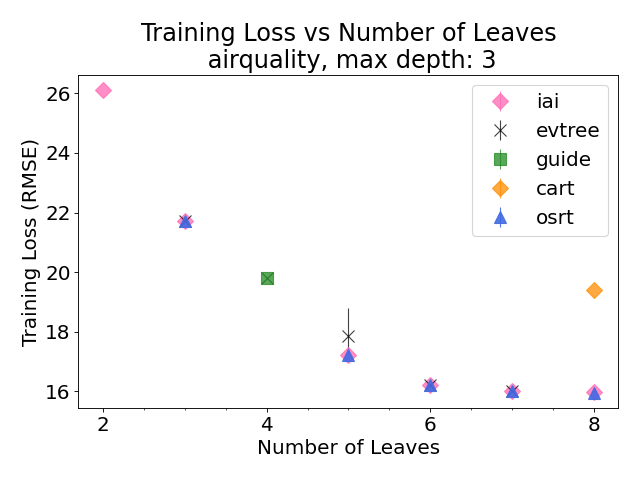}
    \includegraphics[width=0.4\textwidth]{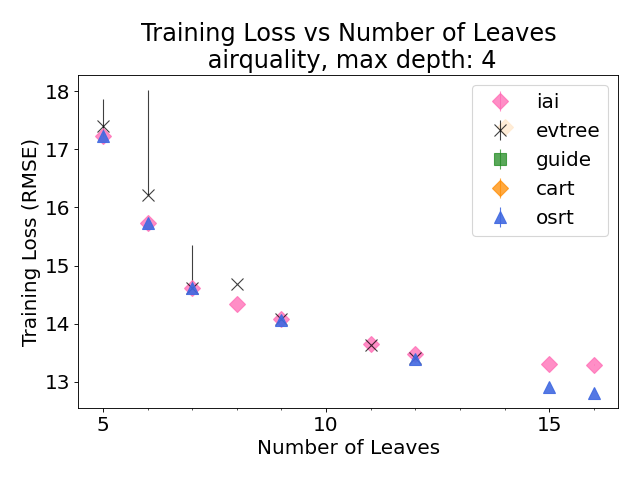}
    \includegraphics[width=0.4\textwidth]{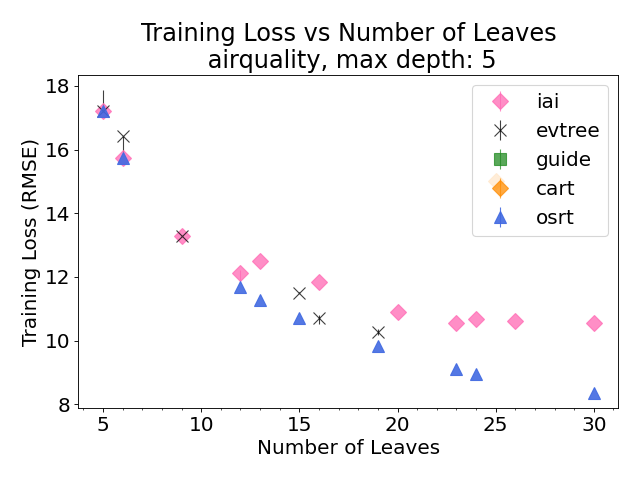}
    \includegraphics[width=0.4\textwidth]{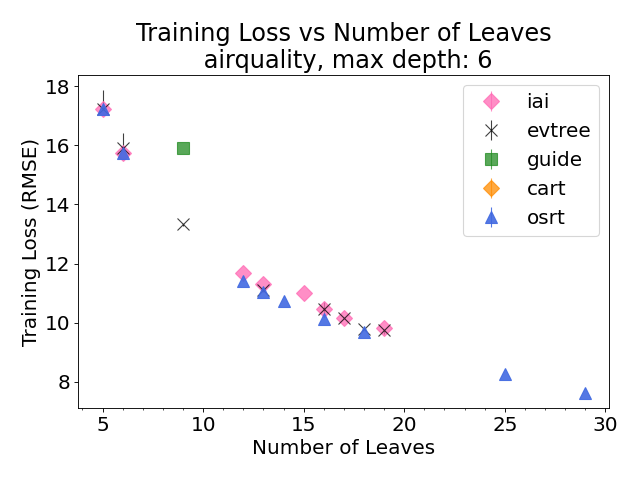}
    \includegraphics[width=0.4\textwidth]{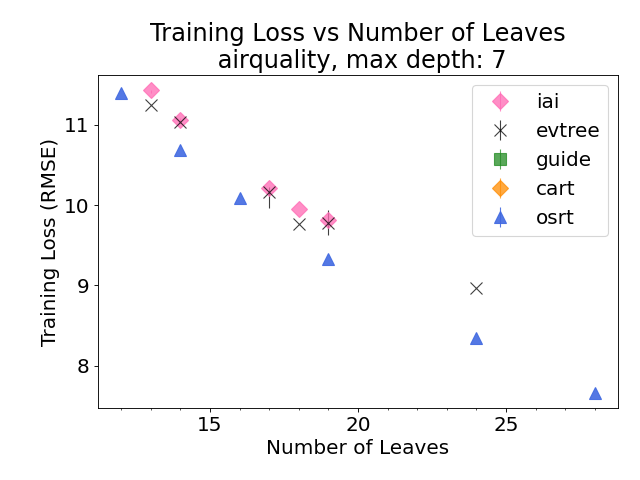}
    \includegraphics[width=0.4\textwidth]{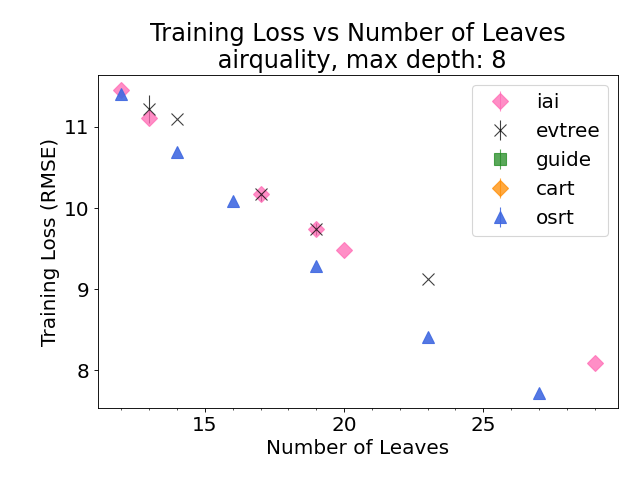}
    \includegraphics[width=0.4\textwidth]{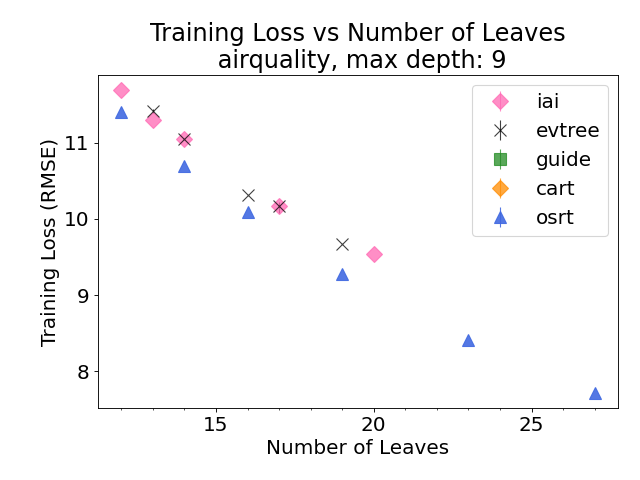}
    
    \caption{Training loss achieved by IAI, Evtree, GUIDE, CART and OSRT as a function of number of leaves on dataset: airquality.}
    \label{fig:lvs:airquality}
    
\end{figure*}

\begin{figure*}[htbp]
    \centering
    \includegraphics[width=0.4\textwidth]{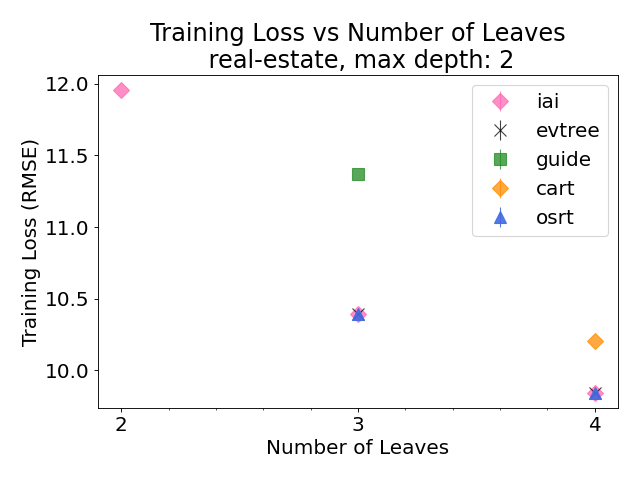}
    \includegraphics[width=0.4\textwidth]{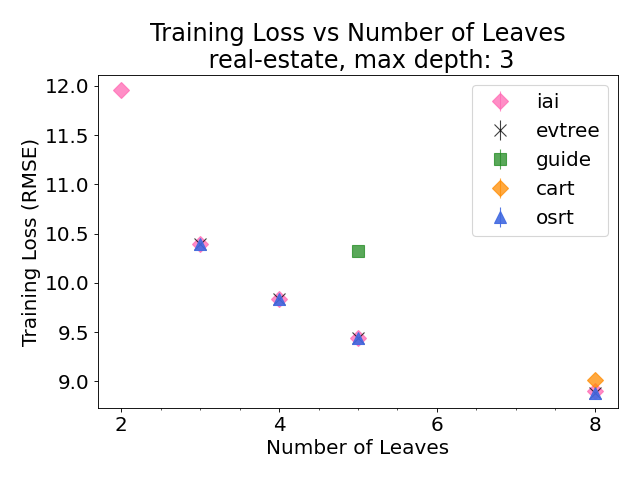}
    \includegraphics[width=0.4\textwidth]{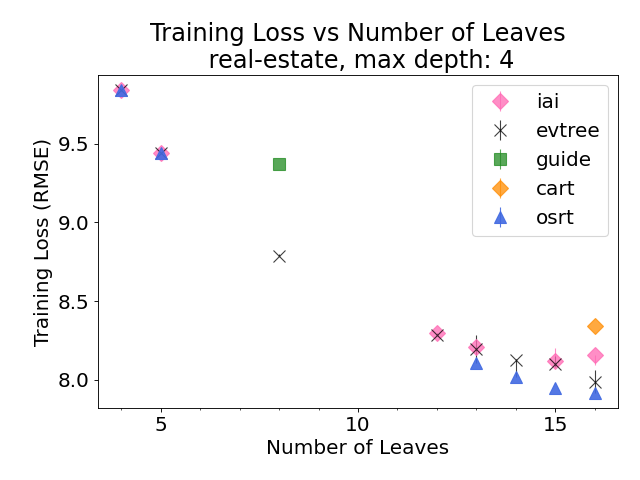}
    \includegraphics[width=0.4\textwidth]{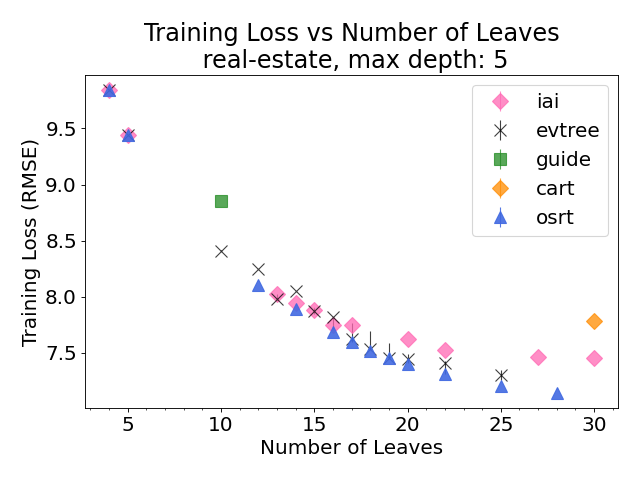}
    \includegraphics[width=0.4\textwidth]{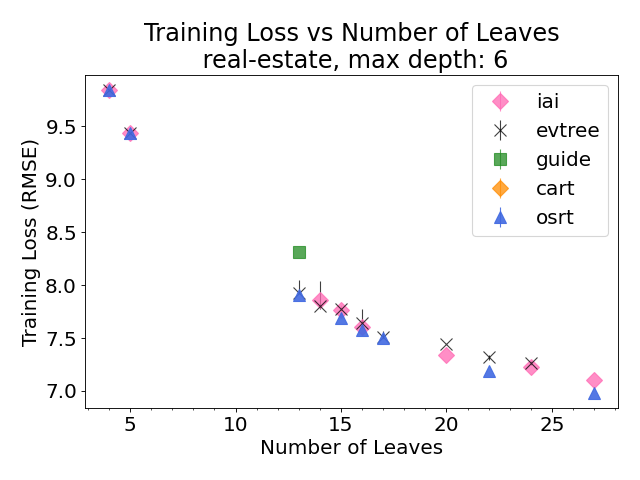}
    \includegraphics[width=0.4\textwidth]{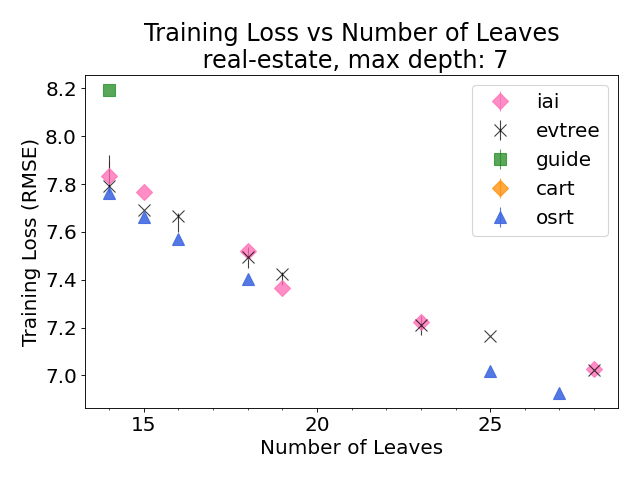}
    \includegraphics[width=0.4\textwidth]{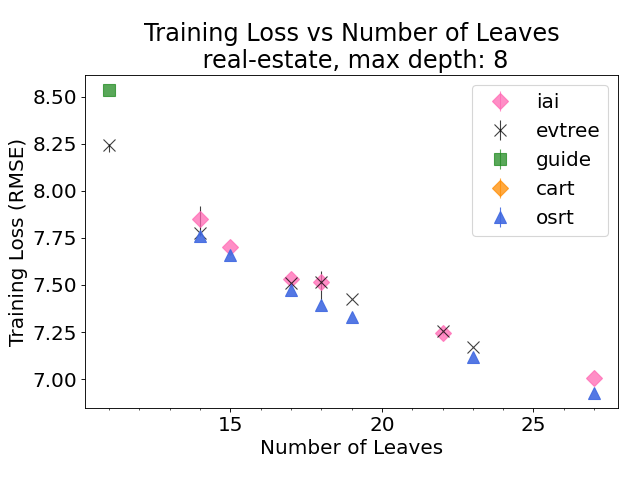}
    \includegraphics[width=0.4\textwidth]{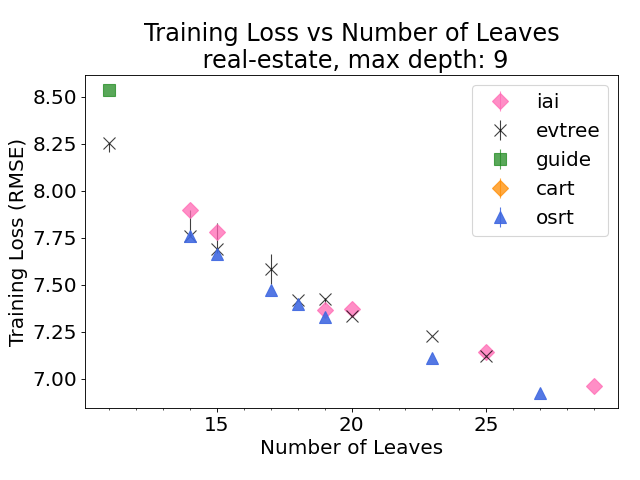}
    \caption{Training loss achieved by IAI, Evtree, GUIDE, CART and OSRT as a function of number of leaves on dataset: real-estate.}
    \label{fig:lvs:real-estate}

\end{figure*}

\begin{figure*}[htbp]
    \centering
    \includegraphics[width=0.4\textwidth]{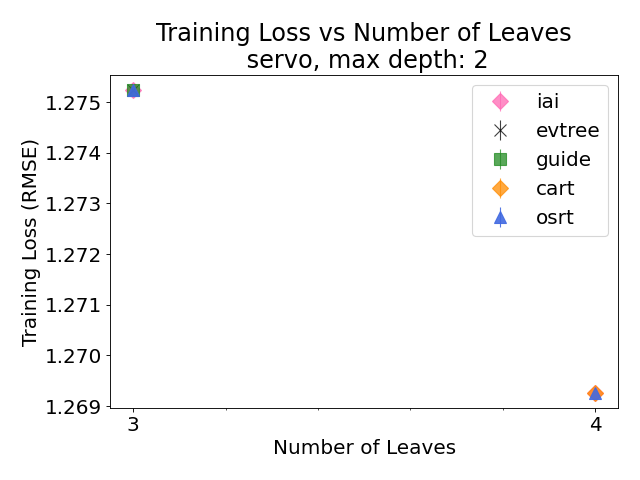}
    \includegraphics[width=0.4\textwidth]{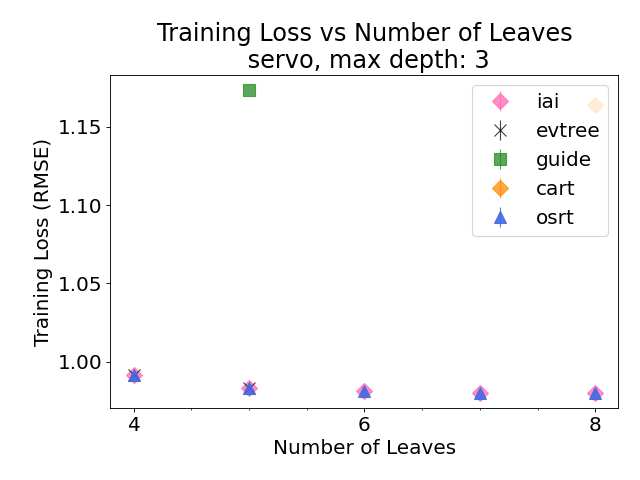}
    \includegraphics[width=0.4\textwidth]{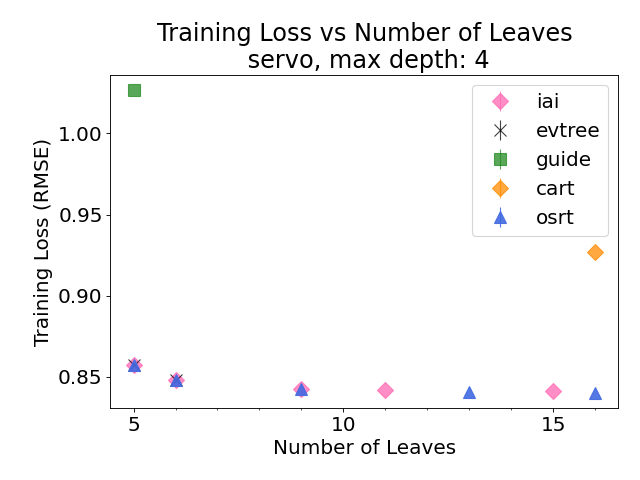}
    \includegraphics[width=0.4\textwidth]{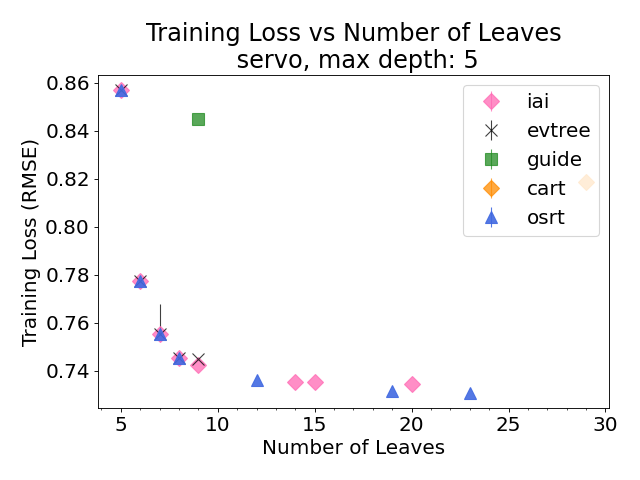}
    \includegraphics[width=0.4\textwidth]{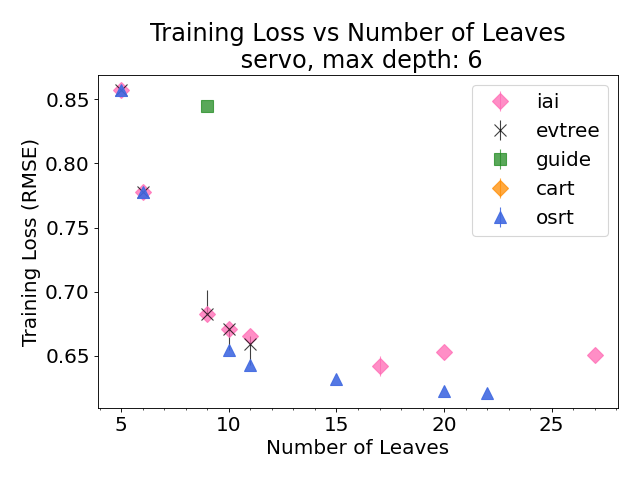}
    \includegraphics[width=0.4\textwidth]{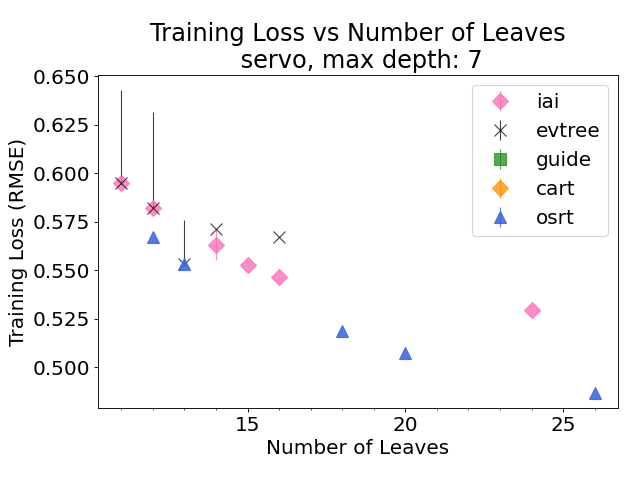}
    \includegraphics[width=0.4\textwidth]{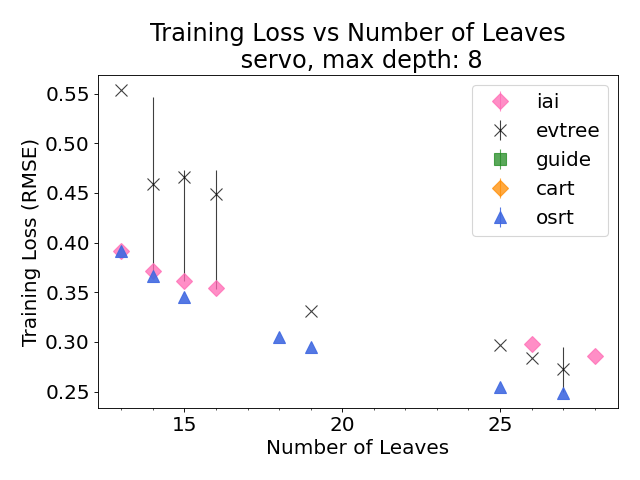}
    \includegraphics[width=0.4\textwidth]{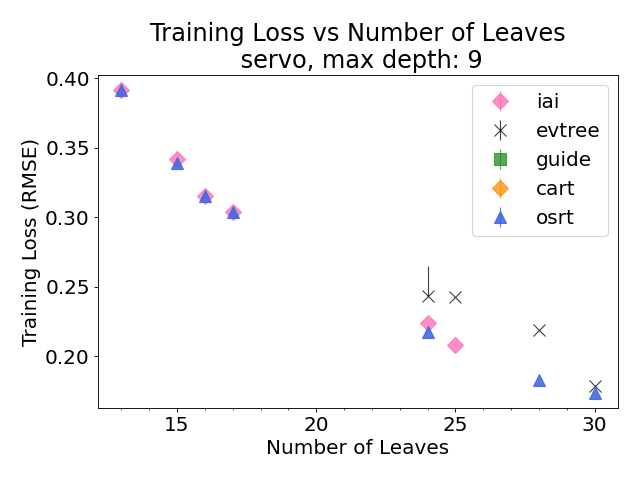}
    \caption{Training loss achieved by IAI, Evtree, GUIDE, CART and OSRT as a function of number of leaves on dataset: servo.}
    \label{fig:lvs:servo}
\end{figure*}

\begin{figure*}[htbp]
    \centering
    \includegraphics[width=0.4\textwidth]{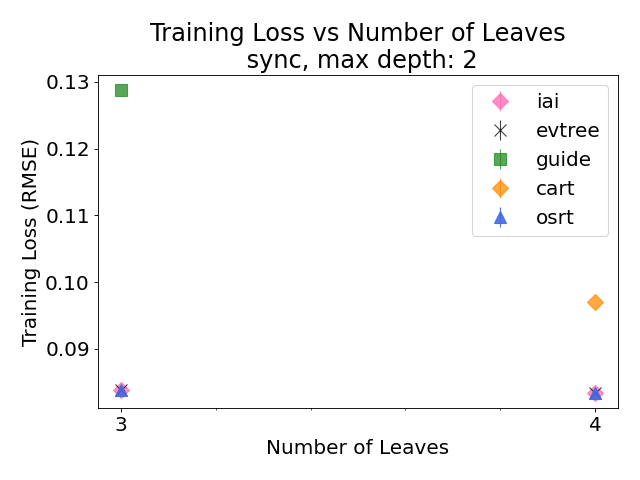}
    \includegraphics[width=0.4\textwidth]{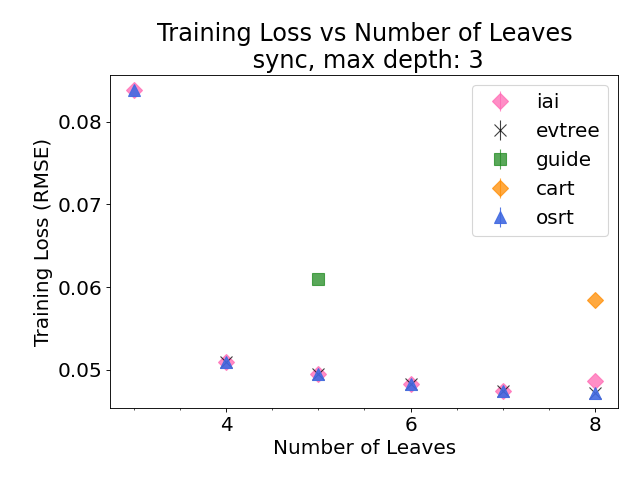}
    \includegraphics[width=0.4\textwidth]{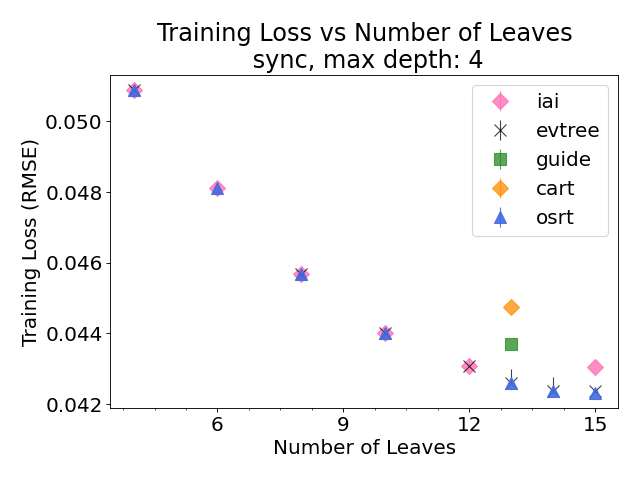}
    \includegraphics[width=0.4\textwidth]{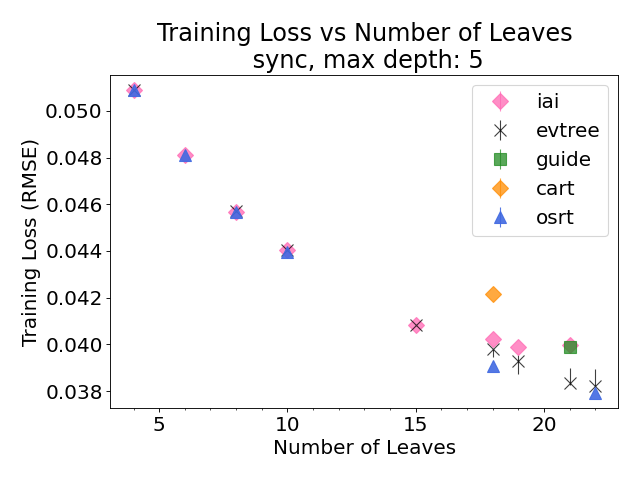}
    \includegraphics[width=0.4\textwidth]{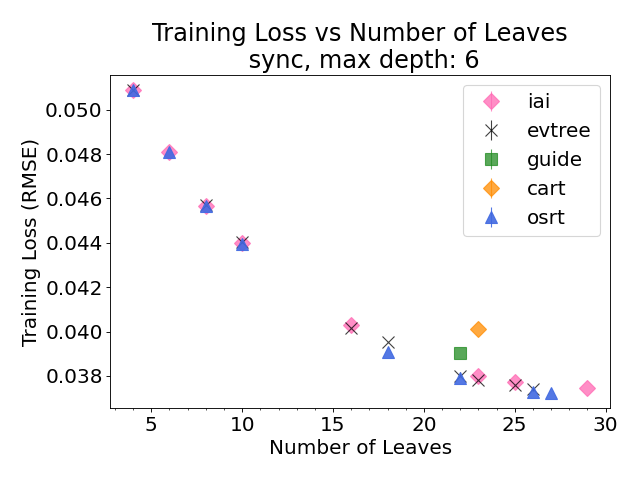}
    \includegraphics[width=0.4\textwidth]{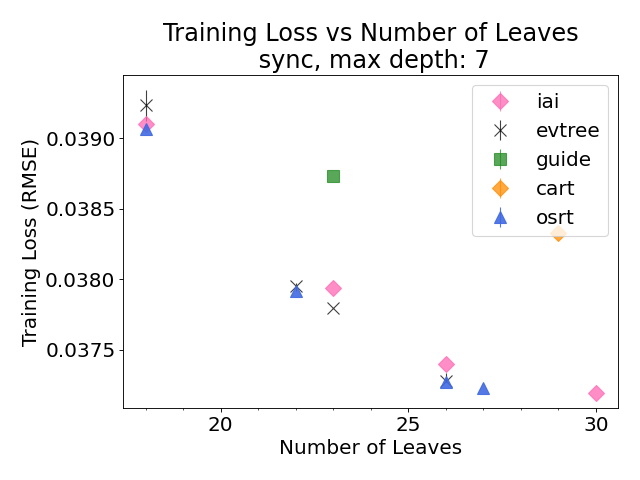}
    \includegraphics[width=0.4\textwidth]{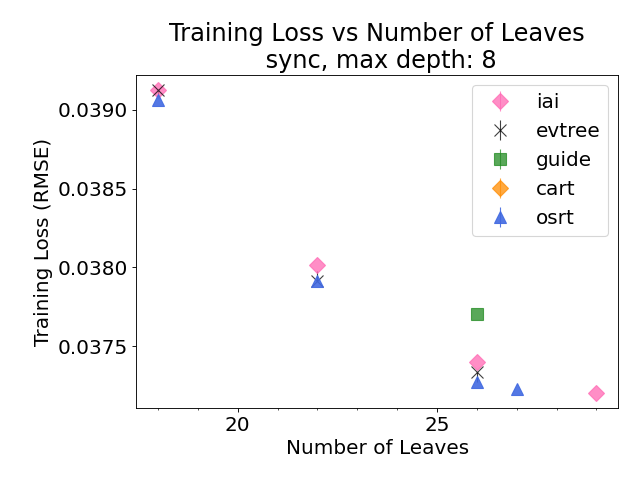}
    \includegraphics[width=0.4\textwidth]{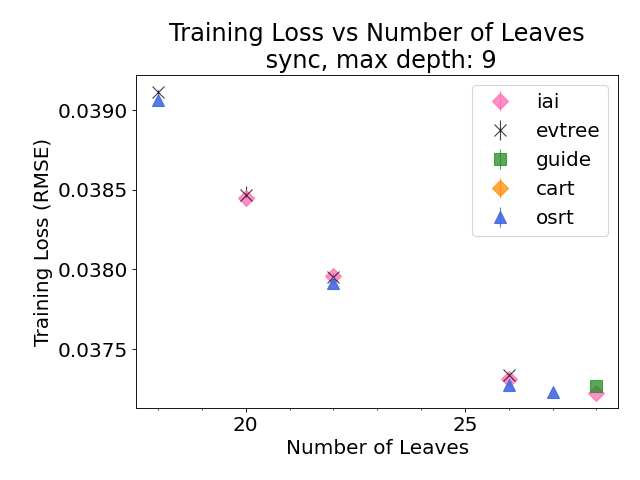}
    \caption{Training loss achieved by IAI, Evtree, GUIDE, CART and OSRT as a function of number of leaves on dataset: sync.}
    \label{fig:lvs:sync}
\end{figure*}

\begin{figure*}[htbp]
    \centering
    \includegraphics[width=0.4\textwidth]{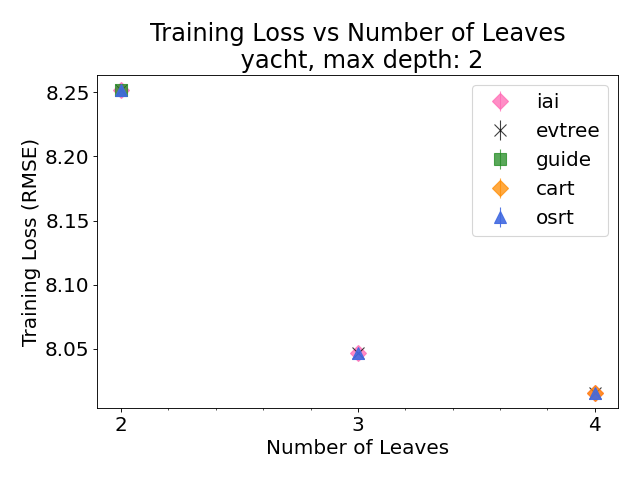}
    \includegraphics[width=0.4\textwidth]{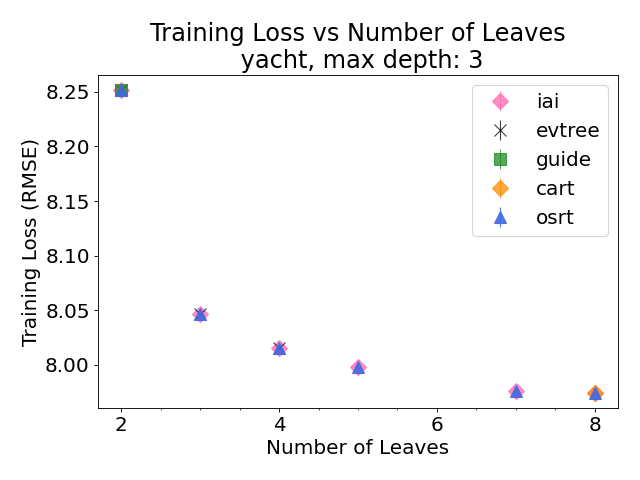}
    \includegraphics[width=0.4\textwidth]{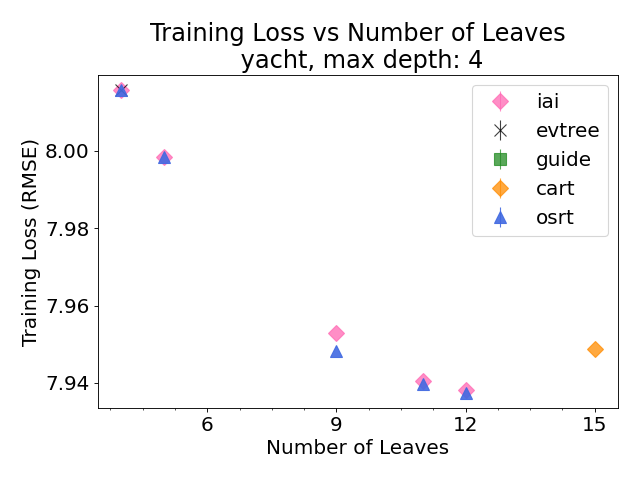}
    \includegraphics[width=0.4\textwidth]{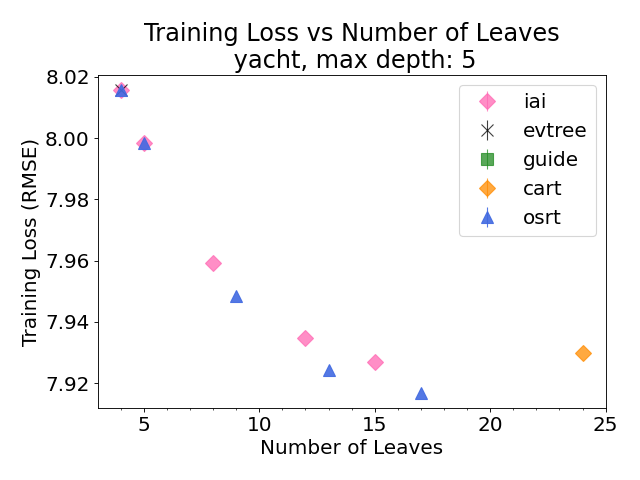}
    \includegraphics[width=0.4\textwidth]{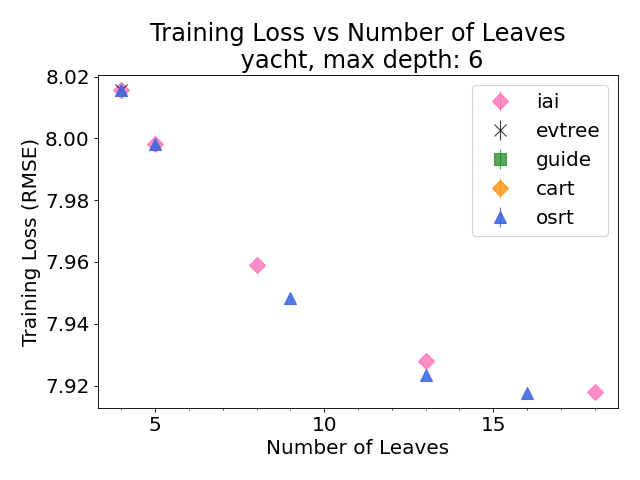}
    \includegraphics[width=0.4\textwidth]{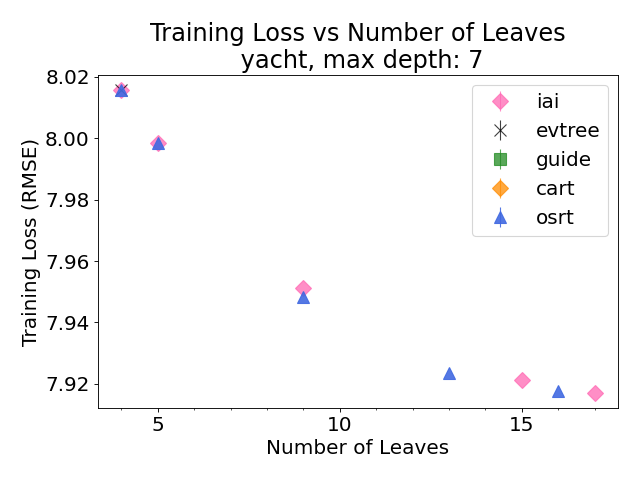}
    \includegraphics[width=0.4\textwidth]{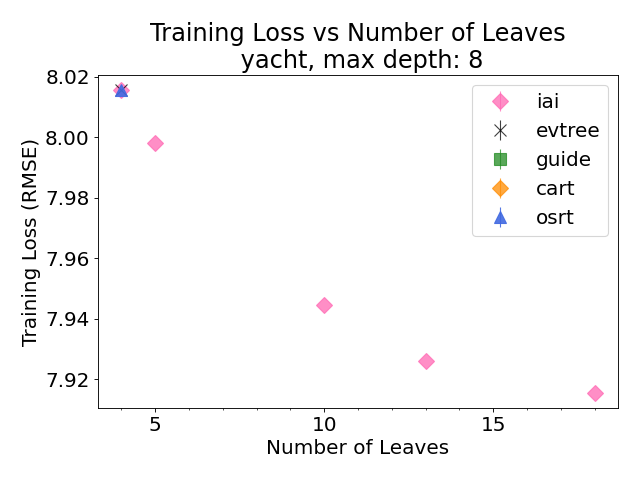}
    \includegraphics[width=0.4\textwidth]{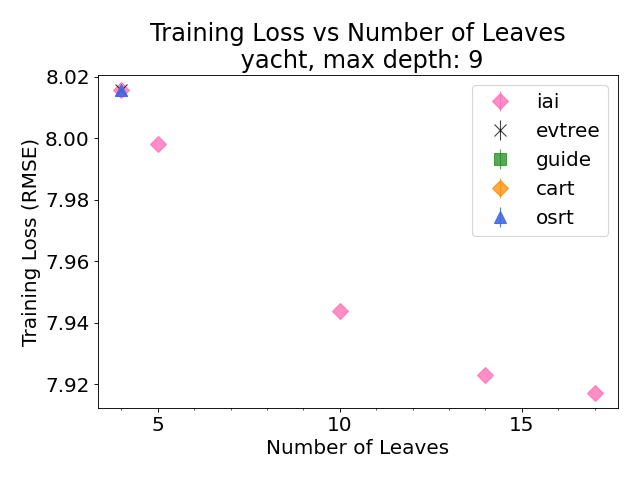}
    \caption{Training loss achieved by IAI, Evtree, GUIDE, CART and OSRT as a function of number of leaves on dataset: yacht.}
    \label{fig:lvs:yacht}
\end{figure*}

\begin{figure*}[htbp]
    \centering
    \includegraphics[width=0.4\textwidth]{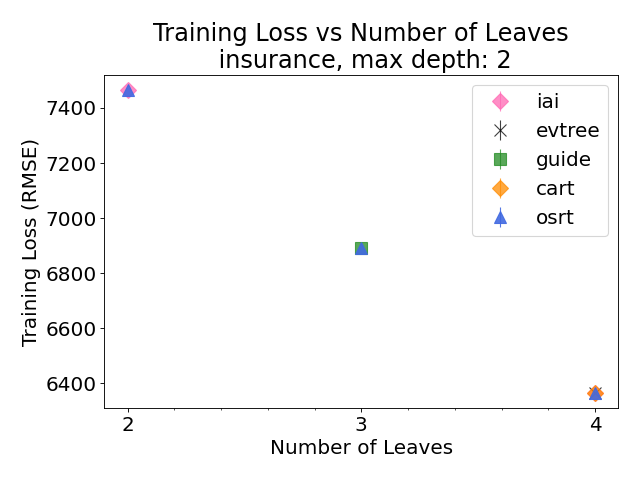}
    \includegraphics[width=0.4\textwidth]{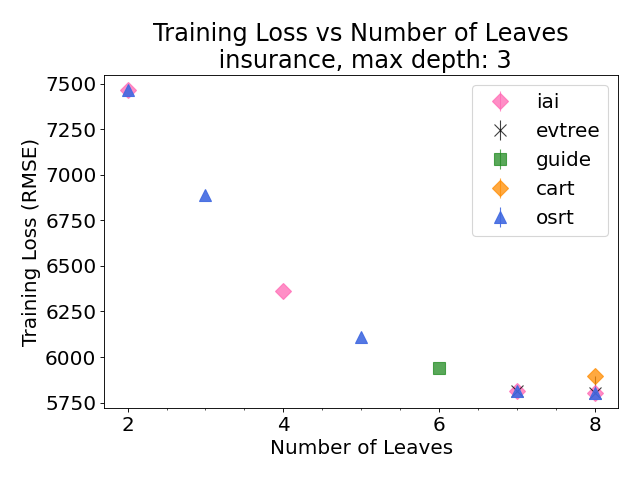}
    \includegraphics[width=0.4\textwidth]{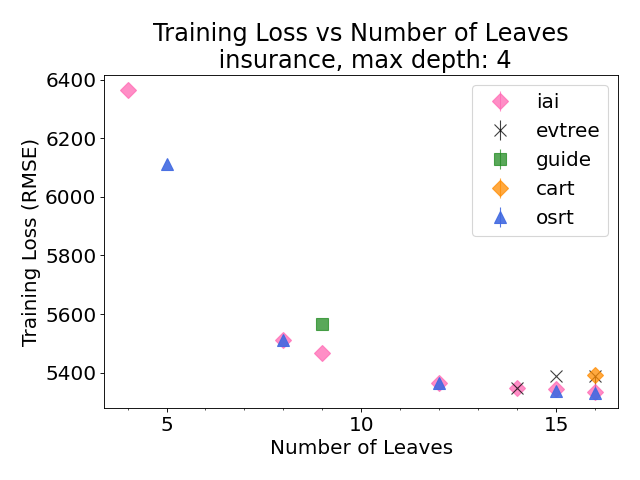}
    \includegraphics[width=0.4\textwidth]{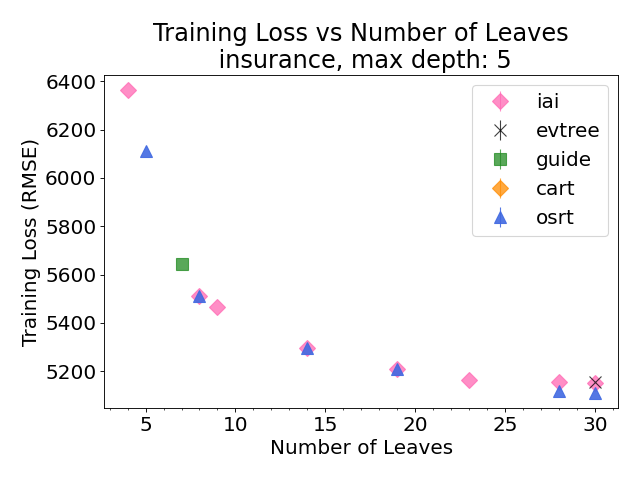}
    \includegraphics[width=0.4\textwidth]{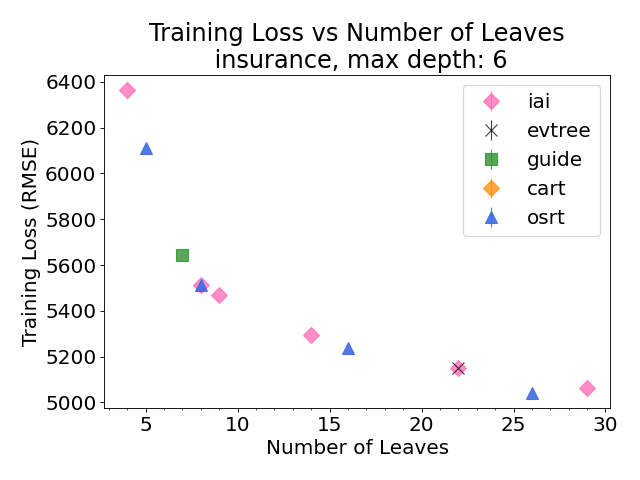}
    \includegraphics[width=0.4\textwidth]{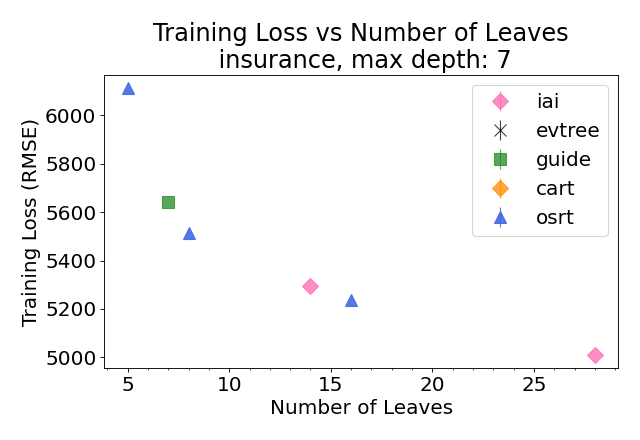}
    \includegraphics[width=0.4\textwidth]{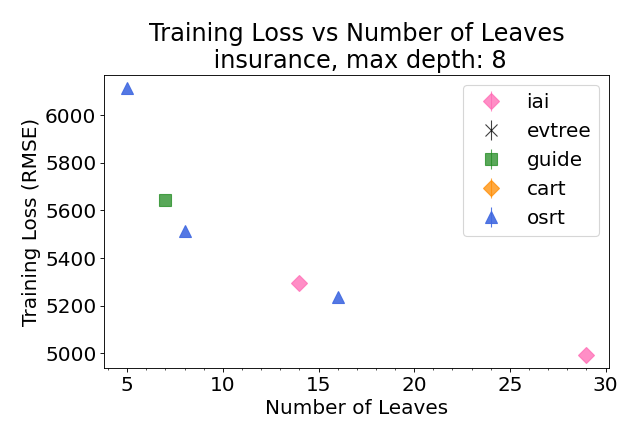}
    \includegraphics[width=0.4\textwidth]{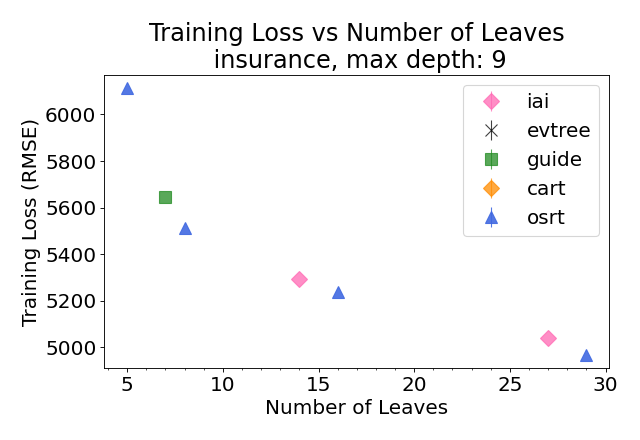}
    \caption{Training loss achieved by IAI, Evtree, GUIDE, CART and OSRT as a function of number of leaves on dataset: insurance.}
    \label{fig:lvs:insurance}
\end{figure*}

\begin{figure*}[htbp]
    \centering
    \includegraphics[width=0.45\textwidth]{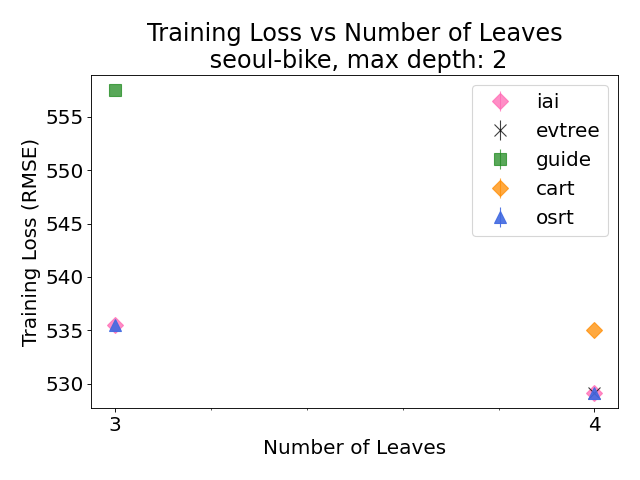}
    \includegraphics[width=0.45\textwidth]{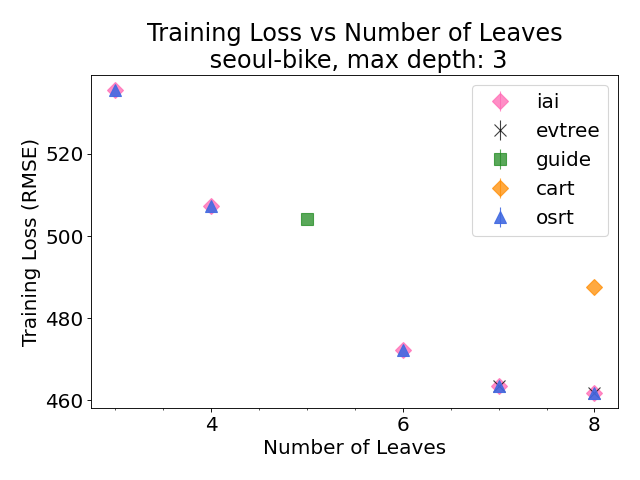}
    \includegraphics[width=0.45\textwidth]{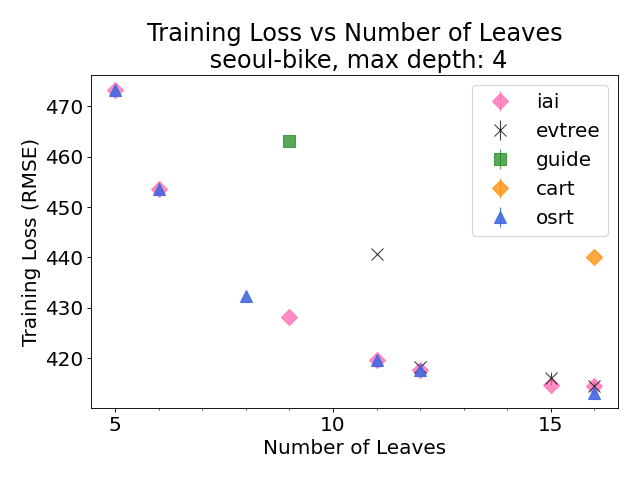}
    \includegraphics[width=0.45\textwidth]{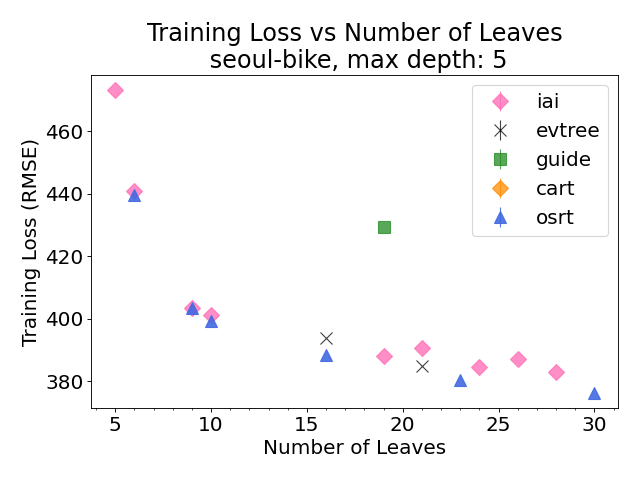}
    \caption{Training loss achieved by IAI, Evtree, GUIDE, CART and OSRT as a function of number of leaves on dataset: seoul-bike, depths 2 -- 5. Depths 6 -- 9 are omitted since Evtree and OSRT timed out.}
    \label{fig:lvs:seoul-bike}
\end{figure*}

\newpage 
\section{Timeout Statistics}\label{sec:timeout}
We observed timeouts in some of the experiments in Section \ref{exp:loss_vs_sparsity}. Evtree timed out on dataset \textit{insurance} and \textit{seoul-bike}, and OSRT timed out on dataset \textit{seoul-bike}. \textit{Evtree tends to time out more often than OSRT.}
Tables \ref{tab:timeout_insurance} and \ref{tab:timeout_bike} show the number of timed out runs.
\begin{table}[ht]
\setlength{\tabcolsep}{6pt.}
\begin{minipage}{.49\linewidth}
\centering

\caption{Number of Timeout on Dataset \textit{insurance}}
\label{tab:timeout_insurance}

\medskip

\begin{tabularx}{\linewidth}{c X X}
    \toprule
    \textbf{Depth} & \textbf{\#Timeout/total Evtree runs} & \textbf{\#Timeout/total OSRT runs} \\
    \midrule
    4 & 0 & 0\\
    5 & 0 & 0 \\
    6 & 10\% & 0\\
    7 & 60\% & 0 \\
    8 & 70\% & 0\\
    9 & 80\% & 0 \\
    \midrule
    \textbf{Total} & 27.5\% & 0 \\
    \bottomrule
\end{tabularx}
\end{minipage}\hfill
\begin{minipage}{.49\linewidth}
\centering

\caption{Number of Timeout on Dataset \textit{seoul-bike} }
\label{tab:timeout_bike}

\medskip

\begin{tabularx}{\linewidth}{c X X}
    \toprule
    \textbf{Depth} & \textbf{\#Timeout/total Evtree runs} & \textbf{\#Timeout/total OSRT runs } \\
    \midrule
    4 & 10\% & 0 \\
    5 & 40\% & 30\% \\
    6 & 90\% & 45\%\\
    7 & 100\% & 45\% \\
    8 & 100\% & 45\%\\
    9 & 100\% & 45\% \\
    \midrule
    \textbf{Total} & 55\% & 26.25\% \\
    \bottomrule
\end{tabularx}
\end{minipage}\hfill
\end{table}

\section{Experiment: Controllability}\label{exp:control}
\noindent\textbf{Collection and Setup:} We ran this experiment on 2 datasets: \textit{real-estate, servo}. For each dataset, we gave IAI, Evtree and OSRT the same depth limit and regularization coefficient. (Recall that Evtree has different scale of regularization coefficients, we gave it coefficient that produce trees with a similar number of leaves.) For each dataset, we used the configurations described below:
\begin{itemize}
    \item \textit{real-estate}: Set depth limit 4 -- 9 for all three algorithms, regularization coefficients 0.0005, 0.001, 0.005, 0.01, 0.05 for IAI and OSRT, 0.05, 0.08, 0.1, 0.5, 1  for Evtree.
    \item \textit{servo}: Set depth limit 4 -- 9 for all three algorithms, regularization coefficients 0.0005, 0.001, 0.005, 0.01, 0.05 for IAI and OSRT, 0.05, 0.08, 0.1, 0.5, 1  for Evtree.
\end{itemize}

\noindent For each combination of dataset, depth limit, regularization coefficient and algorithm, we ran 10 times with 10 different random seeds. \\\\

\noindent\textbf{Calculations:} For each combination of dataset, regularization coefficient and algorithm, we produced a set of up to 10 trees, depending on if the runs exceeded the time limit. We summarized the measurements of training loss and number of leaves across the set of up to 10 trees by plotting the median of training loss and number of leaves; we showed the minimum and maximum number of leaves, and the best and worst training error in the set as the lower and upper error values respectively.

\noindent\textbf{Results:} Figures \ref{fig:variance:real-estate} and \ref{fig:variance:servo} show that for a given dataset, depth constraint and regularization coefficient, \textit{OSRT consistently found the same optimal tree, while IAI and Evtree tended to depend on the random seeds.} Trees generated by IAI and Evtree could vary a lot in terms of prediction accuracy and sparsity. Poor sparsity can produce models that are uninterpretable and overfitted; IAI and Evtree are unable to avoid these phenomena.  
\begin{figure*}[htbp]
    \centering
    \includegraphics[width=0.45\textwidth]{figures/binary_settings/variance/real-estate_depth_4.png}
    \includegraphics[width=0.45\textwidth]{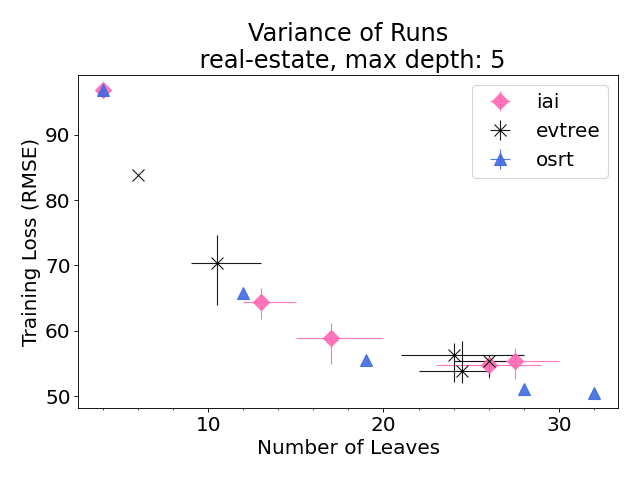}
    \includegraphics[width=0.45\textwidth]{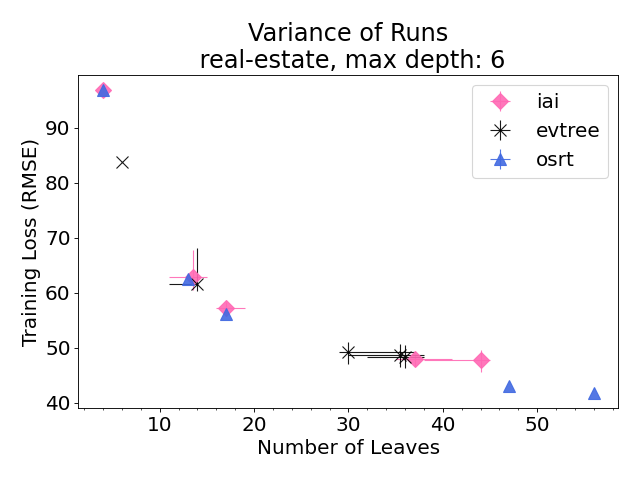}
    \includegraphics[width=0.45\textwidth]{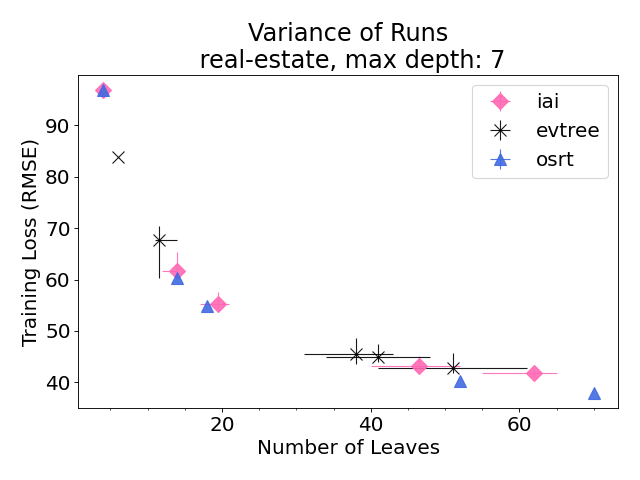}
    \includegraphics[width=0.45\textwidth]{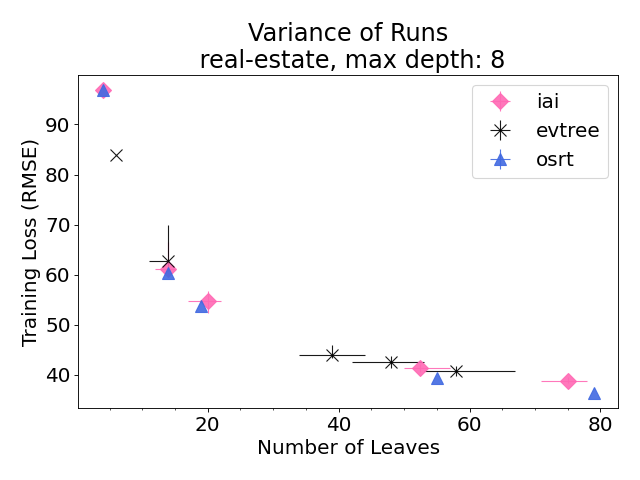}
    \includegraphics[width=0.45\textwidth]{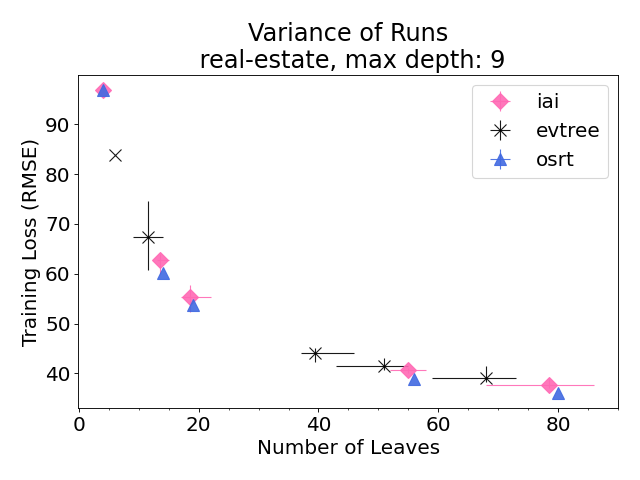}
    \caption{Variance (horizontal and vertical lines) of trees generated by IAI, Evtree and OSRT on dataset: \textit{real-estate}, when different random seeds were used. Note that in cases like the max depth 5 plot for approximately 11 leaves in a tree, while evtree's loss lower bound drops below OSRT's blue triangle (which is optimal), it is not because the blue triangle is suboptimal; it is instead because evtree produced a larger size tree to get that low error, which is why there is  horizontal variance in addition to the vertical variance.}
    \label{fig:variance:real-estate}
\end{figure*}

\begin{figure*}[htbp]
    \centering
    \includegraphics[width=0.45\textwidth]{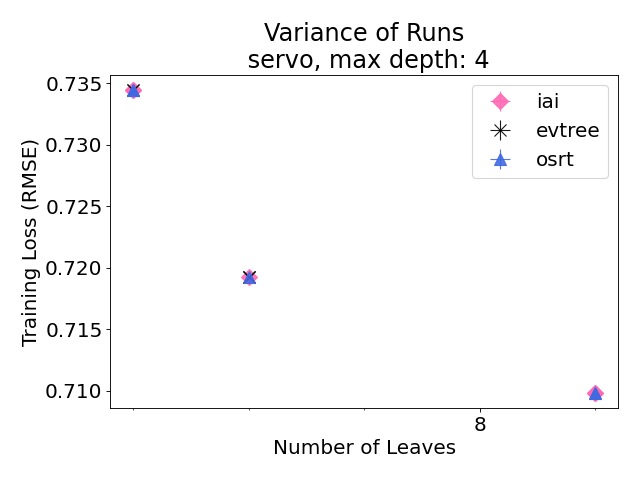}
    \includegraphics[width=0.45\textwidth]{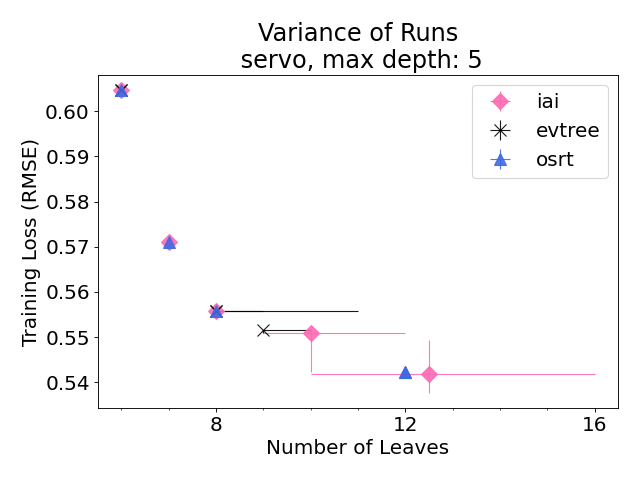}
    \includegraphics[width=0.45\textwidth]{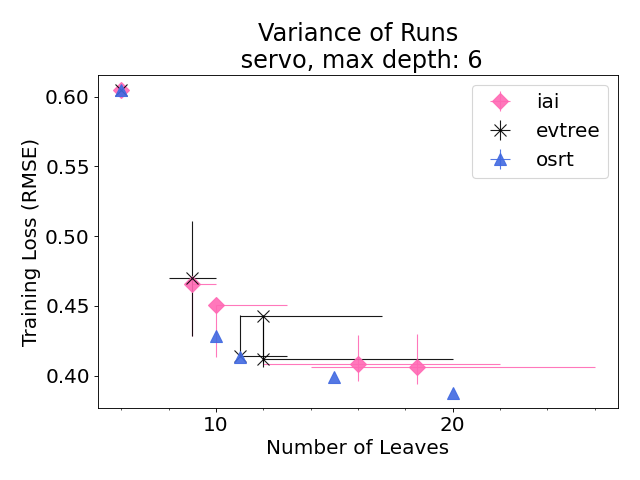}
    \includegraphics[width=0.45\textwidth]{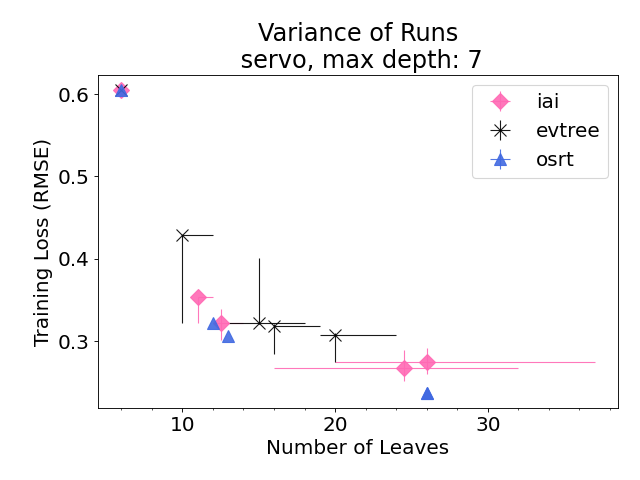}
    \includegraphics[width=0.45\textwidth]{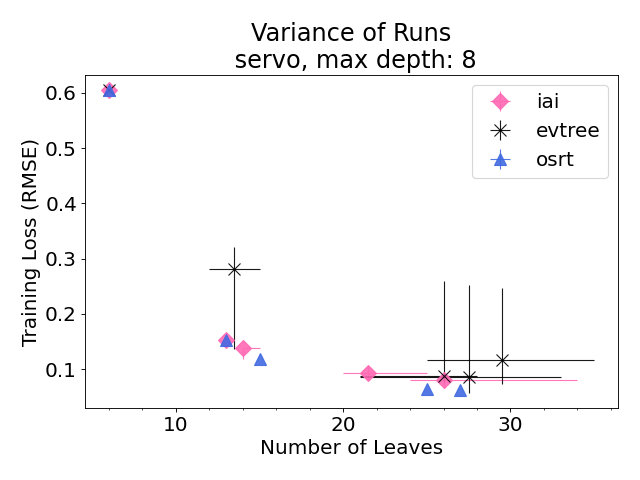}
    \includegraphics[width=0.45\textwidth]{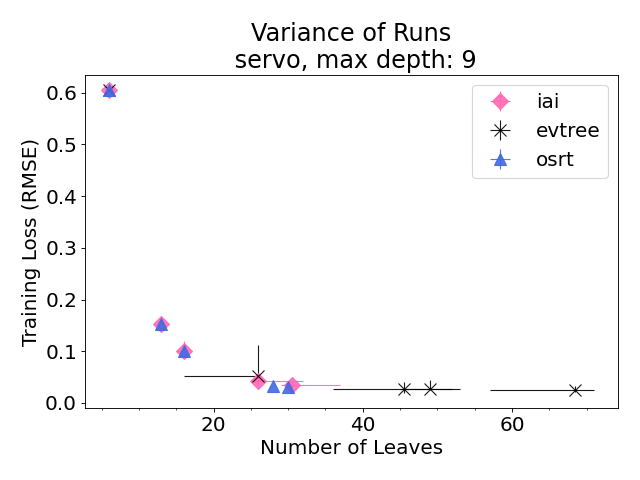}
    \caption{Variance (horizontal and vertical lines) of trees generated by IAI, Evtree and OSRT on dataset: servo, when different random seeds were used. Again, in cases like the max depth 6 figure for 10 leaves, IAI's loss lower bound drops below OSRT's blue triangle, which is optimal. This is because IAI produced a larger model to achieve this low loss, as indicated by the horizontal variance for IAI at 10 leaves.}
    \label{fig:variance:servo}
\end{figure*}
\newpage
\section{Experiment: Time vs$.$ Sparsity}\label{exp:time_vs_sparsity}
\noindent\textbf{Collection and Setup:} We compared the running time of different methods that produce trees of similar size, by running this experiment on 6 datasets: \textit{airfoil, real-estate, servo, sync, yacht, insurance}. We used the same configurations in Section \ref{exp:loss_vs_sparsity}, except we set the time limit to 30 minutes in this experiment and set regularization coefficients $(0.10, 0.11, 0.12 \dots, 0.99, 1.00, 1.1, 1.2, \dots, 1.9, 2.0)$ for Evtree.

\noindent\textbf{Calculations:} We created one  plot (time vs number of leaves) for each combination of dataset and depth limit. We again excluded trees with more than 30 leaves. Given a dataset and depth limit, one algorithm might produce multiple trees with the same number of leaves, so we plotted the median of these training times and show the best and worst time as lower and upper error values respectively. 

\noindent\textbf{Results:} Figures \ref{fig:tvs:sync}-\ref{fig:tvs:yacht}
show that \textit{\ourmethod{} is generally faster than Evtree.} Recall that trees of IAI and Evtree become sub-optimal once the depth limit is greater than 3, and OSRT is only slightly slower than IAI and much faster than Evtree, which means our method is \textit{both accurate and fast.} Evtree tends to produce large and uninterpretable trees when the depth limit is greater than 7, therefore there are only few evtree points in some plots. OSRT slows down when the optimal trees for given depth and regularization are overfitted (due to correlated variables), which is shown in Section \ref{exp:cv}. OSRT typically performs well for interpretable and sparse trees. 

Evtree tends to converge to sub-optimal trees early if the depth is large and regularization is small, when the difference of the objective between trees is also relatively small, since Evtree will terminate when its best 5\% population of trees stabilizes. 

We observe ``abort'' issues when we run IAI on datasets that need more training time (e.g., \textit{auction, seoul-bike and optical}). Error messages (see Listing \ref{iai_error}) in calls of the local search function indicate IAI is using a local search approach, which may explain why its running time stays relatively unchanged with increasing tree depth. Since IAI is proprietary, we cannot confirm its algorithmic approach.
\begin{lstlisting}[caption={IAI Error Code},label={iai_error},language=Python]
RuntimeError: <PyCall.jlwrap (in a Julia function called from Python)
JULIA: obj mismatch: before -2.384185791015625e-7 after 0.0
Stacktrace:
  [1] error(s::String)
  [2] greedy_search!(tree::IAITrees.Tree{IAIBase.RegressionTask, IAITrees.Node{IAIBase.RegressionTask, IAIBase.RegressionFit}}, gs::OptimalTrees.LocalSearcher{IAIBase.RegressionTask, OptimalTrees.RegressionEvaluatorMSEConstant, IAIBase.RegressionTarget})
  [3] run_worker_iteration!(rep::Int64, show_progress::Bool, ls::OptimalTrees.LocalSearcher{IAIBase.RegressionTask, OptimalTrees.RegressionEvaluatorMSEConstant, IAIBase.RegressionTarget}, gs::OptimalTrees.LocalSearcher{IAIBase.RegressionTask, OptimalTrees.RegressionEvaluatorMSEConstant, IAIBase.RegressionTarget}, rng_gen::IAIBase.RandomStreams.MRG32k3aGen)
  [4] run_task(tid::Int64, f_run::Function, f_setup::Function, job_channel::Channel{Int64}, results_channel::Channel{Any}, progress::IAIBase.Progress, uses_subprogress::Bool, obj::Tuple{OptimalTrees.LocalSearcher{IAIBase.RegressionTask, OptimalTrees.RegressionEvaluatorMSEConstant, IAIBase.RegressionTarget}, OptimalTrees.LocalSearcher{IAIBase.RegressionTask, OptimalTrees.RegressionEvaluatorMSEConstant, IAIBase.RegressionTarget}})
  [5] spawn_tasks(::Int64, ::Int64, ::Function, ::Vararg{Any})
  [6] run_distributed!(f_consume::Function, obj::Tuple{OptimalTrees.LocalSearcher{IAIBase.RegressionTask, OptimalTrees.RegressionEvaluatorMSEConstant, IAIBase.RegressionTarget}, OptimalTrees.LocalSearcher{IAIBase.RegressionTask, OptimalTrees.RegressionEvaluatorMSEConstant, IAIBase.RegressionTarget}}, f_run::Function, f_setup::Function, n_jobs::Int64, procs::Vector{Int64}, n_threads::Int64, progress::IAIBase.Progress, uses_subprogress::Bool)
  [7] run_distributed(show_progress::Bool, procs::Vector{Int64}, n_threads::Int64, n_jobs::Int64, message::String; iter_func::Function, iter_input::Tuple{OptimalTrees.LocalSearcher{IAIBase.RegressionTask, OptimalTrees.RegressionEvaluatorMSEConstant, IAIBase.RegressionTarget}, OptimalTrees.LocalSearcher{IAIBase.RegressionTask, OptimalTrees.RegressionEvaluatorMSEConstant, IAIBase.RegressionTarget}}, iter_setup::Function, iter_uses_subprogress::Bool, consume_func::Function)
  [8] (::IAIBase.var"#run_distributed##kw")(::NamedTuple{(:iter_input, :iter_setup, :iter_func, :iter_uses_subprogress, :consume_func), Tuple{Tuple{OptimalTrees.LocalSearcher{IAIBase.RegressionTask, OptimalTrees.RegressionEvaluatorMSEConstant, IAIBase.RegressionTarget}, OptimalTrees.LocalSearcher{IAIBase.RegressionTask, OptimalTrees.RegressionEvaluatorMSEConstant, IAIBase.RegressionTarget}}, typeof(OptimalTrees.task_local_copy), typeof(OptimalTrees.run_worker_iteration!), Bool, OptimalTrees.var"#95#97"{OptimalTrees.OptimalTreeRegressor}}})
\end{lstlisting}

\begin{figure*}[htbp]
    \centering
    \includegraphics[width=0.41\textwidth]{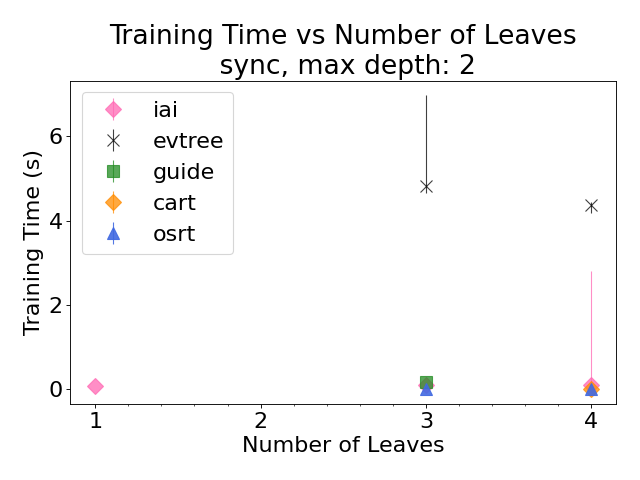}
    \includegraphics[width=0.41\textwidth]{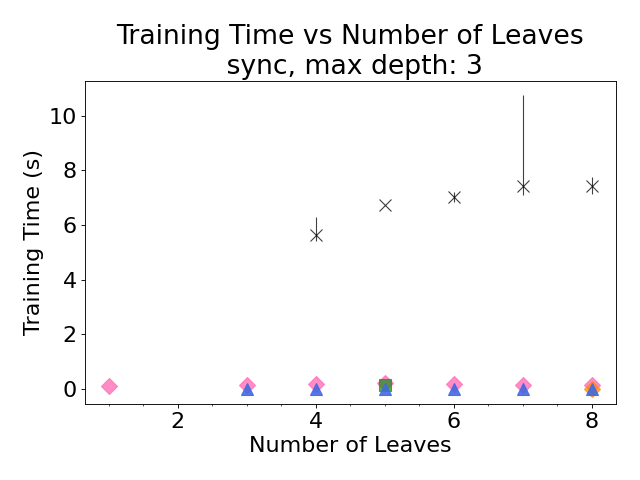}
    \includegraphics[width=0.41\textwidth]{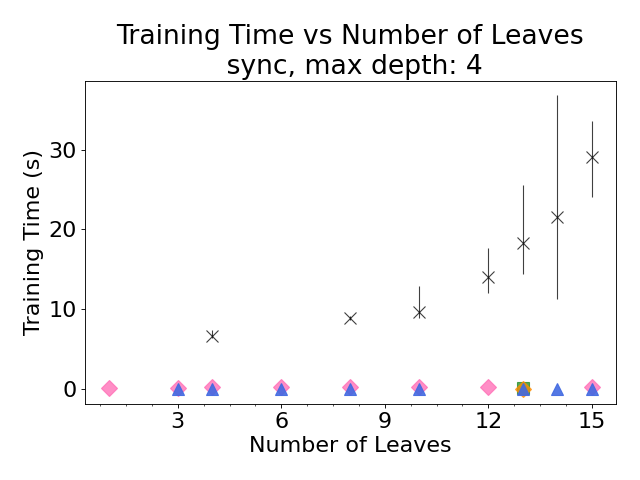}
    \includegraphics[width=0.41\textwidth]{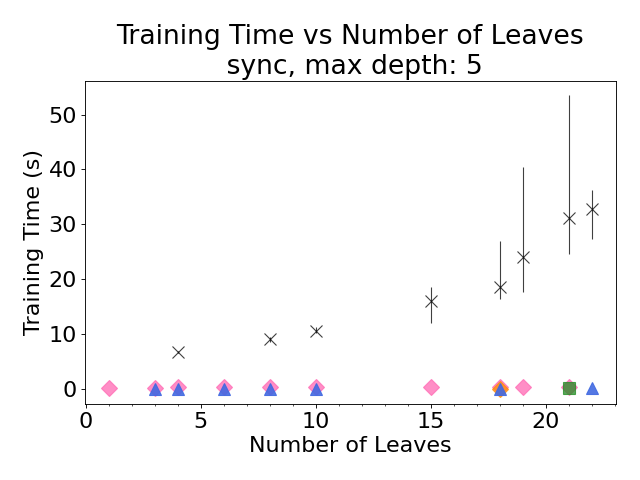}
    \includegraphics[width=0.41\textwidth]{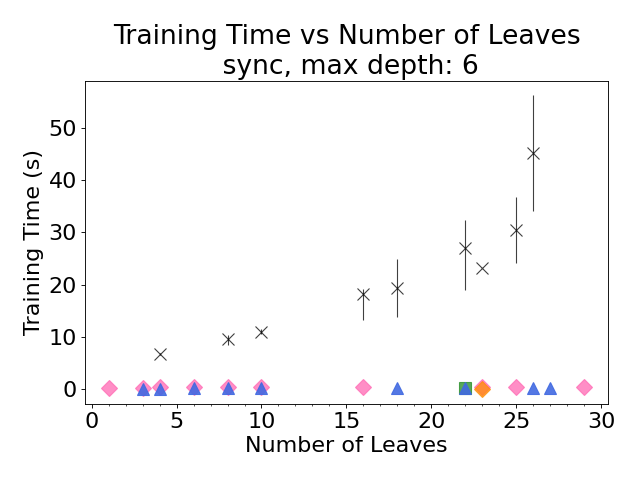}
    \includegraphics[width=0.41\textwidth]{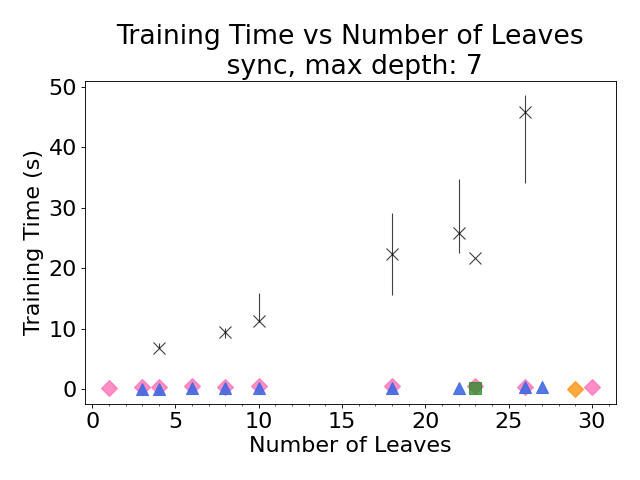}
    \includegraphics[width=0.41\textwidth]{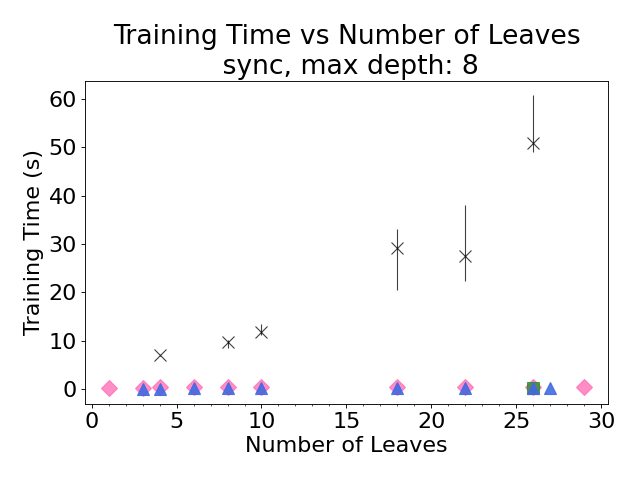}
    \includegraphics[width=0.41\textwidth]{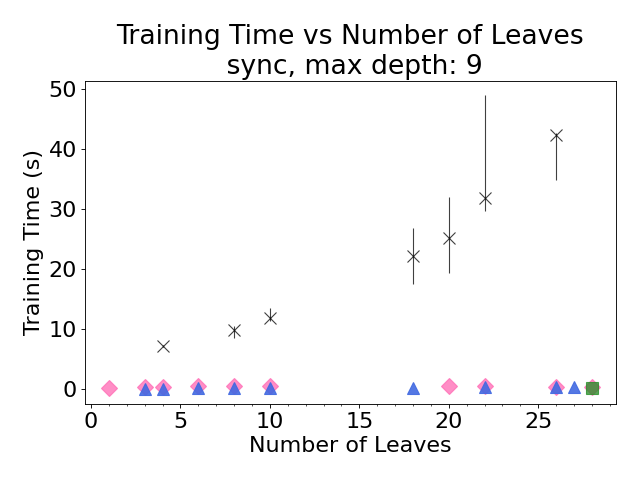}
    
    \caption{Training time of OSRT, IAI, Evtree, CART, GUIDE, as a function of number of leaves, on dataset: sync}
    \label{fig:tvs:sync}
\end{figure*}

\begin{figure*}[htbp]
    \centering
    \includegraphics[width=0.41\textwidth]{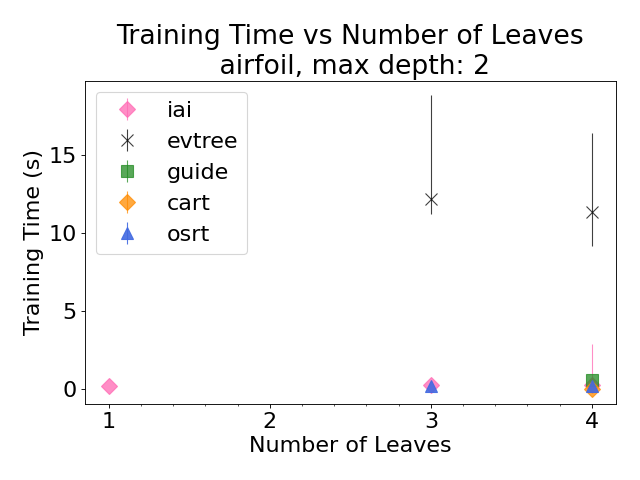}
    \includegraphics[width=0.41\textwidth]{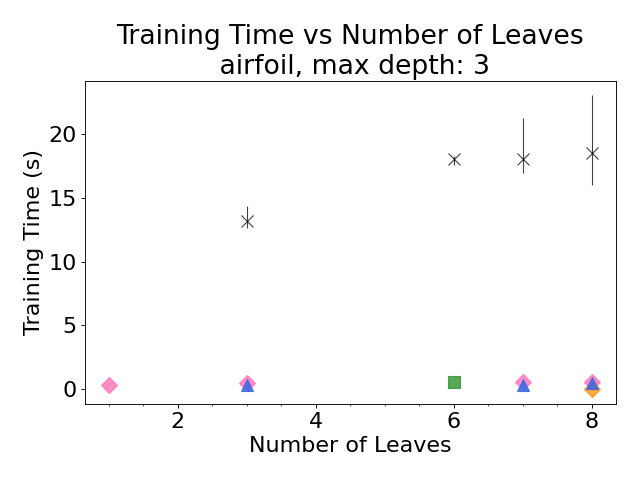}
    \includegraphics[width=0.41\textwidth]{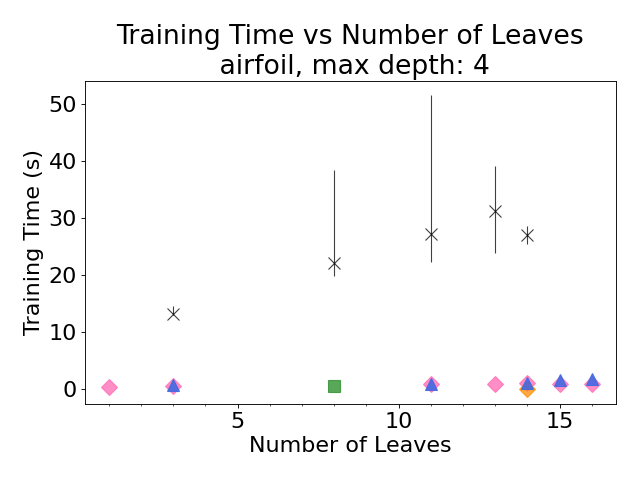}
    \includegraphics[width=0.41\textwidth]{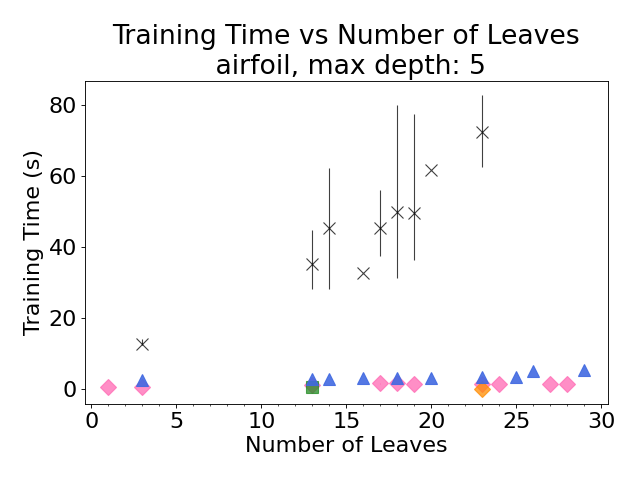}
    \includegraphics[width=0.41\textwidth]{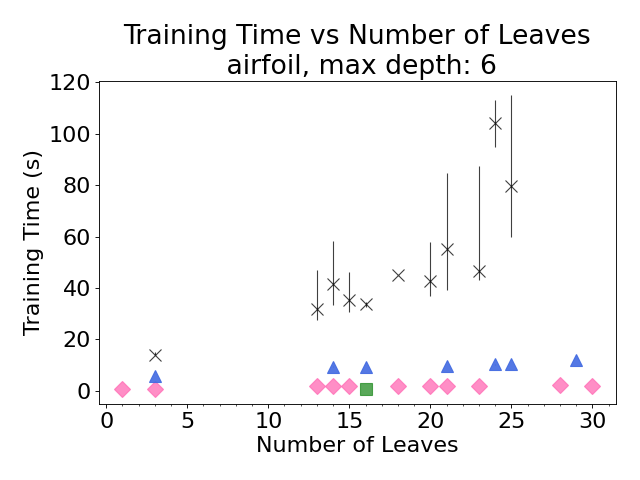}
    \includegraphics[width=0.41\textwidth]{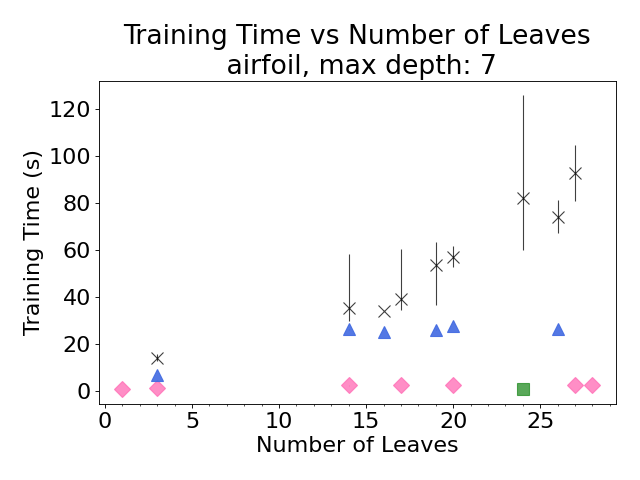}
    \includegraphics[width=0.41\textwidth]{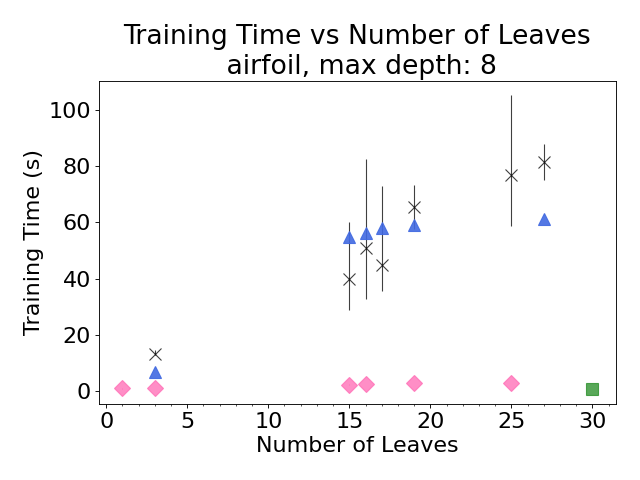}
    \includegraphics[width=0.41\textwidth]{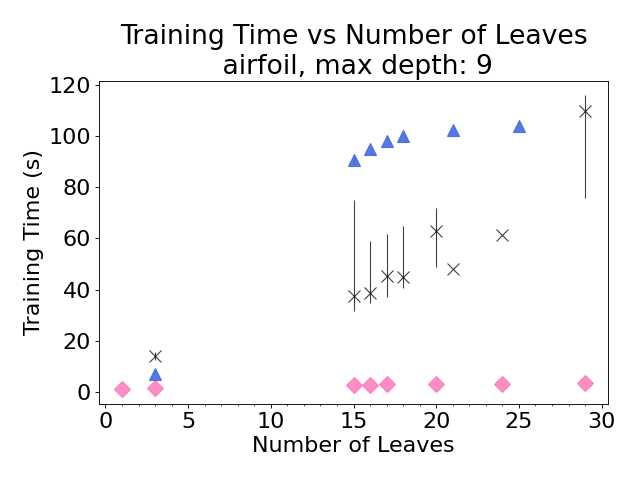}
    \caption{Training time of OSRT, IAI, Evtree, CART, GUIDE as a function of number of leaves on dataset: airfoil}
    \label{fig:tvs:airfoil}
    
\end{figure*}

\begin{figure*}[htbp]
    \centering
    \includegraphics[width=0.41\textwidth]{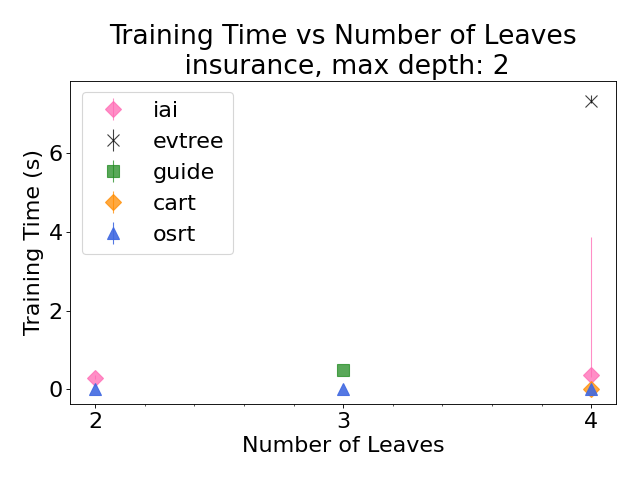}
    \includegraphics[width=0.41\textwidth]{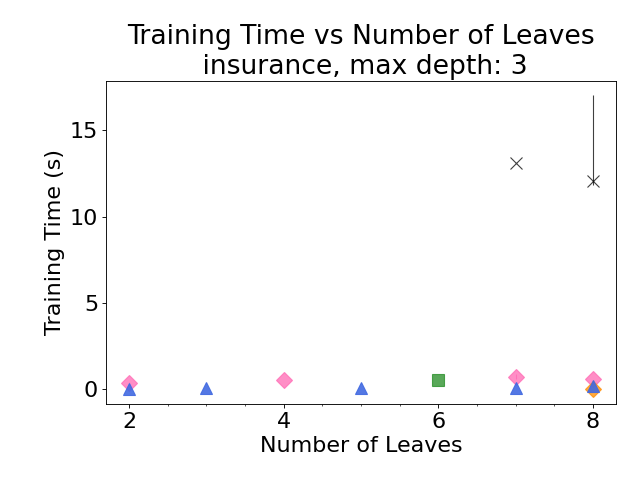}
    \includegraphics[width=0.41\textwidth]{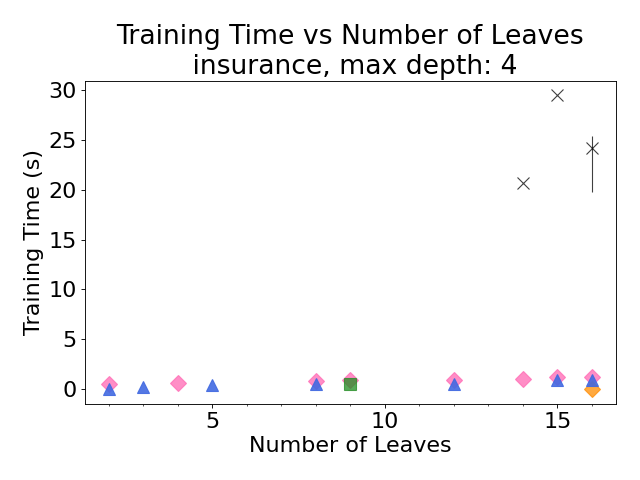}
    \includegraphics[width=0.41\textwidth]{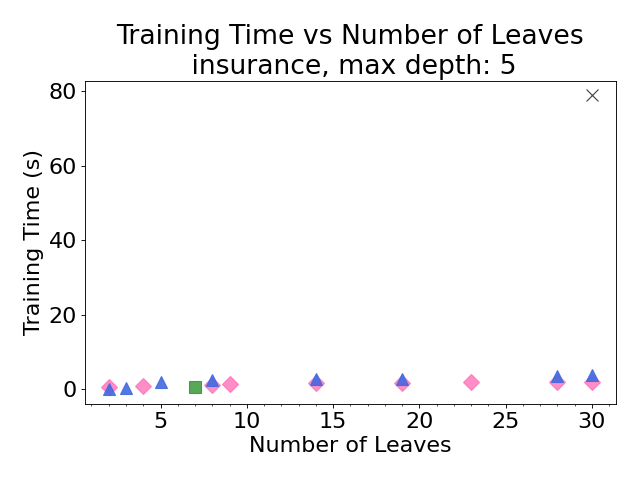}
    \includegraphics[width=0.41\textwidth]{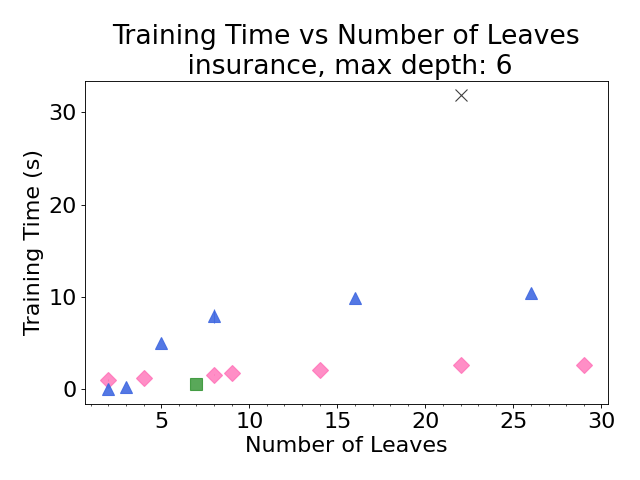}
    \includegraphics[width=0.41\textwidth]{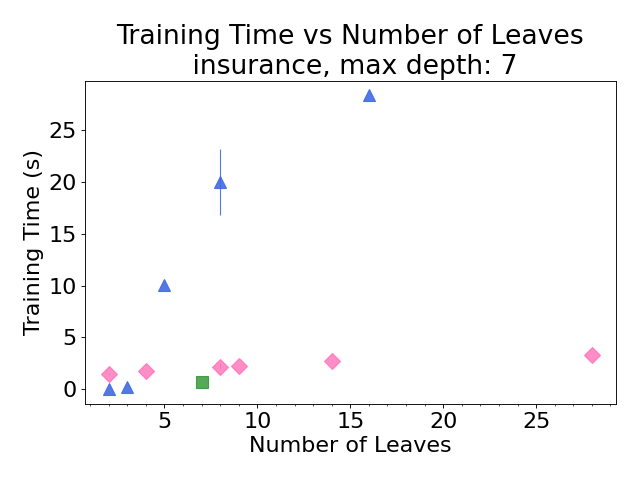}
    \includegraphics[width=0.41\textwidth]{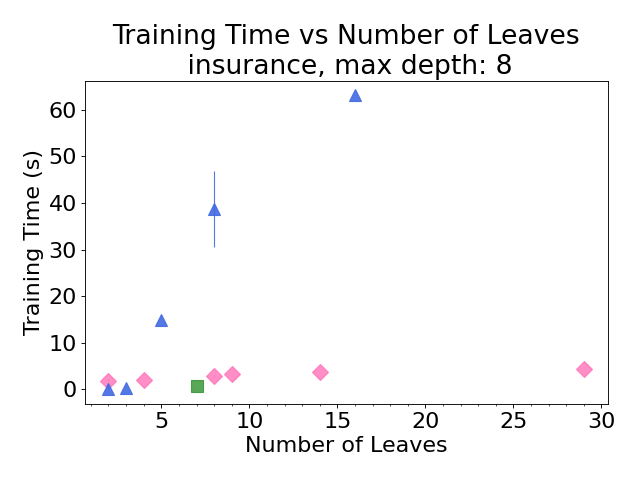}
    \includegraphics[width=0.41\textwidth]{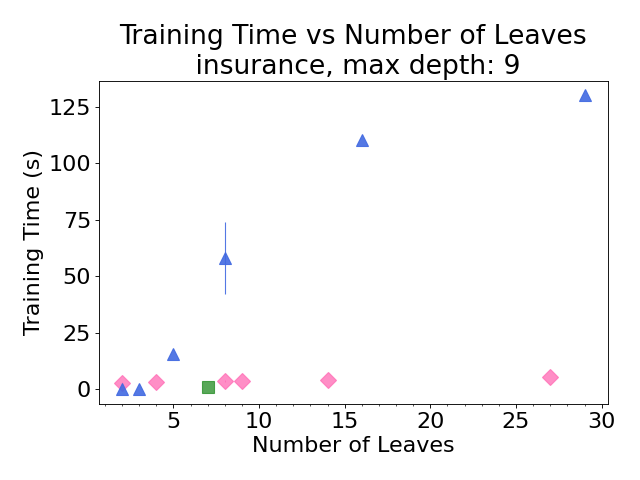}
    
    \caption{Training time of OSRT, IAI, Evtree, CART, GUIDE, as a function of number of leaves, on dataset: insurance}
    \label{fig:tvs:insurance}
\end{figure*}

\begin{figure*}[htbp]
    \centering
    \includegraphics[width=0.41\textwidth]{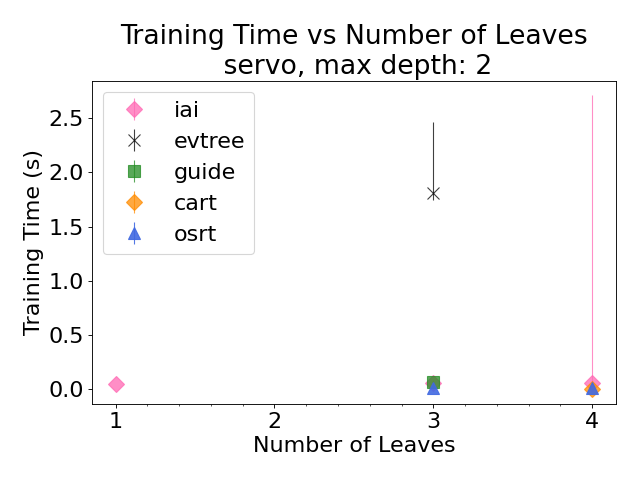}
    \includegraphics[width=0.41\textwidth]{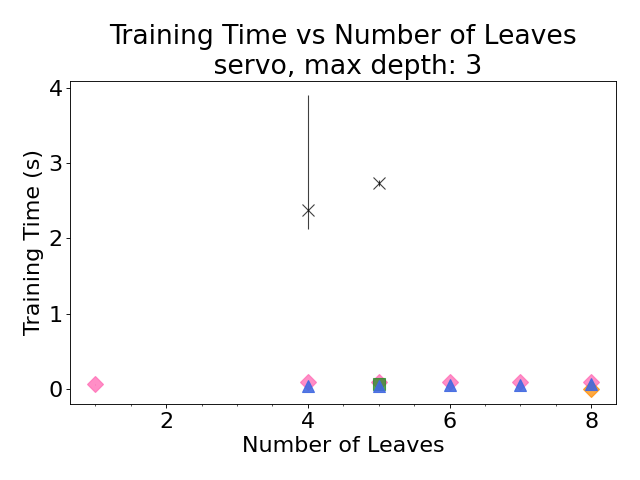}
    \includegraphics[width=0.41\textwidth]{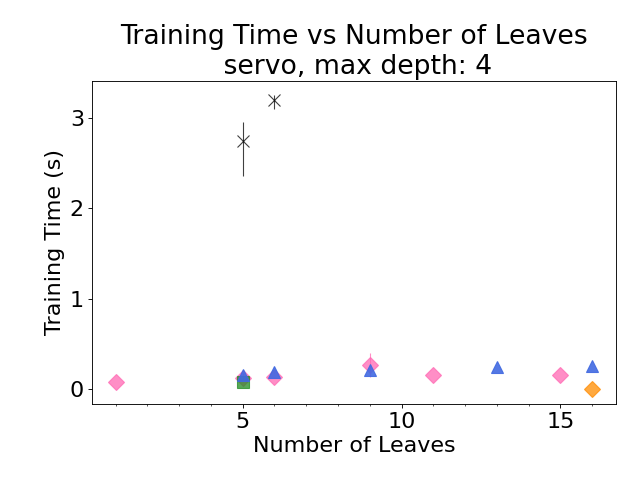}
    \includegraphics[width=0.41\textwidth]{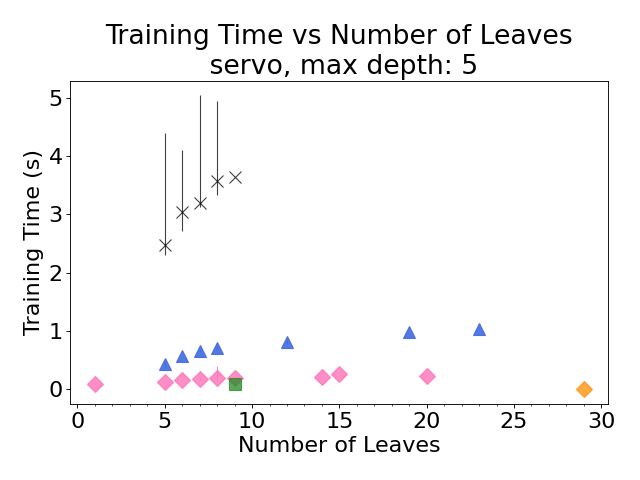}
    \includegraphics[width=0.41\textwidth]{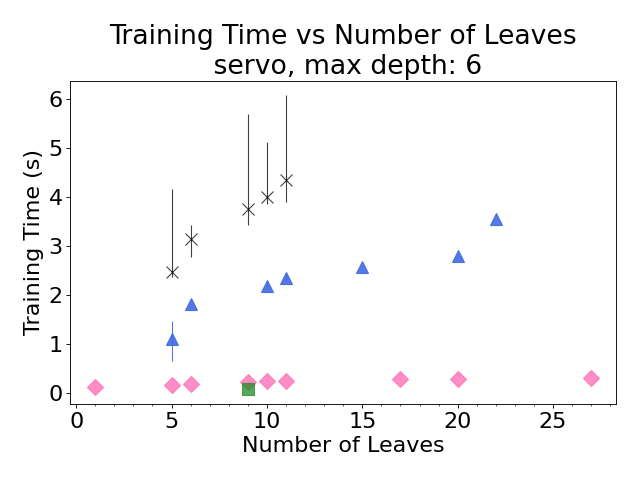}
    \includegraphics[width=0.41\textwidth]{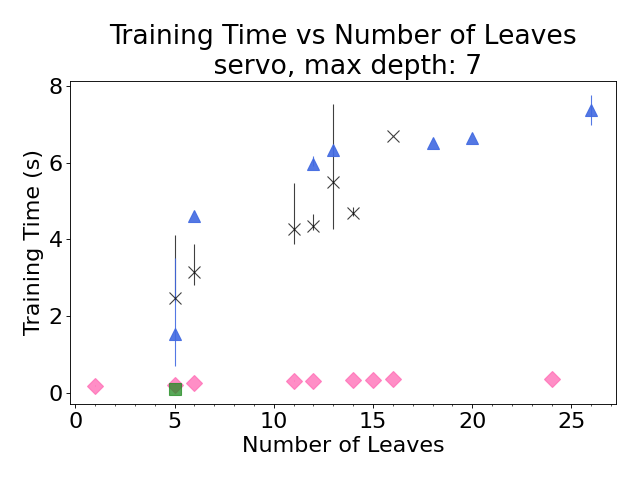}
    \includegraphics[width=0.41\textwidth]{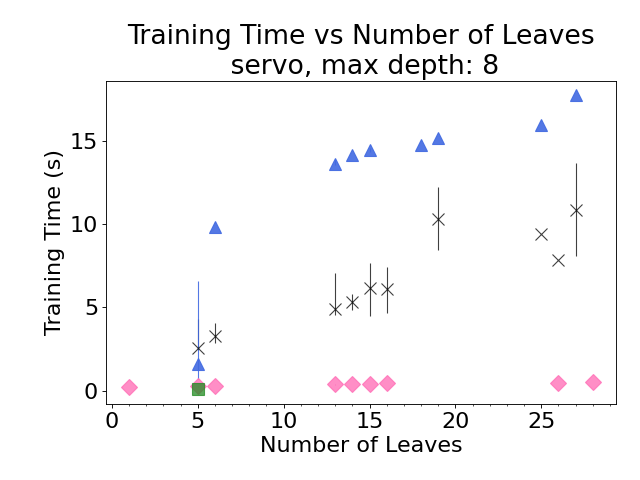}
    \includegraphics[width=0.41\textwidth]{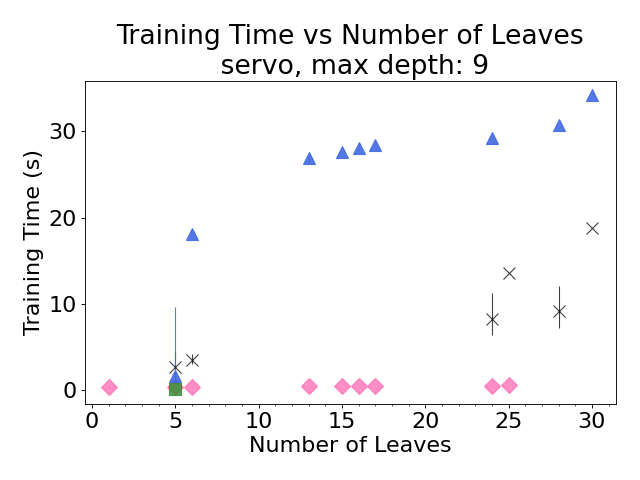}
    
    \caption{Training time of OSRT, IAI, Evtree, as a function of number of leaves, on dataset: servo}
    \label{fig:tvs:servo}
\end{figure*}

\begin{figure*}[htbp]
    \centering
    \includegraphics[width=0.41\textwidth]{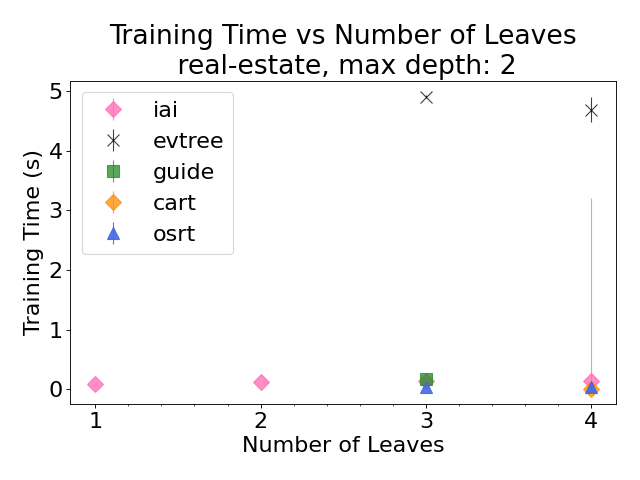}
    \includegraphics[width=0.41\textwidth]{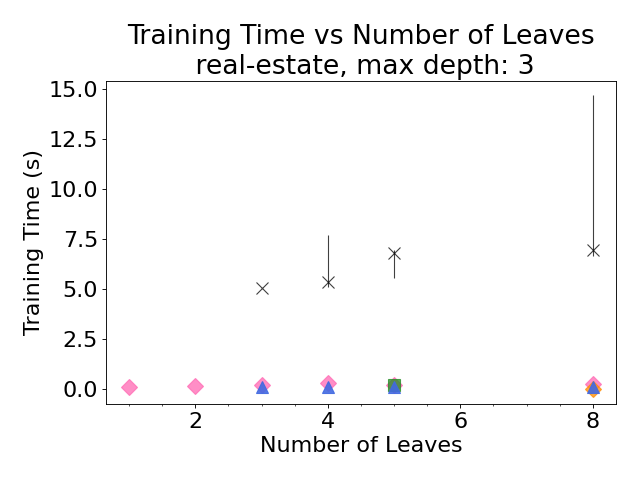}
    \includegraphics[width=0.41\textwidth]{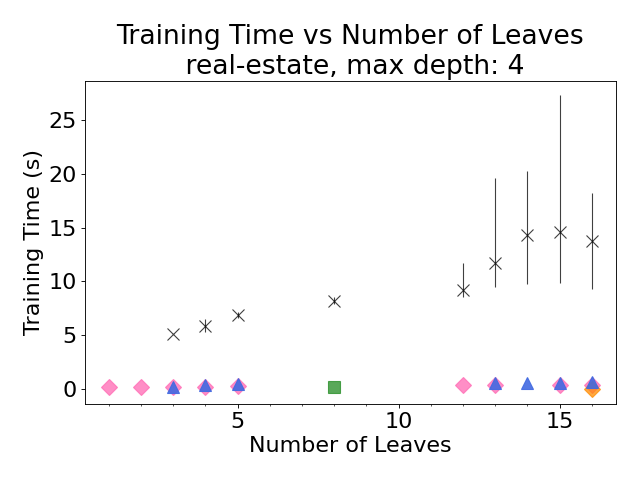}
    \includegraphics[width=0.41\textwidth]{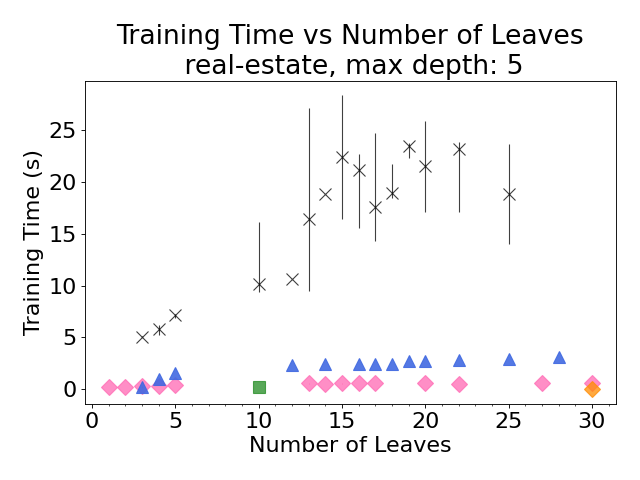}
    \includegraphics[width=0.41\textwidth]{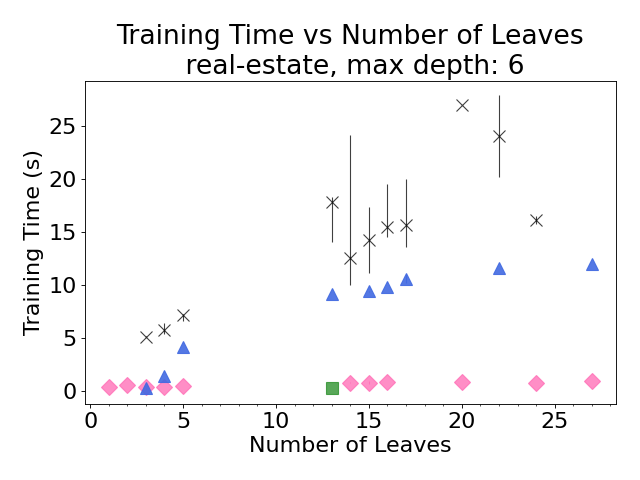}
    \includegraphics[width=0.41\textwidth]{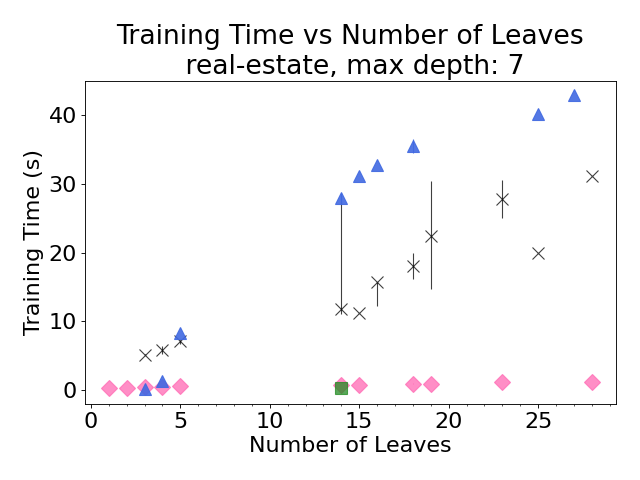}
    \includegraphics[width=0.41\textwidth]{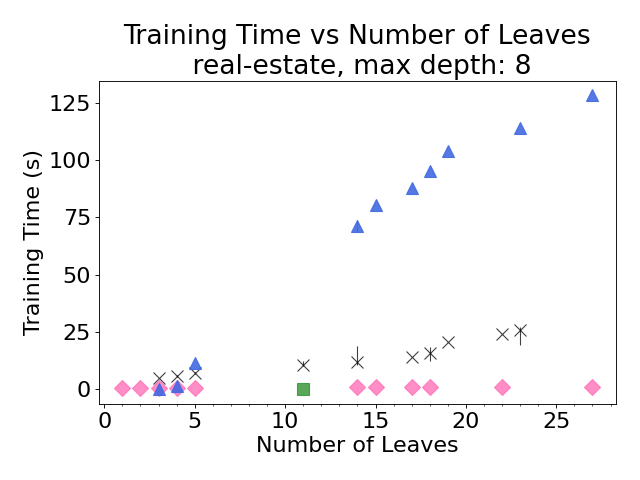}
    \includegraphics[width=0.41\textwidth]{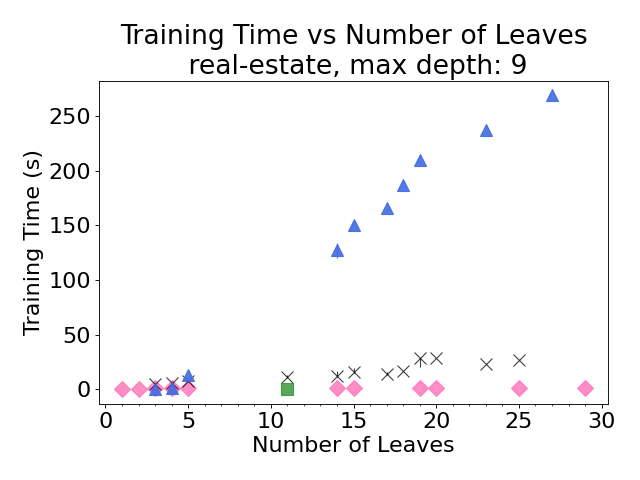}
    \caption{Training time of OSRT, IAI, Evtree, CART, GUIDE, as a function of number of leaves, on dataset: real-estate}
    \label{fig:tvs:real-estate}
\end{figure*}

\begin{figure*}[htbp]
    \centering
    \includegraphics[width=0.41\textwidth]{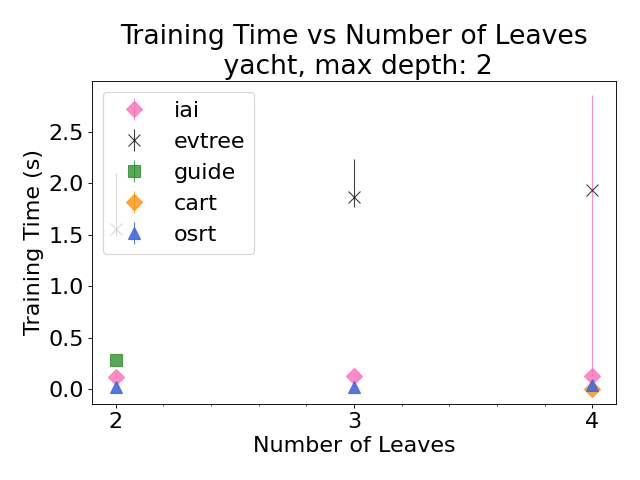}
    \includegraphics[width=0.41\textwidth]{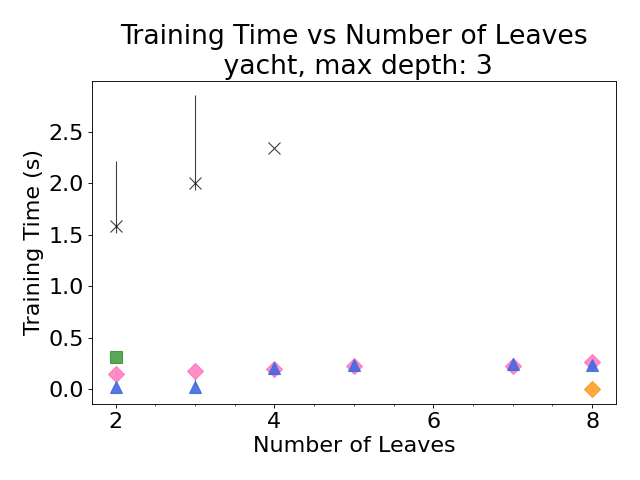}
    \includegraphics[width=0.41\textwidth]{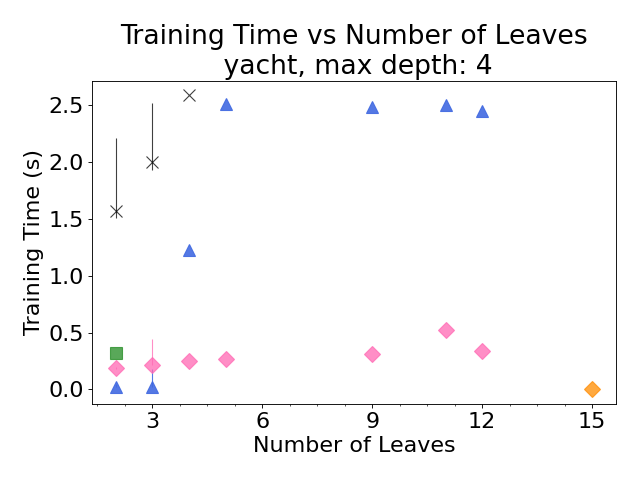}
    \includegraphics[width=0.41\textwidth]{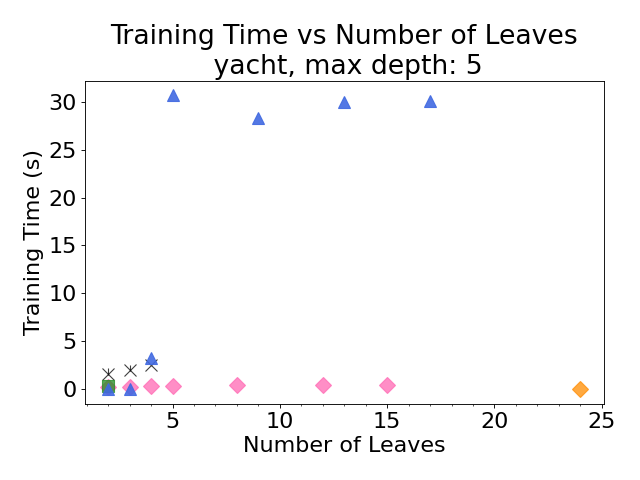}
    \includegraphics[width=0.41\textwidth]{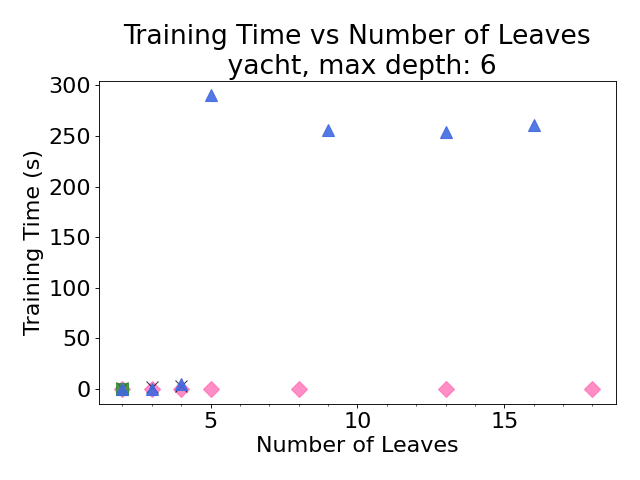}
    \includegraphics[width=0.41\textwidth]{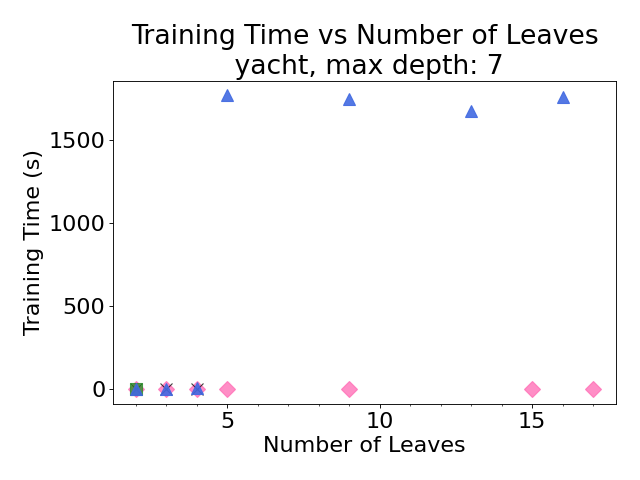}
    \includegraphics[width=0.41\textwidth]{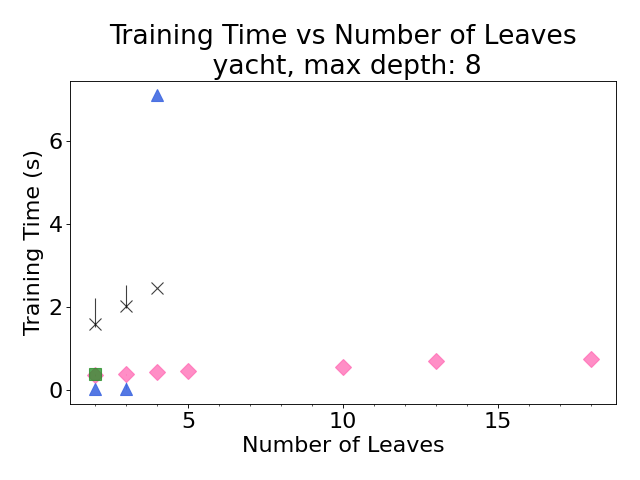}
    \includegraphics[width=0.41\textwidth]{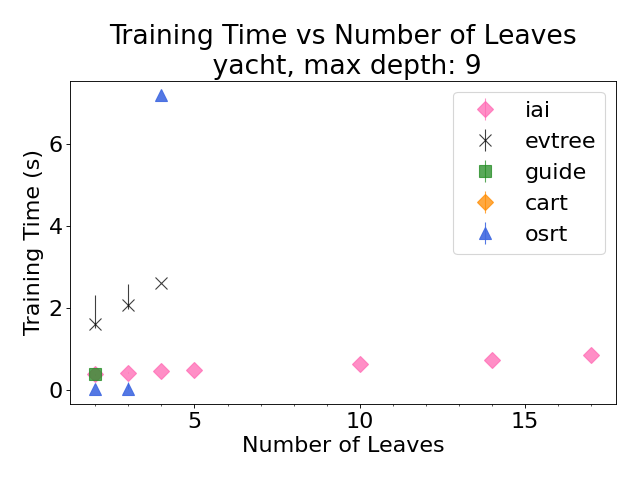}
    \caption{Training time of OSRT, IAI, Evtree, CART, GUIDE, as a function of number of leaves, on dataset: yacht}
    \label{fig:tvs:yacht}
\end{figure*}

\newpage 
\section{Experiment: Cross Validation}\label{exp:cv}
\noindent\textbf{Collection and Setup:}
We ran 5-fold cross-validation on the 5 datasets: \textit{airfoil, sync, servo, seoul-bike, insurance}. The time limit was set to 5 minutes. For each dataset, we ran algorithms with different configurations:
\begin{itemize}
    \item CART: We ran this algorithm with 4 different configurations: depth limit, $d$, ranging from 2 to 5, and a corresponding maximum leaf limit $2^d$. All other parameters were set to their default.
    \item GUIDE: We ran this algorithm with 4 different configurations: depth limit, $d$, ranging from 2 to 5, and a corresponding maximum leaf limit $2^d$. The minimum leaf node size was set to 2. All other parameters were set to the default.
    \item IAI: We ran this algorithm with $4 \times 10$ different configurations: depth limits ranging from 2 to 5, and 10 different regularization coefficients (0.1, 0.05, 0.025, 0.01, 0.0075, 0.005, 0.0025, 0.001, 0.0005, 0.0001). 
    \item Evtree: We ran this algorithm with $4 \times 20$ different configurations: depth limits ranging from 2 to 5, and 20 different regularization coefficients (0.1, 0.2, 0.3, 0.4, 0.5, 0.6, 0.7, 0.8, 0.9, 1.0, 1.1, 1.2, 1.3, 1.4, 1.5, 1.6, 1.7, 1.8, 1.9, 2). The minimum leaf node size was 1, minimum internal node size was 2. All other parameters were set to the default.
    \item OSRT: We ran this algorithm with $4 \times 10$ different configurations: depth limits ranging from 2 to 5, and 10 different regularization coefficients (0.1, 0.05, 0.025, 0.01, 0.0075, 0.005, 0.0025, 0.001, 0.0005, 0.0001).
\end{itemize}

\noindent\textbf{Calculations:} We drew one plot per dataset and depth. For each combination of regularization coefficient and algorithm in the same plot, we produced a set of up to 5 trees, depending on if the runs exceeded the time limit. We summarized the measurements of training loss, testing loss and number of leaves across
the set of up to 5 trees by plotting the median, showing the minimum and maximum number of leaves, the best and worst
training/testing error in the set as the lower and upper error values respectively.

\noindent\textbf{Results:} Figure \ref{fig:cv:airfoil} to \ref{fig:cv:insurance} show that \ourmethod{} trees produce the lowest testing loss among all the regression trees. We noticed that the generalization performance of GUIDE is much worse than that of the other four methods. (GUIDE trees are not shown, because their testing loss is over four times higher than that of other methods). If an optimal tree significantly outperforms other sub-optimal trees in terms of training performance, it is also outperformed in testing (e.g., \textit{airfoil} depth 5), otherwise the difference in testing loss becomes insignificant due to generalization error. Note that large trees start overfitting when depth is greater than 4 or 5, and sparse trees tend to have better generalization.

\begin{figure*}[htbp]
    \centering
    \includegraphics[width=0.4\textwidth]{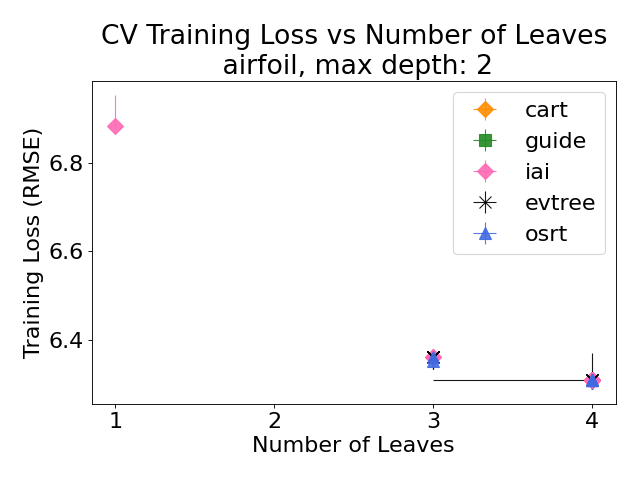}
    \includegraphics[width=0.4\textwidth]{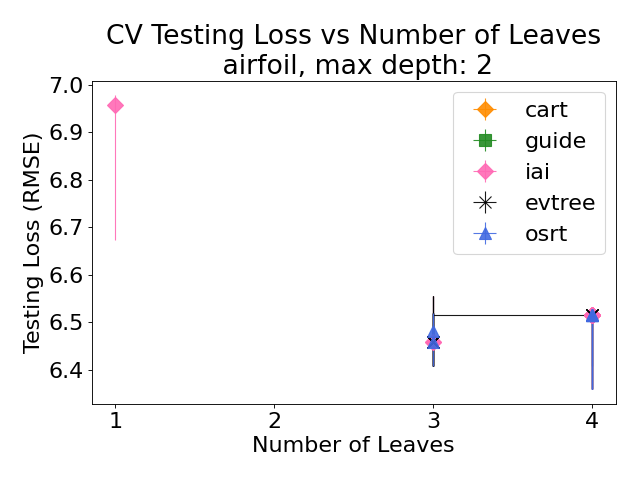}
    \includegraphics[width=0.4\textwidth]{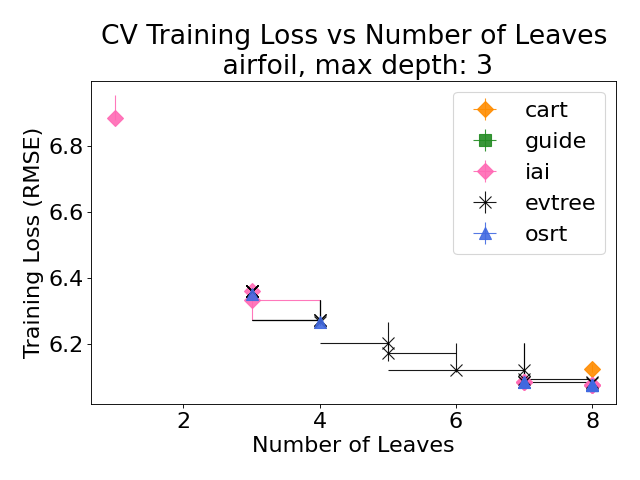}
    \includegraphics[width=0.4\textwidth]{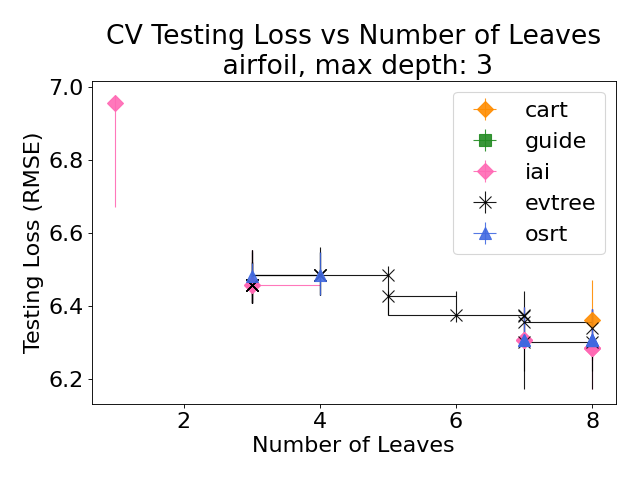}
    \includegraphics[width=0.4\textwidth]{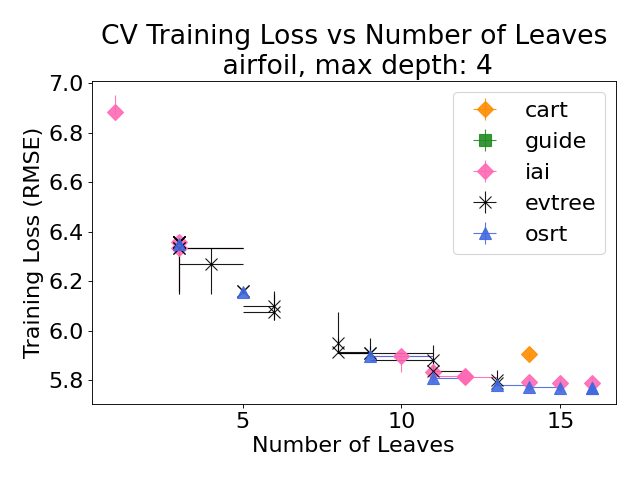}
    \includegraphics[width=0.4\textwidth]{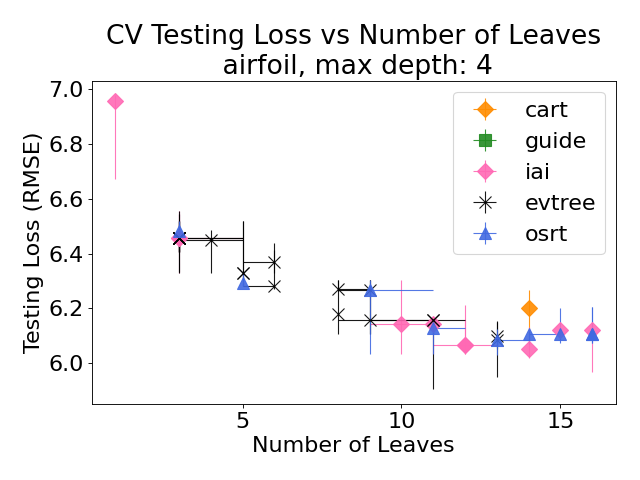}
    \includegraphics[width=0.4\textwidth]{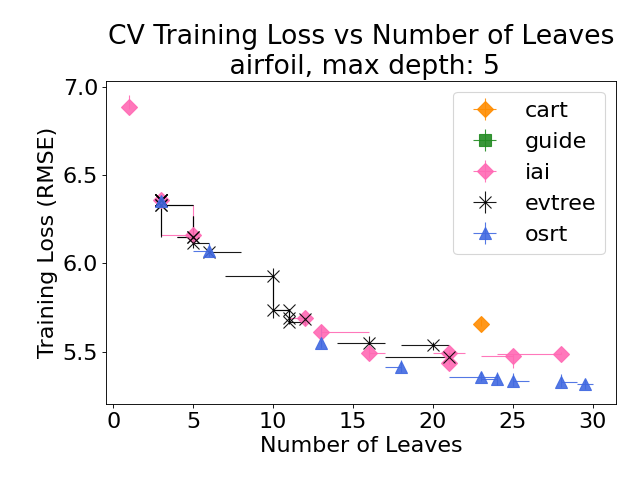}
    \includegraphics[width=0.4\textwidth]{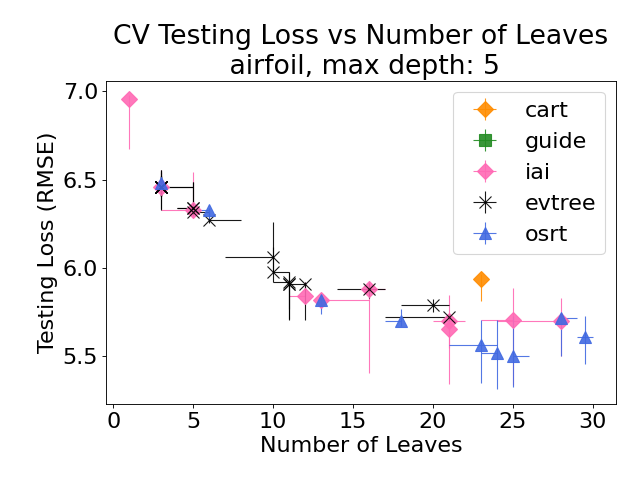}
    
    \caption{5-fold CV of OSRT, IAI, Evtree, CART, GUIDE as a function of number of leaves on dataset: airfoil}
    \label{fig:cv:airfoil}
\end{figure*}

\begin{figure*}[htbp]
    \centering
    \includegraphics[width=0.4\textwidth]{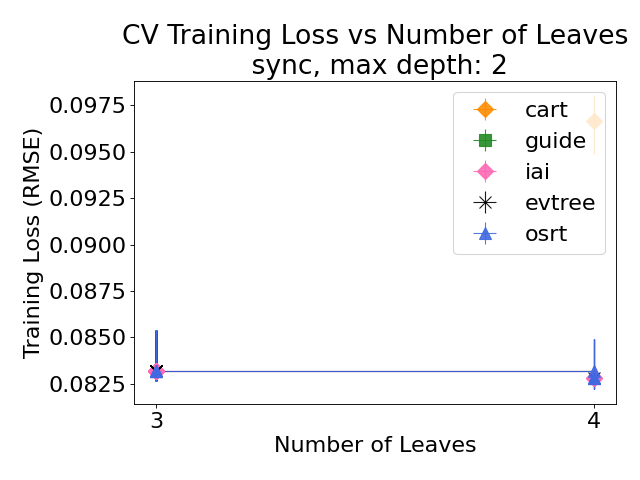}
    \includegraphics[width=0.4\textwidth]{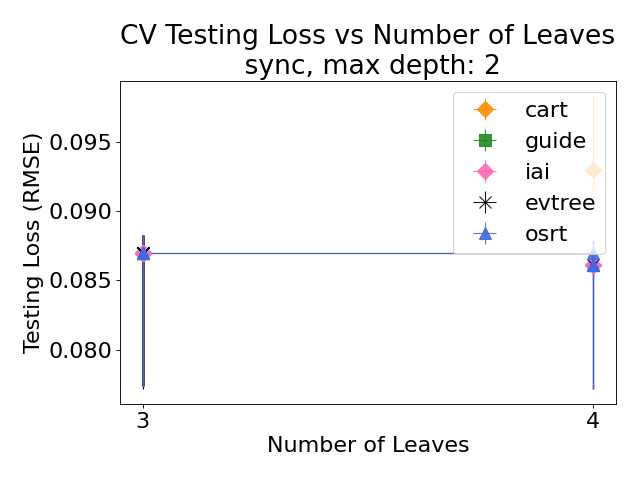}
    \includegraphics[width=0.4\textwidth]{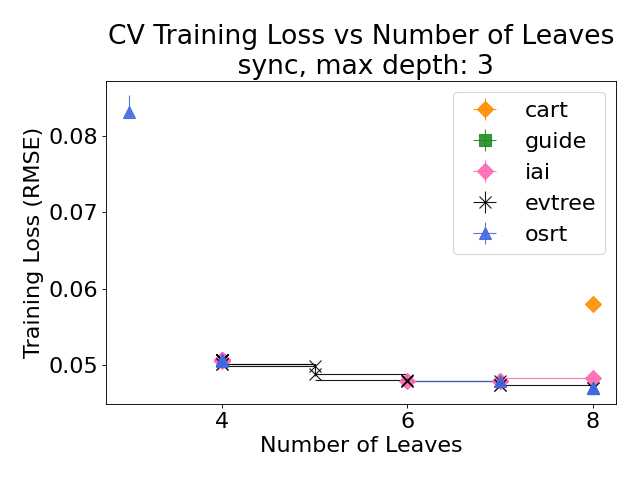}
    \includegraphics[width=0.4\textwidth]{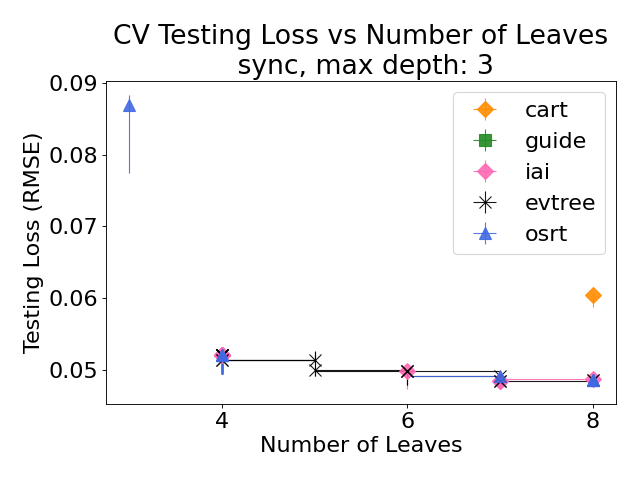}
    \includegraphics[width=0.4\textwidth]{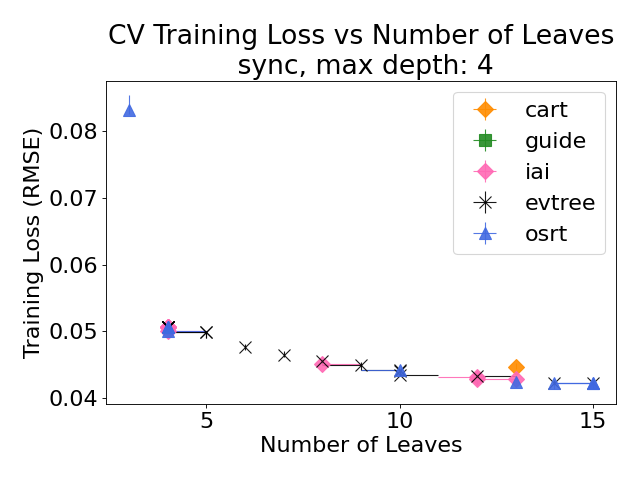}
    \includegraphics[width=0.4\textwidth]{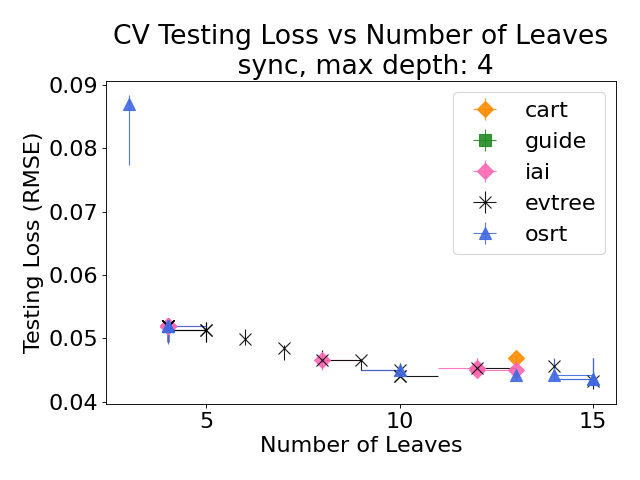}
    \includegraphics[width=0.4\textwidth]{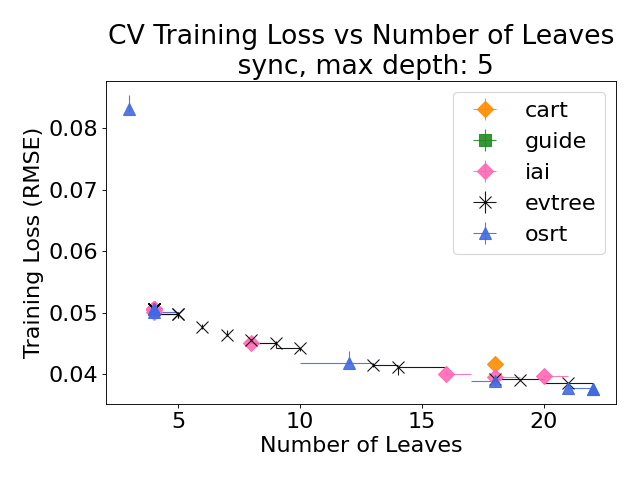}
    \includegraphics[width=0.4\textwidth]{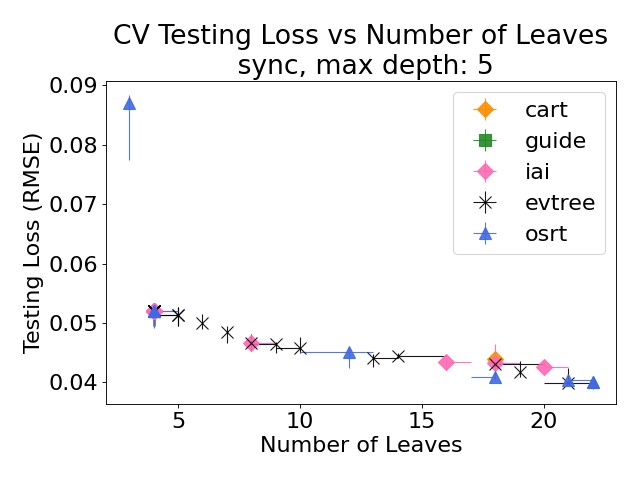}
    
    \caption{5-fold CV of OSRT, IAI, Evtree, CART, GUIDE as a function of number of leaves on dataset: sync}
    \label{fig:cv:sync}
\end{figure*}

\begin{figure*}[htbp]
    \centering
    \includegraphics[width=0.4\textwidth]{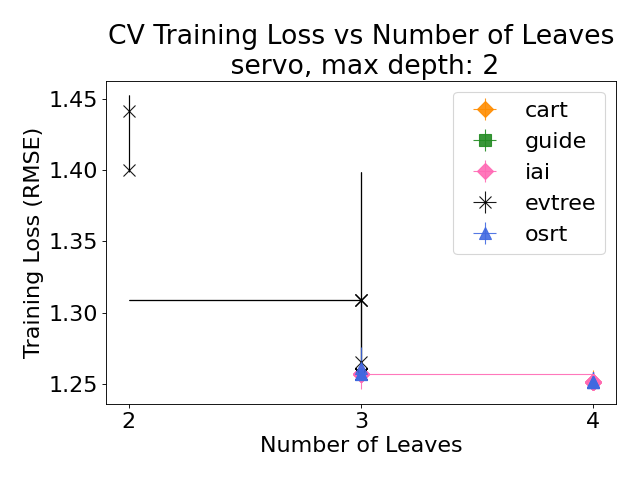}
    \includegraphics[width=0.4\textwidth]{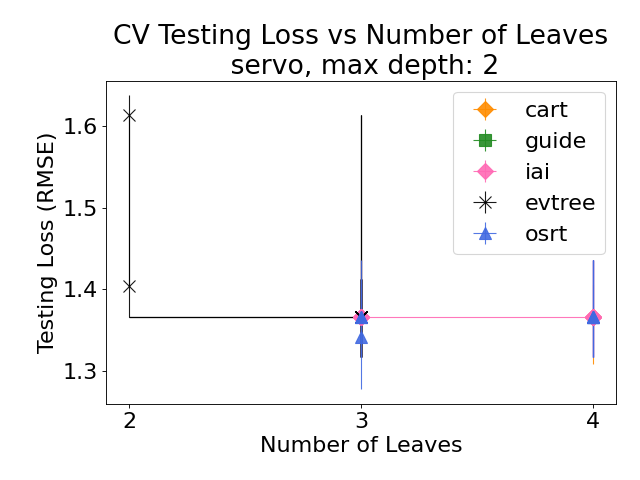}
    \includegraphics[width=0.4\textwidth]{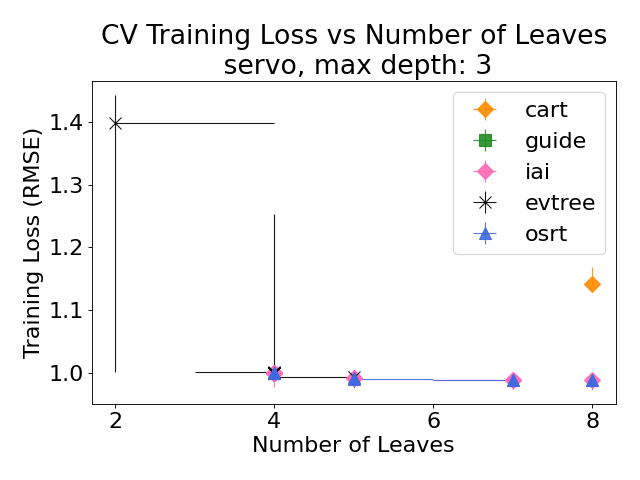}
    \includegraphics[width=0.4\textwidth]{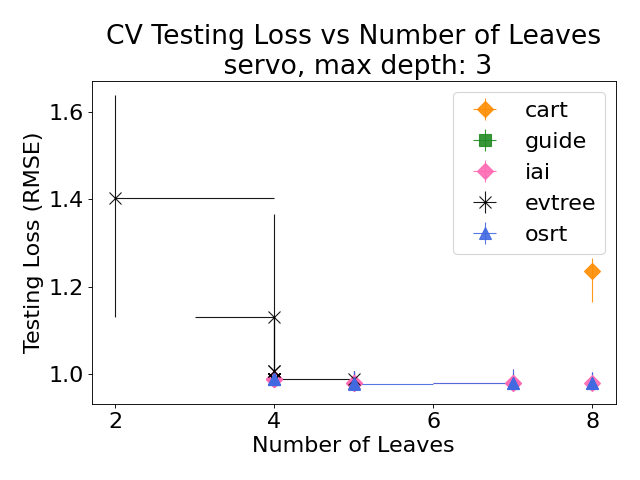}
    \includegraphics[width=0.4\textwidth]{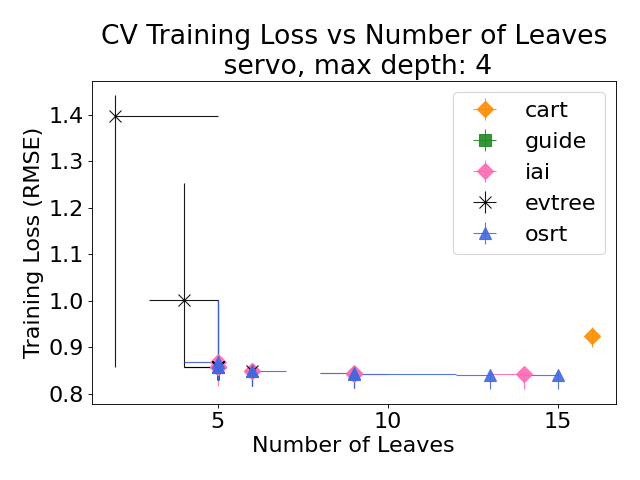}
    \includegraphics[width=0.4\textwidth]{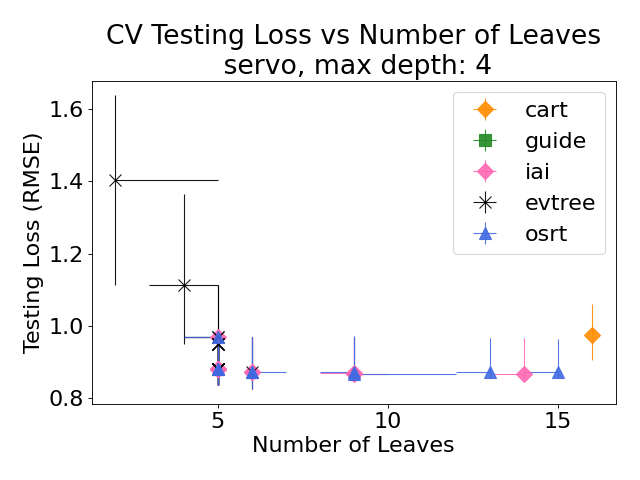}
    \includegraphics[width=0.4\textwidth]{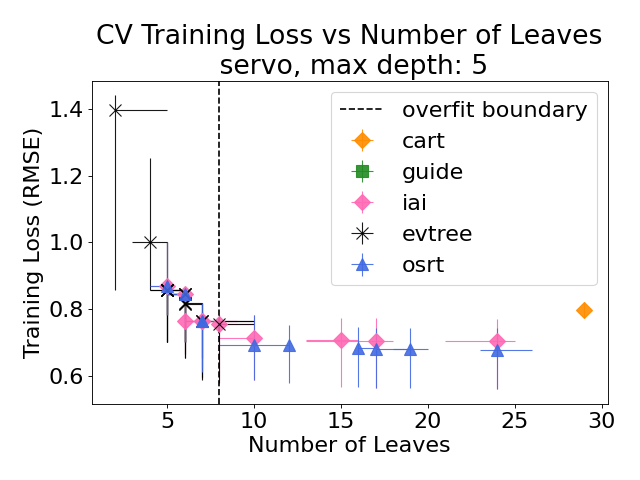}
    \includegraphics[width=0.4\textwidth]{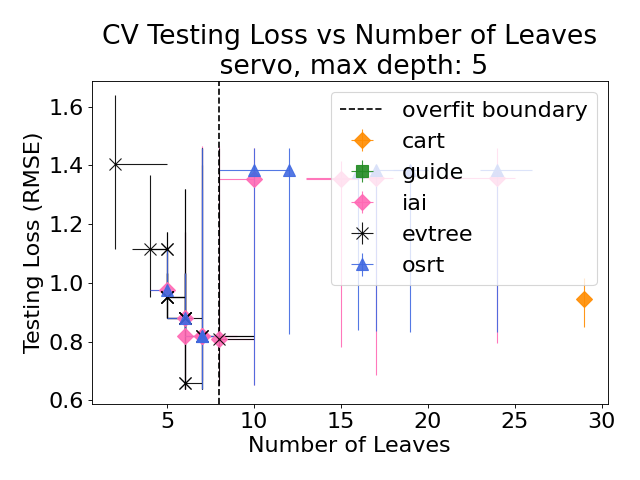}
    
    \caption{5-fold CV of OSRT, IAI, Evtree, CART, GUIDE as a function of number of leaves on dataset: servo}
    \label{fig:cv:servo}
\end{figure*}

\begin{figure*}[htbp]
    \centering
    \includegraphics[width=0.4\textwidth]{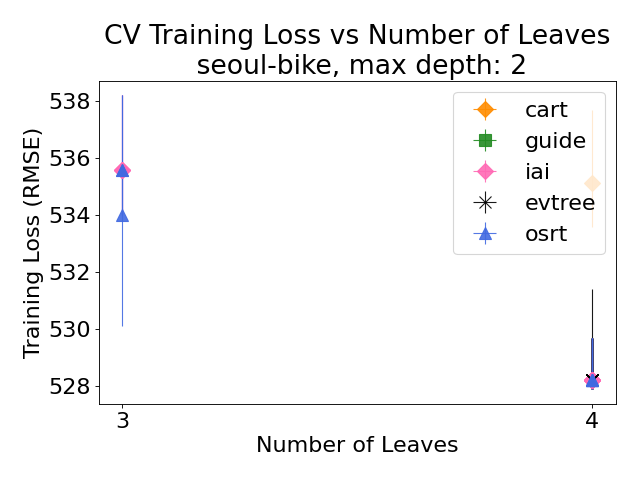}
    \includegraphics[width=0.4\textwidth]{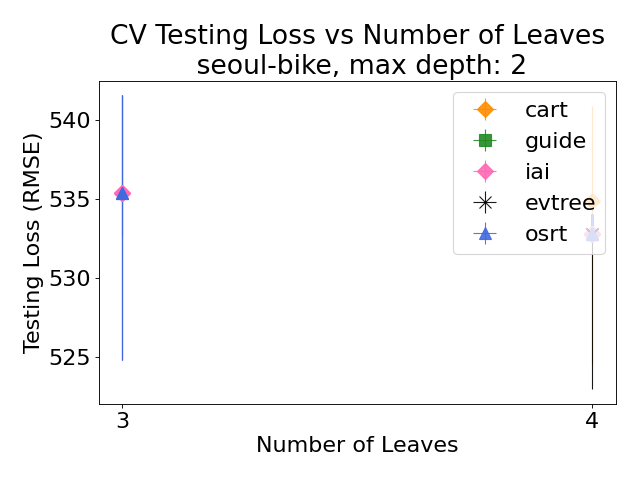}
    \includegraphics[width=0.4\textwidth]{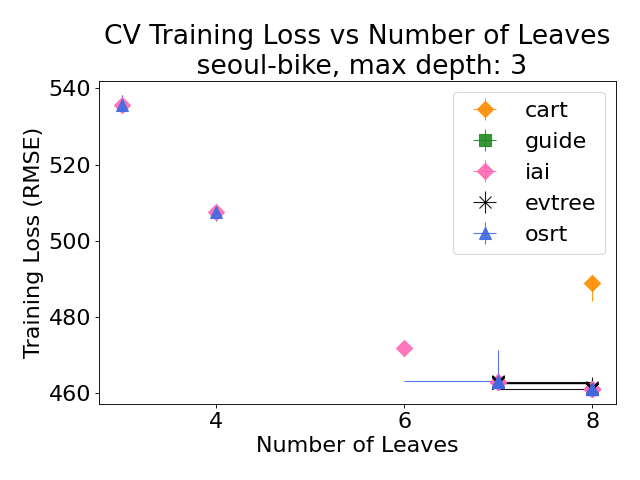}
    \includegraphics[width=0.4\textwidth]{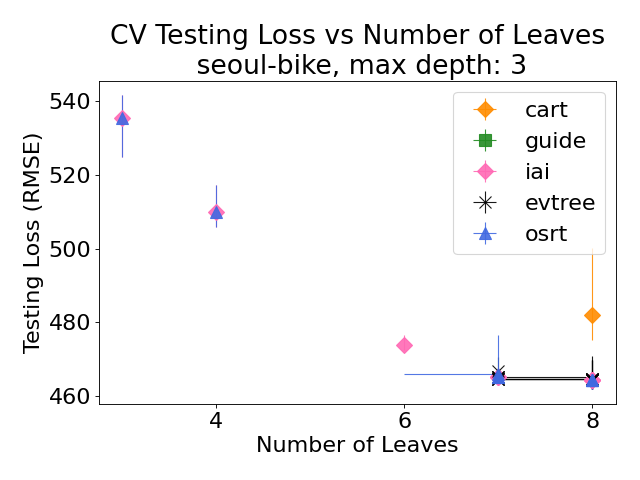}
    \includegraphics[width=0.4\textwidth]{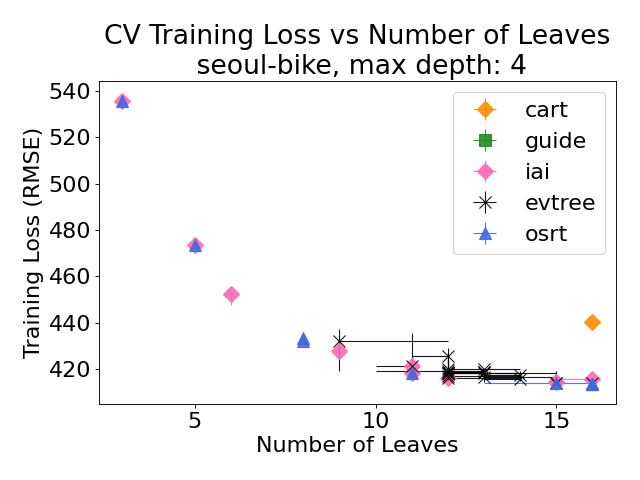}
    \includegraphics[width=0.4\textwidth]{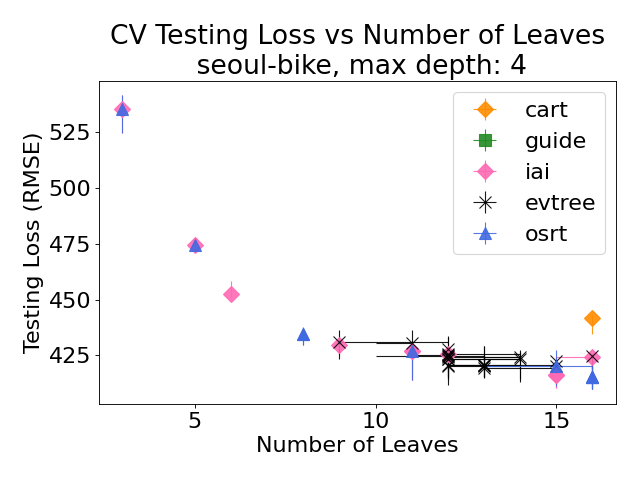}
    \includegraphics[width=0.4\textwidth]{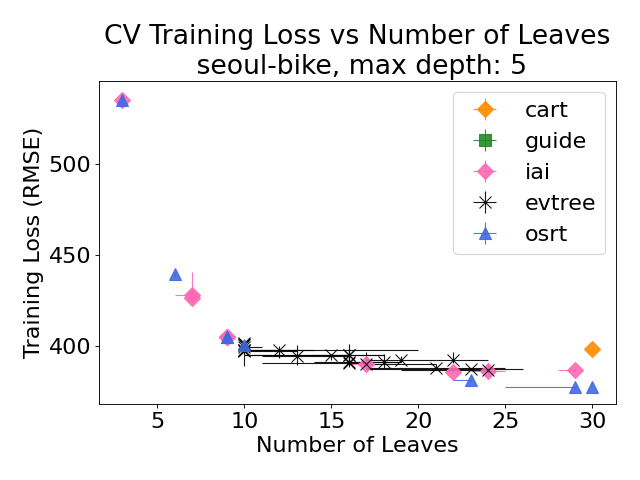}
    \includegraphics[width=0.4\textwidth]{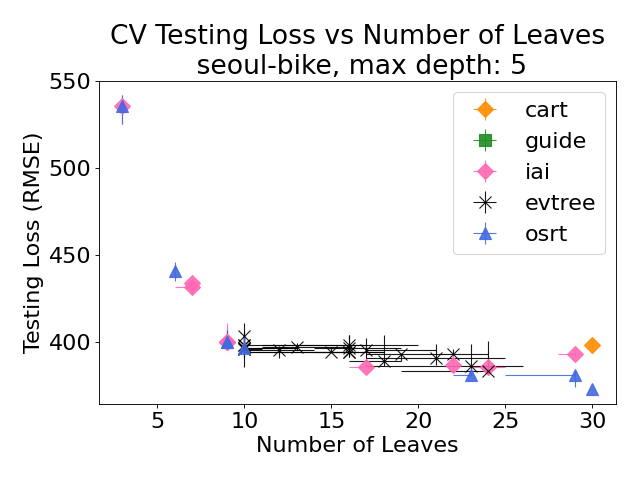}
    
    \caption{5-fold CV of OSRT, IAI, Evtree, CART, GUIDE as a function of number of leaves on dataset: seoul-bike}
    \label{fig:cv:seoul-bike}
\end{figure*}

\begin{figure*}[htbp]
    \centering
    \includegraphics[width=0.4\textwidth]{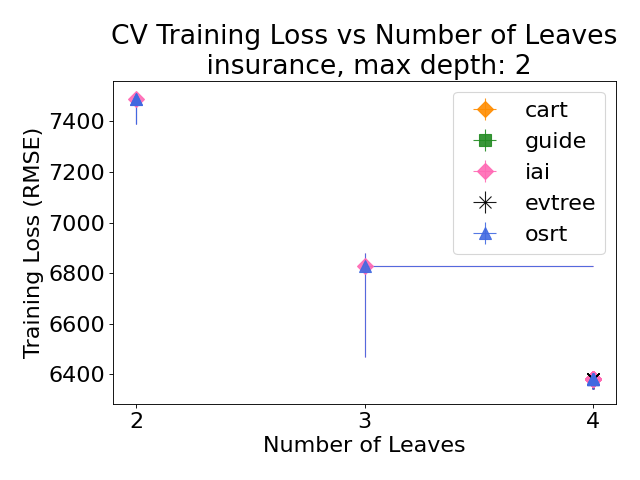}
    \includegraphics[width=0.4\textwidth]{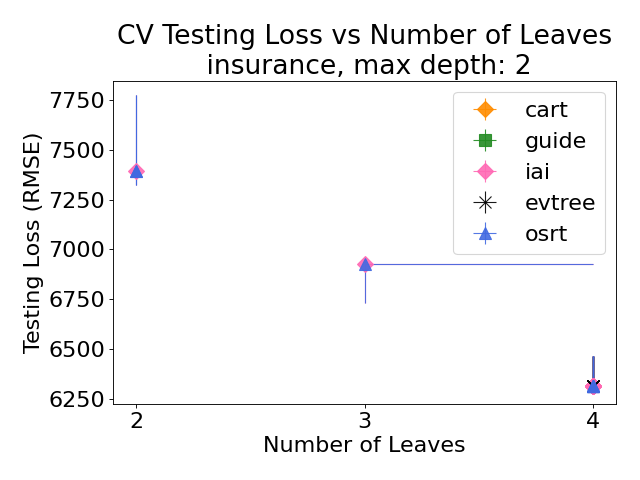}
    \includegraphics[width=0.4\textwidth]{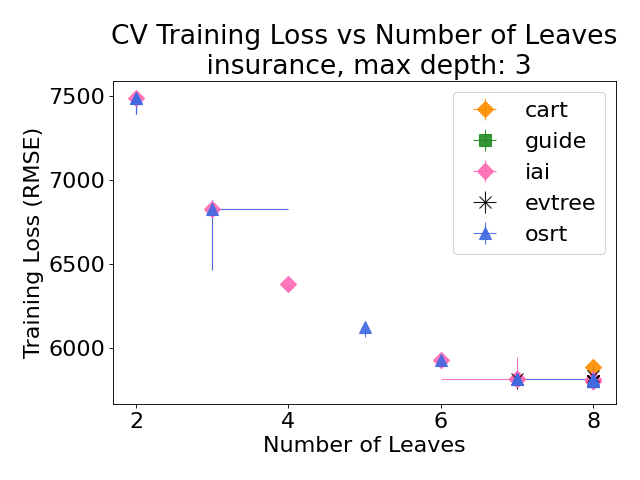}
    \includegraphics[width=0.4\textwidth]{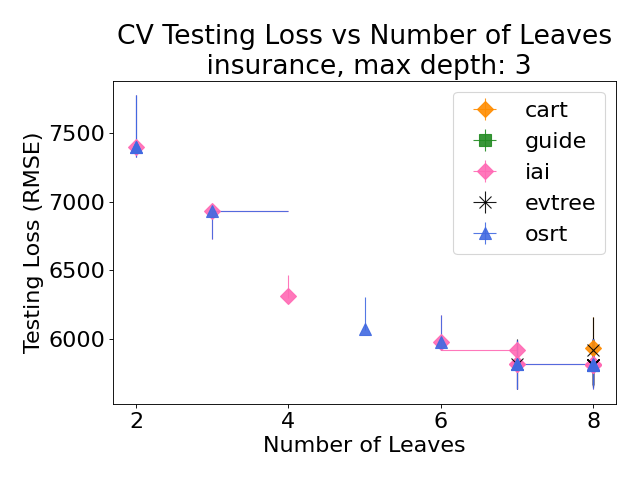}
    \includegraphics[width=0.4\textwidth]{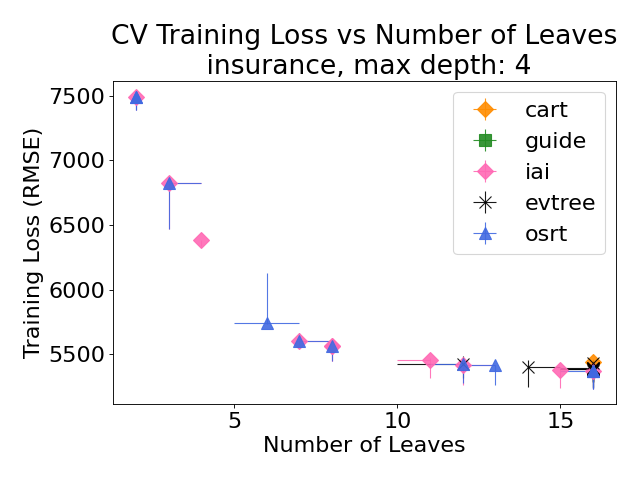}
    \includegraphics[width=0.4\textwidth]{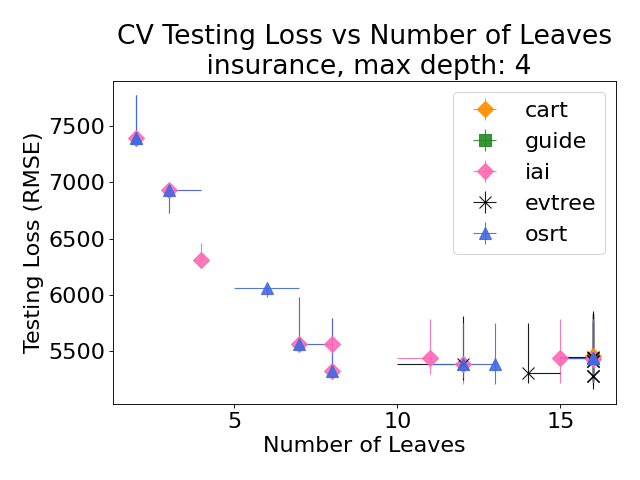}
    \includegraphics[width=0.4\textwidth]{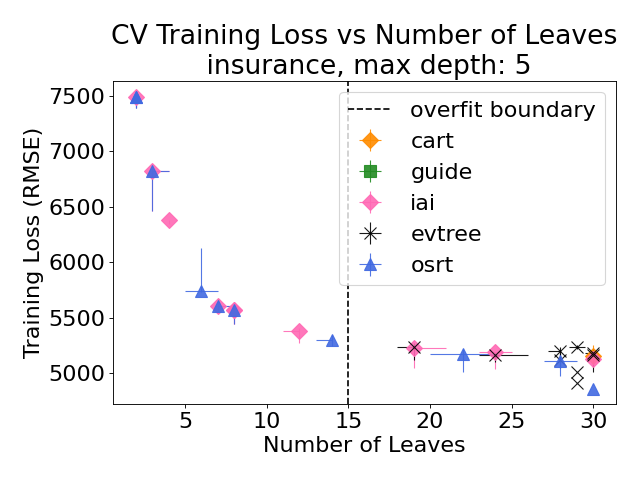}
    \includegraphics[width=0.4\textwidth]{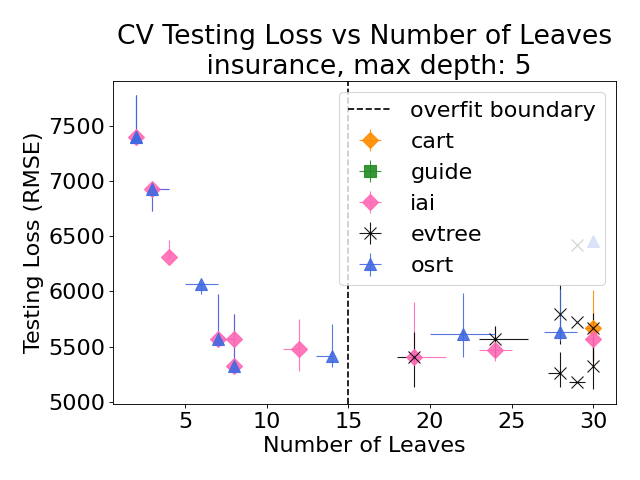}
    
    \caption{5-fold CV of OSRT, IAI, Evtree, CART, GUIDE as a function of number of leaves on dataset: insurance}
    \label{fig:cv:insurance}
\end{figure*}
    

\newpage 
\section{Experiment: Scalability}\label{exp:scalability}
\noindent\textbf{Collection and Setup:} We ran this experiment on the dataset \textit{household} for CART, IAI, Evtree and OSRT. We subsampled 100, 500, 1000, 5000, 10,000, 50,000, 100,000, 500,000, 1000,000 samples from it to form 10 datasets (including the original dataset). The depth limit was set to 5 (for all methods) and regularization coefficient to 0.035 for IAI and OSRT and 0.2 for Evtree. We set the time limit of each run to 30 minutes in this experiment.\\
\noindent\textbf{Results:} Figure \ref{fig:appendix:scalability_full} shows that the scalability of our method is significantly better than Evtree, and it generally performs better than IAI. Evtree timed out when the sample size was larger than 50,000. Figure \ref{fig:appendix:scalability} is zoomed in to the regime of low training times, showing that our method is faster than all but CART when sample size is less than 10,000. 
\begin{figure}
    \centering
    \includegraphics[width=0.5\textwidth]{}
    \caption{Training time of CART, GUIDE, IAI, Evtree
and OSRT as a function of sample size on dataset: house-hold, $d = 5, \lambda = 0.035$. (30-minute time limit).}
    \label{fig:appendix:scalability_full}
\end{figure}
\begin{figure*}[ht]
    \centering
    \includegraphics[width=0.5\textwidth]{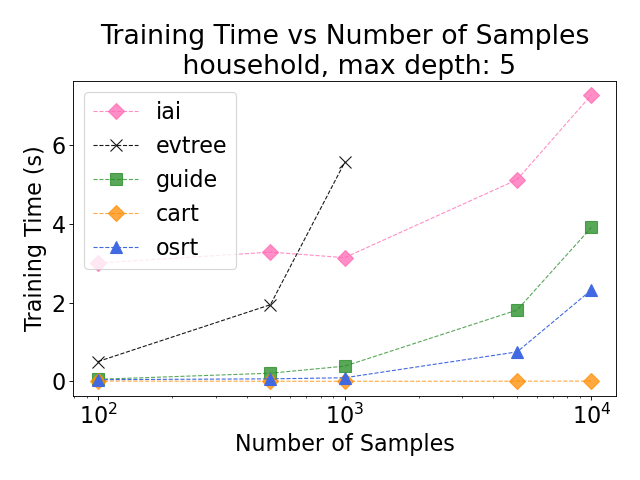}
    \caption{(Zoomed in) Training time of CART, GUIDE, IAI, Evtree
and OSRT as a function of sample size on dataset: house-
hold, $d = 5, \lambda = 0.035$. 30-minute time limit; the plot omits evtree results after 5000 samples (65.69 seconds for 5000 samples, 172.143 seconds for 10,000 samples, 1718.138 seconds for 50,000 samples, time out for the rest).}
    \label{fig:appendix:scalability}
\end{figure*}

\section{Experiment: Ablation}\label{exp:ablation}
\subsection{Value of k-Means Lower Bound}\label{exp:ablation_lb}
We explored how much our new \textit{k-Means} lower bound contributes to speeding up the optimization. Recall that our method reaches optimality when the current best objective score (upper bound) converges with the objective lower bound.\\
\noindent\textbf{Collection and Setup:} We ran this experiment on 5 datasets \textit{(airquality, enb-cool, enb-heat, sync, yacht)} for variations of \ourmethod. We set depth limit to 6 and regularization coefficient to 0.005 for all datasets. For each dataset, we ran \ourmethod{} twice, once using the \textit{k-Means lower bound} and once using the \textit{equivalent point lower bound}. We set the time limit to 30 minutes in this experiment.

\noindent\textbf{Calculations:} We recorded the elapsed time, iterations, and size of dependency graph when each run of \ourmethod{} converged or timed out.

\noindent\textbf{Results:} Figure \ref{fig:ablation:lb} shows that our novel \textit{k-Means lower bound} is significantly faster than the \textit{equivalent points lower bound}. We noticed substantially better running time when the k-Means lower bound is much tighter than the equivalent points lower bound. It typically reduces runtime by a factor of two and sometimes more. The iteration number and graph size in our optimization framework were also reduced when using the k-Means lower bound, which means less memory was used.
\begin{figure*}[ht]
    \centering
    \includegraphics[width=0.45\textwidth]{figures/binary_settings/ablation/airfoil/depth6_reg_0.005.png}
    \includegraphics[width=0.45\textwidth]{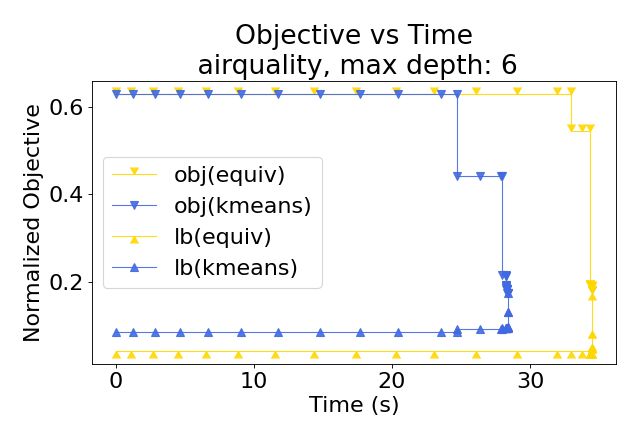}
    \includegraphics[width=0.45\textwidth]{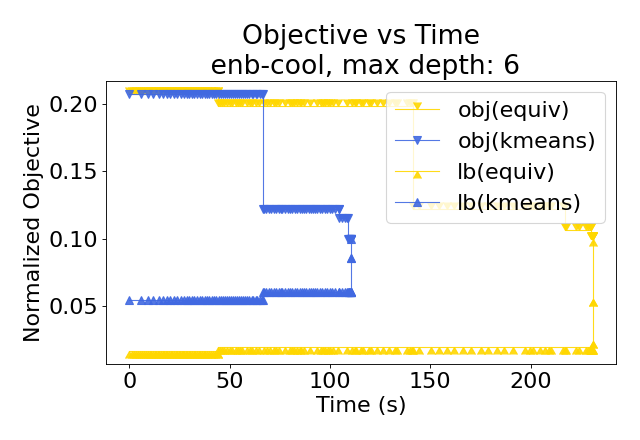}
    \includegraphics[width=0.45\textwidth]{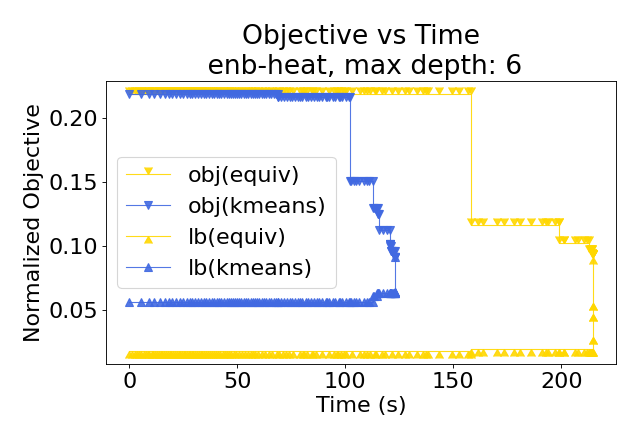}
    \includegraphics[width=0.45\textwidth]{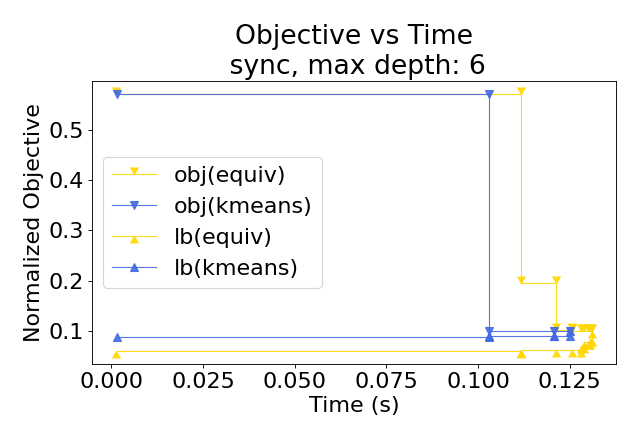}
    \includegraphics[width=0.45\textwidth]{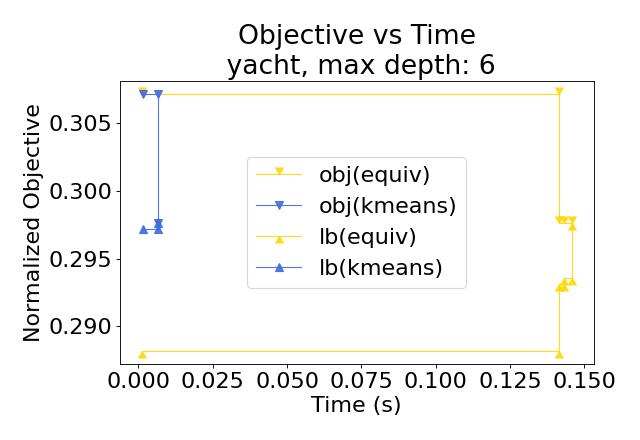}
    
    \caption{Convergence Time of OSRT variations: we verified \textit{k-Means lower bound }has fewer iterations to completion than the \textit{equivalent points lower bound} on Datasets \textit{sync, yacht}.}
    \label{fig:ablation:lb}
\end{figure*}

\subsection{Depth Constraint}\label{exp:ablation_depth}
Recall that our method can optimize the objective score \textit{without depth constraints} while other methods cannot. Adding a depth constraint significantly reduces the search space, so being able to work without a depth constraint, OSRT can solve \textit{much harder} problems than other methods. For most datasets we used in previous sections, a depth constraint substantially reduced OSRT searching time; but we also observed  cases where, given a relatively small regularization coefficient and a large depth limit on some datasets (e.g., \textit{sync, airquality}), \ourmethod{} took longer to reach optimality compared to optimizing with the regularization penalty only. For the \textit{airquality} dataset, we used $\lambda = 0.001$ and ran OSRT with depth limit 7 and with no depth limit: OSRT without the depth limit took 115 seconds to reach optimality while the other run took 203 seconds. This is because the hard depth constraint prevents the algorithm from re-using the bounds it has calculated.  
\section{Experiment: Execution Trace}\label{exp:profile}
Recall that OSRT keeps tracking the lower and upper bound of root problem objective, and optimal trees are found when two bounds converge.
\noindent\textbf{Collection and Setup:} We ran this experiment on 4 datasets \textit{(airfoil, airquality, optical, insurance)} only for \ourmethod. We used two configurations: $(d=4, \lambda = 0.05)$ and $(d= 7, \lambda = 0.001)$. For each combination of dataset and configuration, we ran \ourmethod{} and recorded the lower and upper bounds of the objective score during the algorithm execution. We set the time limit as 30 minutes in this experiment.

\noindent\textbf{Results:} Figure \ref{fig:trace} shows the optimality gap of the objective score during execution of our method. If our method does not reach optimality (e.g., the \textit{optical} dataset) before the time limit, we can still gain insight by examining the difference between the current best objective and the current lower bound while IAI and Evtree do not provide such information.
\begin{figure*}[htbp]
    \centering
    \includegraphics[width=0.45\textwidth]{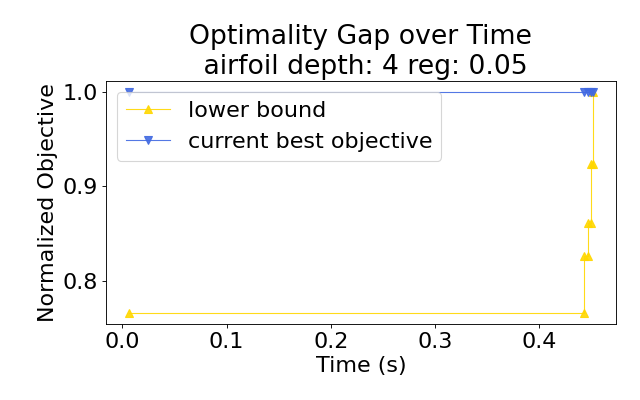}
    \includegraphics[width=0.45\textwidth]{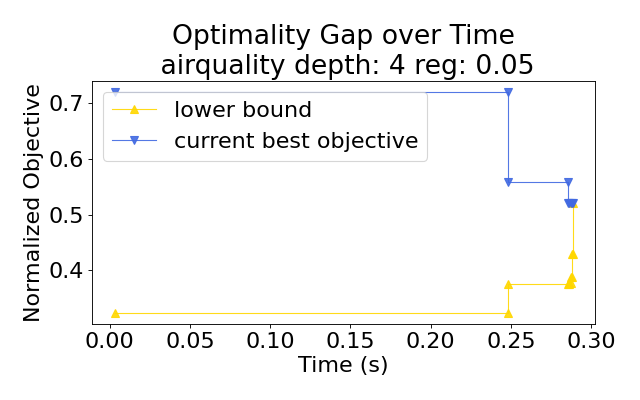}
    \includegraphics[width=0.45\textwidth]{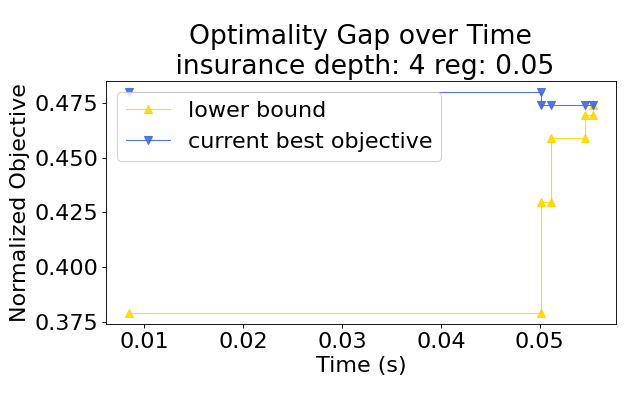}
    \includegraphics[width=0.45\textwidth]{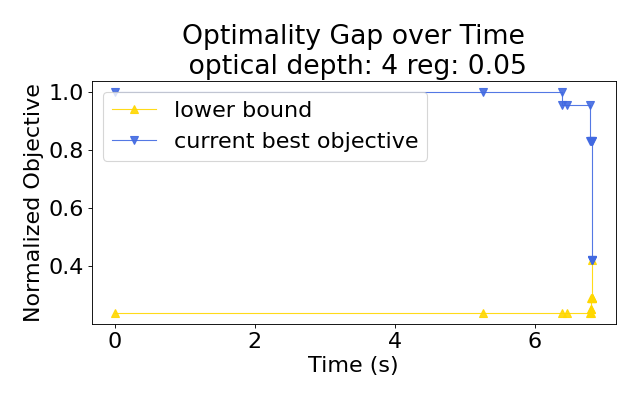}
    \includegraphics[width=0.45\textwidth]{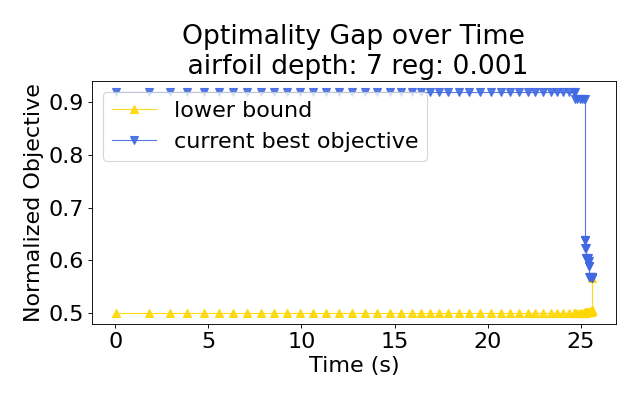}
    \includegraphics[width=0.45\textwidth]{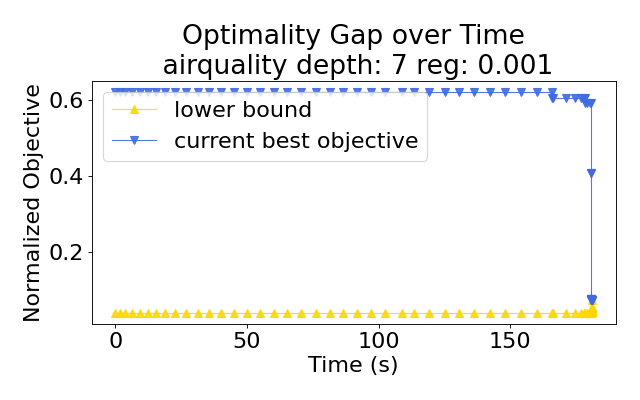}
    \includegraphics[width=0.45\textwidth]{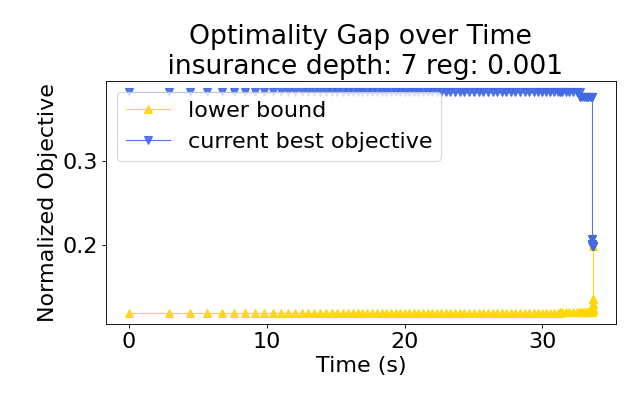}
    \includegraphics[width=0.45\textwidth]{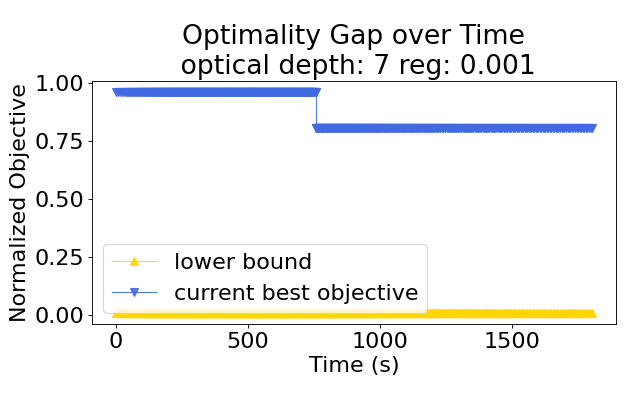}
    \caption{Execution Trace of OSRT over time, timed out on dataset \textit{optical} with small regularization.}
    \label{fig:trace}
\end{figure*}
\section{Optimal Regression Trees}\label{exp:opt_tree}
We now visually compare the optimal trees found by OSRT and trees. Figure \ref{fig:trees_6leaves} contains two 6-leaf trees generated by OSRT and Evtree respectively. The OSRT tree has training mean squared error (MSE) of 247.50, $R^2: 77.44\%$ while Evtree has training MSE of 324.61, $R^2: 70.41\%$. OSRT explains 7\% more training data variance than Evtree with the same number of leaves. Figures \ref{fig:osrt_tree2} and \ref{fig:iai_tree} show two 13-leaf trees generated by OSRT and IAI respectively. The OSRT tree has training MSE of 127.46, $R^2: 88.38\%$ while Evtree has training MSE of 156.59, $R^2: 85.71\%$. OSRT explains 2.66\% more training data variance than Evtree with the same number of leaves. Figures \ref{fig:osrt_tree3} and \ref{fig:iai_evtree_tree} show two 10-leaf trees generated by OSRT, IAI and Evtree (IAI and Evtree found the same tree). The OSRT tree has training MSE of 0.42838, $R^2: 82.28\%$ while IAI and Evtree have training MSE of 0.45055, $R^2: 81.36\%$. We noticed that these two trees have some identical leaves.


\begin{figure}
    \centering
    \begin{subfigure}[b]{0.45\textwidth}
    \centering
    \begin{forest} [ {$ 77 < Temp \leq 87$}  [ {$6.9 < Wind  \leq  11.5$} ,edge label={node[midway,left,font=\scriptsize]{True}}  [ $\mathbf{43.73}$  ]  [ {$11.5 < Wind \leq 16.1$}   [ $\mathbf{30.86}$  ]  [ {$23.5 < Day\leq 31 $}   [ $\mathbf{141.50}$  ]   [ $\mathbf{79.60}$  ]  ] ] ] [ {$87 < Temp\leq 97$} ,edge label={node[midway,right,font=\scriptsize]{False}}  [ $\mathbf{90.06}$  ]   [ $\mathbf{18.66}$  ]  ] ] \end{forest}\caption{OSRT, Training loss: $247.5, R^2: 77.44\%$}
    \end{subfigure}
    \begin{subfigure}[b]{0.45\textwidth}
    \centering
    \begin{forest} [ {$ 87 < Temp\leq 97$} [$\mathbf{90.06}$, edge label={node[midway,left,font=\scriptsize]{True}}]   [ {$77 < Temp\leq 87$} ,edge label={node[midway,right,font=\scriptsize]{False}}  [ {$Month = 7$} [{$8.5 < Day\leq 16$} [$\mathbf{27.33}$] [$\mathbf{61.98}$] ] [{$23.5 < Day\leq 31$} [$\mathbf{89.2}$] [$\mathbf{34.5}$] ]  ]   [ $\mathbf{18.66}$  ]  ] ] \end{forest}\caption{Evtree, Training loss: $324.609, R^2: 70.41\%$}
    \end{subfigure}
    
    \caption{Regression trees produced by OSRT and Evtree for \textbf{airquality} dataset with 6 leaves.}
    \label{fig:trees_6leaves}
\end{figure}

\begin{figure}
    \centering
   \begin{forest} [ {$77 < Temp\leq 87$}  [ {$ 23.5 < Day \leq 31$} ,edge label={node[midway,left,font=\scriptsize]{True}} [ {$ 170.5 < Solar.R\leq 252.25 $}  [ {$Month = 8$}  [ {$6.9 < Wind\leq 11.5  =  1$}   [ $\mathbf{59.00}$  ]   [ $\mathbf{168.00}$  ]  ] [ {$11.5 < Wind\leq 16.1$}   [ $\mathbf{45.00}$  ]   [ $\mathbf{111.50}$  ]  ] ] [ {$252.25 < Solar.R\leq 334$}   [ $\mathbf{63.25}$  ]   [ $\mathbf{36.00}$  ]  ] ] [ {$6.9 < Wind\leq 11.5$}   [ $\mathbf{36.95}$  ]  [ {$252.25< Solar.R\leq 334$}  [ {$16 < Day\leq 23.5 $}   [ $\mathbf{61.00}$  ]   [ $\mathbf{135.00}$  ]  ] [ {$11.5 < Wind\leq 16.1$}   [ $\mathbf{23.80}$  ]   [ $\mathbf{67.33}$  ]  ] ] ] ] [ {$87 < Temp\leq 97$} ,edge label={node[midway,right,font=\scriptsize]{False}}  [ $\mathbf{90.06}$  ]   [ $\mathbf{18.66}$  ]  ] ] \end{forest}
    \caption{Optimal regression tree produced by OSRT for \textbf{airquality} dataset with 13 leaves. Training loss: $127.46, R^2: 88.38\%$}
    \label{fig:osrt_tree2}
\end{figure}


\begin{figure}
    \centering
   \begin{forest} [ {$77 < Temp\leq 87$}  [ {$11.5 < Wind\leq 16.1$},edge label={node[midway,left,font=\scriptsize]{True}}[{$23.5 < Day\leq31$}[$\mathbf{48.5}$] [{$Month = 9$} [$\mathbf{38}$] [$\mathbf{14.33}$]] ] [{$6.9 < Wind\leq 11.5$} [{$23.5 < Day\leq 31$} [{$170.5 < Solar.R\leq 252.25$} [$\mathbf{75.33}$] [$\mathbf{54.6}$] ] [$\mathbf{36.95}$] ] [{$16 < Day\leq 23.5$} [$\mathbf{66.33}$] [{$Month = 8 $}[$\mathbf{168}$] [$\mathbf{104.7}$]]]] ] [ {$87 < Temp\leq 97$} ,edge label={node[midway,right,font=\scriptsize]{False}}  [ {$Month = 8$} [{$170.5 < Solar.R\leq 252.25$}[$\mathbf{93.67}$] [$\mathbf{122}$] ] [$\mathbf{84.7}$]  ]   [ $\mathbf{18.66}$  ]  ] ] \end{forest}
    \caption{Sub-optimal tree produced by IAI for \textbf{airquality} dataset with 13 leaves. Training loss: $156.59, R^2: 85.72\%$}
    \label{fig:iai_tree}
\end{figure}


\begin{figure}
    \centering
    \begin{forest} [ {$pgain = 4$}   [ $\mathbf{0.67}$ ,edge label={node[midway,left,font=\scriptsize]{True}} ]  [ {$pgain = 5$} ,edge label={node[midway,right,font=\scriptsize]{False}}  [ $\mathbf{0.53}$  ]  [ {$pgain = 6$}   [ $\mathbf{0.52}$  ]  [ {$motor = E$}  [ {$screw = D$}   [ $\mathbf{1.10}$  ]  [ {$screw = E$}   [ $\mathbf{0.90}$  ]   [ $\mathbf{3.33}$  ]  ] ] [ {$motor = C$}  [ {$screw = D$}   [ $\mathbf{1.70}$  ]   [ $\mathbf{3.67}$  ]  ] [ {$motor = D$}   [ $\mathbf{1.40}$  ]   [ $\mathbf{4.51}$  ]  ] ] ] ] ] ] \end{forest}
    \caption{Optimal regression tree produced by OSRT for \textbf{servo} dataset with 10 leaves. Training loss: 0.42838, $R^2: 82.28\%$ }
    \label{fig:osrt_tree3}
\end{figure}


\begin{figure}
    \centering
    \begin{forest} [ {$pgain = 5$}   [ $\mathbf{0.529}$ ,edge label={node[midway,left,font=\scriptsize]{True}} ]  [ {$pgain = 6$} ,edge label={node[midway,right,font=\scriptsize]{False}}  [ $\mathbf{0.515}$  ]  [ {$pgain = 4$}   [ {$vgain = 3$} [$\mathbf{0.933}$] [$\mathbf{0.525}$]  ]  [ {$motor = E$}  [ {$screw = E$}   [ $\mathbf{0.9}$  ]  [ {$screw = D$}   [ $\mathbf{1.1}$  ]   [ $\mathbf{3.333}$  ]  ] ] [ {$motor = D$}  [ $\mathbf{1.40}$ ] [ {$motor = C$}   [ $\mathbf{3.28}$  ]   [ $\mathbf{4.51}$  ]  ] ] ] ] ] ] \end{forest}
    \caption{Sub-optimal tree produced by IAI and Evtree for \textbf{servo} dataset with 10 leaves. Training loss: $0.45055, R^2: 81.36\%$ }
    \label{fig:iai_evtree_tree}
\end{figure}


\clearpage

\end{document}